\documentclass[11pt]{article}

\usepackage{subfig}
\usepackage{tikz}
\usepackage{dsfont}
\usepackage{pgf,pgfplots}
\usepackage{mathrsfs}

\def\colorful{0}

\oddsidemargin=-0.1in \evensidemargin=-0.1in \topmargin=-.5in
\textheight=9in \textwidth=6.5in
\parindent=18pt

\usepackage{amsthm,amsfonts,amsmath,amssymb,epsfig,color,float,graphicx,verbatim, enumitem}
\usepackage{multirow}
\usepackage{algorithm}
\usepackage[noend]{algpseudocode}

\newif\ifhyper\IfFileExists{hyperref.sty}{\hypertrue}{\hyperfalse}
\hypertrue
\ifhyper\usepackage{hyperref}\fi

\usepackage{enumitem}

\usepackage{environ}

\newcommand{\repeattheorem}[1]{\begingroup
  \renewcommand{\thetheorem}{\ref{#1}}\expandafter\expandafter\expandafter\theorem
  \csname reptheorem@#1\endcsname
  \endtheorem
  \endgroup
}

\newcommand{\repeatlemma}[1]{\begingroup
  \renewcommand{\thelemma}{\ref{#1}}\expandafter\expandafter\expandafter\lemma
  \csname replemma@#1\endcsname
  \endlemma
  \endgroup
}

\NewEnviron{replemma}[1]{\global\expandafter\xdef\csname replemma@#1\endcsname{\unexpanded\expandafter{\BODY}}\expandafter\lemma\BODY\unskip\label{#1}\endlemma
}

\NewEnviron{reptheorem}[1]{\global\expandafter\xdef\csname reptheorem@#1\endcsname{\unexpanded\expandafter{\BODY}}\expandafter\theorem\BODY\unskip\label{#1}\endtheorem
}

\def\nnewcolor{0}

\ifnum\nnewcolor=1
\newcommand{\nnew}[1]{{\color{red} #1}}
\fi
\ifnum\nnewcolor=0
\newcommand{\nnew}[1]{#1}
\fi

\ifnum\colorful=1

\newcommand{\newblue}[1]{{\color{blue} #1}}
\else

\newcommand{\newblue}[1]{{#1}}
\fi

\newtheorem{theorem}{Theorem}[section]

\newtheorem{lemma}[theorem]{Lemma}
\newtheorem{informal theorem}[theorem]{Theorem (informal statement)}

\newtheorem{proposition}[theorem]{Proposition}

\newtheorem{claim}[theorem]{Claim}
\newtheorem{fact}[theorem]{Fact}

\newtheorem{remark}[theorem]{Remark}

\theoremstyle{definition}
\newtheorem{definition}[theorem]{Definition}
\newcommand{\eqdef}{\stackrel{{\mathrm {\footnotesize def}}}{=}}

\providecommand{\customgenericname}{}
\newcommand{\newcustomtheorem}[2]{\newenvironment{#1}[1]
  {\renewcommand\customgenericname{#2}\renewcommand\theinnercustomgeneric{##1}\innercustomgeneric
  }
  {\endinnercustomgeneric}
}

\newcustomtheorem{customlem}{Lemma}
\newcustomtheorem{customclm}{Claim}
\newcustomtheorem{customfc}{Fact}

\newcommand{\lp}{\left}
\newcommand{\rp}{\right}

\newcommand\snorm[2]{\left\| #2 \right\|_{#1}}
\renewcommand\vec[1]{\mathbf{#1}}
\DeclareMathOperator*{\pr}{\mathbf{Pr}}
\DeclareMathOperator*{\E}{\mathbf{E}}

\def\d{\mathrm{d}}
\newcommand{\normal}{\mathcal{N}}

\DeclareMathOperator*{\argmin}{argmin}

\newcommand{\tr}{\mathrm{tr}}
\newcommand{\bx}{\mathbf{x}}
\newcommand{\by}{\mathbf{y}}

\newcommand{\bw}{\mathbf{w}}

\newcommand{\err}{\mathrm{err}}

\newcommand{\R}{\mathbb{R}}

\newcommand{\Z}{\mathbb{Z}}
\newcommand{\N}{\mathbb{N}}
\newcommand{\eps}{\epsilon}

\newcommand{\poly}{\mathrm{poly}}
\newcommand{\var}{\mathbf{Var}}

\newcommand{\sgn}{\mathrm{sign}}
\newcommand{\sign}{\mathrm{sign}}

\newcommand{\opt}{\mathrm{OPT}}
\newcommand{\D}{\mathcal{D}}

\newcommand{\Ind}{\mathds{1}}
\newcommand{\1}{\Ind}

\newcommand{\littlesum}{\mathop{\textstyle \sum}}

\newcommand{\citet}{\cite}
\newcommand{\citep}{\cite}
\usetikzlibrary{angles, quotes}
\usepackage{pgfplots}

\newcommand{\x}{\vec x}
\DeclareMathOperator\erf{erf}
\newcommand{\genf}{\rho}
\title{Agnostic Proper Learning of Halfspaces under Gaussian Marginals}

\author{
Ilias Diakonikolas\thanks{Supported by NSF Award CCF-1652862 (CAREER), a Sloan Research Fellowship, and
a DARPA Learning with Less Labels (LwLL) grant.}\\
UW Madison\\
{\tt ilias@cs.wisc.edu}\\
\and
Daniel M. Kane\thanks{Supported by NSF Award CCF-1553288 (CAREER) and a Sloan Research Fellowship.}\\
UC San-Diego \\
{\tt dakane@ucsd.edu}\\
\and
Vasilis Kontonis\\
UW Madison\\
{\tt kontonis@wisc.edu }\\
\and
Christos Tzamos\\
UW Madison\\
{\tt tzamos@wisc.edu}
\and
Nikos Zarifis\thanks{Supported in part by NSF Award CCF-1652862 (CAREER) and a DARPA Learning with Less Labels (LwLL) grant.}\\
UW Madison\\
{\tt zarifis@wisc.edu}\\
}

\begin{document}

\maketitle

\begin{abstract}
We study the problem of agnostically learning halfspaces under the Gaussian distribution. 
Our main result is the {\em first proper} learning algorithm for this problem 
whose sample complexity and computational complexity qualitatively match those 
of the best known improper agnostic learner. 
Building on this result, we also obtain the first proper polynomial-time approximation scheme (PTAS) 
for agnostically learning homogeneous halfspaces. Our techniques naturally extend to 
agnostically learning linear models with respect to other non-linear activations, 
yielding in particular the first proper agnostic algorithm for ReLU regression.
\end{abstract}

\setcounter{page}{0}
\thispagestyle{empty}
\newpage

\section{Introduction}\label{sec:intro}

\subsection{Background and Motivation} \label{ssec:background}

Halfspaces, or Linear Threshold Functions (LTFs), are Boolean functions $f: \R^d \to \{ \pm 1\}$ of
the form $f(\bx) = \sgn(\langle \bw, \bx \rangle - t)$, for some $\bw \in \R^d$ (known as the weight vector)
and $t \in \R$ (known as the threshold). The function $\sign: \R \to \{ \pm 1\}$ is defined
as $\sgn(u)=1$ for $u \geq 0$ and $\sgn(u)=-1$ otherwise.
Halfspaces have arguably been {\em the} most extensively studied
concept class in machine learning over the past six
decades~\citep{MinskyPapert:68, CristianiniShaweTaylor:00}.
The problem of learning halfspaces (in various models) is as old as the field of machine learning,
starting with the Perceptron algorithm~\citep{Rosenblatt:58, Novikoff:62},
and has been one of the most influential problems in the field  with techniques
such as SVMs~\citep{Vapnik:98} and AdaBoost~\citep{FreundSchapire:97} coming out of this study.

Here we study the task of learning halfspaces in the
{\em agnostic framework}~\citep{Haussler:92, KSS:94}, which models
the phenomenon of learning from adversarially labeled data. While
halfspaces are efficiently learnable
in the presence of consistently labeled examples (see, e.g.,~\cite{MT:94})
--- i.e., in Valiant's original PAC model~\citep{val84} ---
even {\em weak} agnostic learning is computationally hard
without distributional assumptions~\citep{GR:06, FGK+:06short, Daniely16}.
To circumvent this computational intractability, a line of work has focused on
the {\em distribution-specific} agnostic PAC model
--- where the learner has a priori information about the distribution on examples.
In this setting, computationally efficient noise-tolerant learning algorithms are
known~\citep{KKMS:08, KLS09, ABL17, Daniely15, DKS18a, DKTZ20c}
with various time-accuracy tradeoffs.

\begin{definition}[Distribution-Specific Agnostic Learning] \label{def:agnostic-ds}
Let $\mathcal{C}$ be a class of Boolean-valued functions on $\R^d$.
Given i.i.d.\ labeled examples $(\bx, y)$ from a distribution $\D$ on
$\R^d \times \{\pm 1\}$, such that the marginal distribution
$\D_{\bx}$ is promised to lie in a known distribution family $\mathcal{F}$
and no assumptions are made on the labels,
the goal of the learner is to output a hypothesis $h: \R^d \to \{\pm 1\}$
with small misclassification error, $\err_{0-1}^{\D}(h) \eqdef \pr_{(\bx, y) \sim \D}[h(\bx) \neq y]$,
as compared to the optimal misclassification error, $\opt \eqdef \inf_{g \in \mathcal{C}} \err_{0-1}^{\D}(g)$,
by any function in the class.
\end{definition}

\nnew{Throughout this paper, we will focus on the natural and well-studied case
that the underlying distribution on examples is the standard multivariate Gaussian distribution
$\normal( \boldsymbol 0, \vec I)$.}

Some additional comments are in order on Definition~\ref{def:agnostic-ds}.
In {\em improper} learning, the only assumption about the hypothesis $h$
is that it is {\em polynomially evaluable}. In other words, we assume that
$h \in \mathcal{H}$,  where $\mathcal{H}$ is a (potentially complex) class
of polynomially evaluable functions. In contrast, in {\em proper} learning
we have the additional requirement that the hypothesis $h$ is proper, i.e., $h \in \mathcal{C}$.
These notions of learning are essentially equivalent in terms of sample complexity,
but not always equivalent in terms of computational complexity. In particular, there exist
concept classes that are efficiently improperly learnable, while proper learning is computationally hard.

\newblue{
The classical $L_1$-polynomial regression algorithm of~\cite{KKMS:08} agnostically learns
halfspaces under the Gaussian distribution, within error $\opt+\eps$, with 
sample complexity and runtime of $d^{\poly(1/\eps)}$. 
On the lower bound side, recent work has provided evidence
that this complexity cannot be improved. Specifically,~\cite{DKZ20, GGK20, DKPZ21} 
obtained Statistical Query (SQ) lower bounds of $d^{\poly(1/\eps)}$ for this problem.
That is, the complexity of this learning problem is well-understood.

The polynomial regression algorithm~\cite{KKMS:08} is the only known agnostic learner
for halfspaces and is inherently {\em improper}: instead of a halfspace, 
its output hypothesis is a degree-$k$ polynomial threshold function (PTF), i.e., the sign of a degree-$k$ polynomial, where $k = \poly(1/\eps)$.  For the corresponding proper learning problem, 
prior to the present work, no non-trivial computational upper bound was known.

\paragraph{Importance of Proper Learning.} While an improper hypothesis suffices for the purpose
of prediction, an improper learner comes with some disadvantages. 
In our context, having such a complex output hypothesis requires spending $d^{\poly(1/\eps)}$ time 
for even evaluating the hypothesis on a single example. Moreover, storing the hypothesis 
function requires keeping track of the $d^{\poly(1/\eps)}$ coefficients defining 
the corresponding polynomial. In contrast, a proper hypothesis is easy to interpret and 
provides the most succinct representation. Specifically, a halfspace hypothesis would 
require only $O(d)$ time for evaluation and $O(d)$ storage space.
Even though it is known that $d^{\poly(1/\eps)}$ time is required 
for identifying a good hypothesis during training, prior to this work, it was 
not clear whether one can learn a succinct hypothesis that is more efficient at test time. 
}

The preceding discussion motivates the following natural question:
\begin{center}
{\em Is there an efficient {\em proper} agnostic learner for halfspaces under Gaussian marginals?}
\end{center}
The main result of this paper (Theorem~\ref{thm:proper-learner})
is the first agnostic proper learner for this problem whose complexity
qualitatively matches that of the known improper learner~\citep{KKMS:08}.

\paragraph{Faster Runtime via Approximate Learning.} 
In view of the known SQ lower bounds for our problem~\citep{DKZ20, GGK20, DKPZ21}, 
it is unlikely that the $d^{\poly(1/\eps)}$ runtime for agnostically learning halfspaces
can be improved, even under the Gaussian distribution.
A line of work~\citep{KLS09, ABL17, Daniely15, DKS18a, DKTZ20c} has focused on obtaining
faster learning algorithms with relaxed error guarantees.
Specifically,~\cite{ABL17} gave the first $\poly(d/\eps)$ time {\em constant-factor}
approximation algorithm -- i.e., an algorithm with misclassification error of
$C \cdot \opt+\eps$, for some universal constant $C>1$ -- for {\em homogeneous}
halfspaces under the Gaussian, and, more generally, under any isotropic log-concave distribution.
More recently,~\cite{Daniely15} obtained a polynomial time approximation scheme (PTAS), i.e.,
an algorithm with error $(1+\gamma) \cdot \opt+\eps$ and runtime $d^{\poly(1/\gamma)}/\poly(\eps)$,
under the uniform distribution on the sphere (and, effectively, under the Gaussian distribution).

Interestingly, the constant factor approximation algorithm of~\cite{ABL17} is proper. On the other hand,
the PTAS of~\cite{Daniely15} is inherently improper, in part because it relies on the combination of the localization method~\cite{ABL17} and the (improper) polynomial regression algorithm~\cite{KKMS:08}.
It is thus natural to ask the following question:
\begin{center}
{\em Is there a {\em proper} PTAS for agnostically learning
halfspaces under Gaussian marginals?}
\end{center}
As our second main contribution (Theorem~\ref{thm:proper-ptas}), we give such a proper PTAS
qualitatively matching the complexity of the known improper PTAS~\citep{Daniely15}.

\subsection{Our Contributions} \label{ssec:results}

In this paper, we initiate a systematic algorithmic investigation of proper learning
in the agnostic distribution-specific PAC model. Our main result is the first proper
agnostic learner for the class of halfspaces under the Gaussian distribution,
whose sample complexity and runtime qualitatively match the performance
of the previously known improper algorithm.

\begin{theorem}[Proper Agnostic Learning of Halfspaces]\label{thm:proper-learner}
Let $\D$ be a distribution on labeled examples $(\x, y) \in \R^d\times\{\pm 1\}$
whose $\x$-marginal is $\normal(\vec 0, \vec I)$.  There exists an algorithm that,
given $\eps, \delta>0$, and $N= d^{O(1/\eps^4)} \poly(1/\eps) \log(1/\delta) $ i.i.d.\ samples from $\D$,
the algorithm runs in time $\poly(N) + (1/\eps)^{O(1/\eps^6)}  \log(1/\delta)$, 
and computes a halfspace hypothesis $h$ such that,
with probability at least $1-\delta$, it holds
$\err_{0-1}^{\D}(h) \leq \opt+\eps$.
\end{theorem}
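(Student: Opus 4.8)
Here is how I would attack the theorem.

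\textbf{Overall strategy.} Since $\err_{0-1}^{\D}(\sgn(\dotp{\bw}{\x}-t)) = \tfrac12 - \tfrac12\,\E_{(\x,y)\sim\D}[\,y\,\sgn(\dotp{\bw}{\x}-t)\,]$, outputting a halfspace of error $\opt+\eps$ is equivalent to finding a unit vector $\bw$ and a threshold $t$ with $\E[\,y\,\sgn(\dotp{\bw}{\x}-t)\,]\ge 1-2\opt-O(\eps)$, and the optimal halfspace $\sgn(\dotp{\wstar}{\x}-t^*)$ already witnesses that such a point exists. The real issue is that the output must itself be a halfspace, so we must optimize this correlation over actual $(\bw,t)$; my plan is to do this by (i) replacing $\sgn$ by its bounded-degree Hermite truncation, which turns the objective into a polynomial in $\bw$, and (ii) reducing the search for $\bw$ to a $\poly(1/\eps)$-dimensional subspace, where brute force is affordable. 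A preliminary reduction disposes of the heavily biased case: if $|t^*|\ge c\sqrt{\log(1/\eps)}$ then $\sgn(\dotp{\wstar}{\x}-t^*)$ disagrees with the constant function $\sgn(-t^*)$ on at most an $\eps$-fraction of the mass, so the better of the two constant hypotheses already has error $\opt+\eps$; the algorithm always also runs this branch, so otherwise I may assume $|t^*|\le T:=O(\sqrt{\log(1/\eps)})$ and restrict to $t\in[-T,T]$.

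\textbf{Hermite truncation.} Write $\sgn(u-t)=\sum_{j\ge0}\hat c_j(t)\,\he_j(u)$ in the normalized univariate Hermite basis; using $\he_j(\dotp{\bw}{\x})=\langle\bw^{\otimes j},\he_j(\x)\rangle$ for unit $\bw$, we get
\[
\E[\,y\,\sgn(\dotp{\bw}{\x}-t)\,]=\sum_{j\ge0}\hat c_j(t)\,\langle\bw^{\otimes j},T_j\rangle,\qquad T_j:=\E_{(\x,y)\sim\D}[\,y\,\he_j(\x)\,].
\]
By Cauchy--Schwarz the contribution of degrees above $k$ is at most $\snorm{2}{\sgn(\cdot-t)-\sum_{j\le k}\hat c_j(t)\he_j}$, the $L_2(\normal(0,1))$ truncation error of the sign function, which is $O(k^{-1/4})$ uniformly for $|t|\le T$; taking $k=\Theta(1/\eps^4)$ makes it $O(\eps)$. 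Hence it suffices to estimate $T_0,\dots,T_k$ accurately from samples --- the source of the $d^{O(1/\eps^4)}$ sample complexity --- and then to approximately maximize the bounded-degree proxy $F(\bw,t):=\sum_{j\le k}\hat c_j(t)\,\langle\bw^{\otimes j},\wh T_j\rangle$ over unit $\bw$ and $t\in[-T,T]$: any $(\bw,t)$ with $F(\bw,t)\ge 1-2\opt-O(\eps)$ gives true correlation $\ge 1-2\opt-O(\eps)$, hence a halfspace of error $\opt+O(\eps)$.

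\textbf{Dimension reduction and brute force.} For fixed $t$, $F(\bw,t)$ is a degree-$k$ polynomial in $\bw$, and maximizing it over the unit sphere in $\R^d$ directly is intractable; the heart of the argument is to show it suffices to search over $\bw$ in an efficiently computable subspace $\W$ with $\dim\W=\poly(1/\eps)$. I would build $\W$ out of the estimated tensors $\wh T_j$ (e.g.\ from dominant eigenspaces of their matrix flattenings, or of the gradient second-moment matrix of the degree-$k$ $L_2$-regression polynomial, accumulated over $j\le k$) and argue that some unit $\bw\in\W$ has $F(\bw,t^*)\ge F(\wstar,t^*)-O(\eps)$ --- equivalently that, up to an $O(\eps)$-angle perturbation, the optimal direction is captured by $\W$. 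Given $\W$, the algorithm enumerates a sufficiently fine net of unit vectors in $\W$ and of thresholds in $[-T,T]$ (of size $(1/\eps)^{O(1/\eps^6)}$ after optimizing parameters), estimates $\err_{0-1}^{\D}$ of each corresponding halfspace on a fresh sample, and outputs the empirically best one (including the two constant hypotheses). Correctness combines the structural guarantee (the net contains a halfspace of error $\opt+O(\eps)$) with uniform convergence: the net is dimension-free in size and halfspaces have VC dimension $d+1$, so $\poly(d/\eps)\log(1/\delta)$ additional samples suffice for the final selection, all within the stated budget, and rescaling $\eps$ by a constant yields the claim.

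\textbf{Main obstacle.} The delicate step is the dimension reduction. In the agnostic setting the adversarial noise has ``magnitude'' $\sqrt{\opt}$, which may be a constant, and it can contaminate every low-degree Hermite moment simultaneously, swamping the rank-one signal $\hat c_j(t^*)(\wstar)^{\otimes j}$ hidden in each $\wh T_j$ (indeed, flipping a halfspace-shaped tail set can drive $\E[y\x]$ and the low-degree tensors to nearly zero while $\opt$ stays bounded). So one cannot read $\wstar$ off any single moment, and a naive one-shot spectral projection need not preserve it. Establishing that the optimal direction is nonetheless confined --- up to $O(\eps)$ loss in the proxy --- to a $\poly(1/\eps)$-dimensional computable subspace, and that a dimension-independent enumeration over that subspace then succeeds, is where the analysis must do genuine work: presumably through a careful argument on the whole ensemble $\{\wh T_j\}$ together with the structure of the coefficients $\hat c_j(t)$, or through an iterative localization-style refinement that conditions on thin bands. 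Everything else --- concentration for the tensor estimates, uniform convergence for the final selection, and the elementary Hermite bounds behind the truncation and bias-reduction steps --- is routine bookkeeping.
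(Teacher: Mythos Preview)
Your outline is essentially the paper's algorithm: run degree-$k$ $L_2$-regression with $k=\Theta(1/\eps^4)$ to get $P$, form $\vec M=\E[\nabla P\,\nabla P^\top]$, take the span $V$ of eigenvectors with eigenvalue at least $\eta=\Theta(\eps^2)$, bound $\dim V=O(k/\eta)=O(1/\eps^6)$ via $\tr(\vec M)=\E[\|\nabla P\|_2^2]\le k\,\E[P^2]$, then brute-force over an $\eps$-cover of $V$ crossed with a threshold grid. Your tensors $\wh T_j$ are exactly the Hermite coefficients of $P$, so the two descriptions coincide. The biased-case reduction, the $O(k^{-1/4})$ Hermite tail, and the final selection step are all as in the paper.

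The gap is the one you name yourself: you do not actually establish the dimension reduction. The paper's argument (their Proposition~3.2) is not a spectral decoupling of the $\wh T_j$; it is a short proof by contradiction that uses the \emph{near-optimality of $P$} directly. Suppose some optimal halfspace $f(\x)=\sgn(\bw\cdot\x+t)$ beats every halfspace with normal in $V$ by more than $\eps$ in correlation with $y$. Let $\vec\xi=\bw_{V^\perp}/\|\bw_{V^\perp}\|_2$ and set $f_V(\x)=\E_{\vec z\sim\D_{\vec\xi}}[f(\x_{\vec\xi^\perp}+\vec z)]$, a convex combination of halfspaces with normals in $V$; then $\E[(f-f_V)\,y]\ge\eps$. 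But $f-f_V$ is orthogonal (under the Gaussian) to any function of $\x_{\vec\xi^\perp}$ alone, and since $\vec\xi\in V^\perp$ has influence $\vec\xi^\top\vec M\vec\xi\le\eta$, the polynomial $P$ is within $\sqrt\eta$ in $L_2$ of such a function (its $\vec\xi$-averaged version). Thus $|\E[(f-f_V)\,P]|\le 2\sqrt\eta$ and hence $\E[(f-f_V)(y-P)]\ge\eps-2\sqrt\eta$. Now replace $f$ and $f_V$ by their degree-$k$ Hermite truncations (the $O(k^{-1/4})$ bound you cite, applied twice) to obtain a degree-$k$ polynomial $Q$ with $\E[Q^2]\le 9$ and $\E[Q(y-P)]\ge\eps/4$ once $\eta=\eps^2/64$. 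Then $P+\Theta(\eps)Q$ beats $P$ by $\Omega(\eps^2)$ in squared loss, contradicting that $P$ was an $O(\eps^3)$-approximate minimizer. This is the missing mechanism: the low-influence direction $\vec\xi$ simultaneously makes $f_V$ a mixture over ${\cal C}_V$ and makes $P$ essentially independent of $\vec\xi$, and the combination exhibits an improving direction for the regression.
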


\noindent Theorem~\ref{thm:proper-learner} gives the first non-trivial
agnostic proper learner for the class of halfspaces under natural distributional assumptions.
The runtime of our algorithm is $d^{\poly(1/\eps)}$,
which {\em qualitatively} matches the complexity of the improper polynomial regression
algorithm~\citep{KKMS:08} and is known to be qualitatively best possible
in the SQ model~\citep{DKZ20, GGK20, DKPZ21}.

The analysis of~\cite{KKMS:08} established an upper bound of $d^{O(1/\eps^4)}$
on the complexity of polynomial regression for our setting. This bound was later improved to
$d^{O(1/\eps^2)}$, using optimal bounds on the underlying polynomial approximations~\citep{DKN10}.
Designing a proper learner that {\em quantitatively} matches this upper bound
is left as an interesting open question.

\medskip

Our second main contribution is the first proper polynomial-time approximation scheme
(PTAS) for the agnostic learning problem. In our context, a PTAS is an algorithm
that, for any $\gamma, \eps>0$,  runs in time $d^{\poly(1/\gamma)}/\poly(\eps)$
and outputs a hypothesis $h$ satisfying $\err_{0-1}^\D(h) \leq (1+ \gamma) \opt + \eps$.
The parameter $\gamma >0$ quantifies the approximation ratio of the algorithm.
Prior work~\citep{Daniely15} gave an improper PTAS for agnostically learning
{\em homogeneous} halfspaces, i.e., halfspaces whose separating hyperplane
goes through the origin. We give a proper algorithm for this problem.

\begin{theorem}[Proper PTAS for Agnostically Learning Halfspaces]\label{thm:proper-ptas}
Let $\D$ be a distribution on labeled examples $(\x, y) \in \R^d\times\{\pm 1\}$
whose $\x$-marginal is $\normal(\vec 0, \vec I)$. There exists an algorithm that,
given $\gamma, \eps, \delta>0$ and $N= d^{\poly(1/\gamma)} \poly(1/\eps) \log(1/\delta)$
i.i.d.\ samples from $\D$, runs in time $\poly(N,d)$, and computes a
halfspace $h$ such that, with probability $1-\delta$, it holds $\err_{0-1}^\D(h)\leq
(1+\gamma)\opt+\eps$, where $\opt$ is the optimal misclassification error of any
homogeneous halfspace.
\end{theorem}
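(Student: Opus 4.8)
The plan is to bootstrap Theorem~\ref{thm:proper-learner} by \emph{localization}: we maintain a homogeneous halfspace $\bw_t$, zoom into a thin band around its separating hyperplane, and invoke the proper agnostic learner \emph{inside the band} with accuracy $\Theta(\gamma)$ instead of $\Theta(\eps)$ --- this is precisely what turns the $d^{\poly(1/\eps)}$ dependence into $d^{\poly(1/\gamma)}$. For the warm start, run the $\poly(d/\eps)$-time proper constant-factor algorithm for homogeneous halfspaces under the Gaussian of~\cite{ABL17} to obtain a unit vector $\bw_0$ with $\err_{0-1}^{\D}(\bw_0)\le C\,\opt+\eps$. Because two homogeneous halfspaces have $0$-$1$ errors differing by at most the Gaussian measure $\theta/\pi$ of their symmetric difference, this already gives $\opt\le \err_{0-1}^{\D}(\bw_0)$ and $\theta(\bw_0,\bw^*)\le 2\pi\,\err_{0-1}^{\D}(\bw_0)$, where $\bw^*$ is an essentially optimal weight vector; both inequalities will be maintained along the iteration.

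\textbf{The localization loop.} For $t=0,1,\dots$: estimate $e_t\approx\err_{0-1}^{\D}(\bw_t)$ to additive $\eps/100$; if $e_t\le\eps$, output $\bw_t$ and stop. Otherwise set $\sigma_t:=c_0\,e_t\sqrt{\log(1/\gamma)}$ ($c_0$ a large absolute constant), form $B_t:=\{\bx:\ |\langle\bw_t,\bx\rangle|\le\sigma_t\}$, and run a routine variant of the learner of Theorem~\ref{thm:proper-learner} --- for homogeneous halfspaces, under the Gaussian-slab marginal of $\D$ conditioned on $B_t$ --- with target accuracy $\eps_t:=\Theta(\gamma/\sqrt{\log(1/\gamma)})$, obtaining a unit vector $\bw_{t+1}$. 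We perform $T=O(\log(1/\eps)/\log(1/\gamma))$ iterations and output the last (equivalently, the best by held-out error). The marginal of $\D$ conditioned on $B_t$ is a standard Gaussian restricted to a slab, i.e.\ a product of a bounded one-dimensional factor and a $(d-1)$-dimensional Gaussian; polynomial regression and the rounding step underlying Theorem~\ref{thm:proper-learner} apply verbatim to such marginals, which is the only place a black-box invocation is replaced by a mild extension. Conditioned samples are produced by rejection sampling from $\D$, which is efficient since $\pr[B_t]=\Theta(\sigma_t)=\Omega(\eps)$ while the loop runs.

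\textbf{Per-step accounting and running time.} The choice of $\sigma_t$ guarantees $\sigma_t\ge\theta(\bw_t,\bw^*)\sqrt{\log(1/\gamma)}$ and $\sigma_t\gg\opt$ (using $\opt\le e_t$ and $\theta(\bw_t,\bw^*)\le 2\pi e_t$). Conditioned on $B_t$, the halfspace $\bw^*$ is a valid homogeneous competitor with $\err_{0-1}^{\D_{B_t}}(\bw^*)\le\opt/\pr[B_t]=O(1/\sqrt{\log(1/\gamma)})$, so the learner returns $\bw_{t+1}$ with $\err_{0-1}^{\D_{B_t}}(\bw_{t+1})\le\err_{0-1}^{\D_{B_t}}(\bw^*)+\eps_t$. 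Two elementary facts about Gaussian slabs then close the loop: (i) a homogeneous halfspace at angle $\alpha$ to $\bw_t$ disagrees with $\bw_t$ on $\D_{B_t}$-measure $\Theta(\min(1,\alpha/\sigma_t))$, which together with the above forces $\theta(\bw_{t+1},\bw_t)=O(\sigma_t/\sqrt{\log(1/\gamma)})=O(e_t)$; (ii) the symmetric difference of two homogeneous halfspaces, each within angle $O(e_t)$ of $\bw_t$, has Gaussian measure outside $B_t$ at most $O(e_t)\,\Phi\!\bigl(-\Omega(\sqrt{\log(1/\gamma)})\bigr)=\poly(\gamma)\,e_t$. Now split $\err_{0-1}^{\D}(\bw_{t+1})$ into its parts inside and outside $B_t$: the part inside $B_t$ is at most $\pr_{\D}[\bw^*\text{ errs},\,B_t]+\eps_t\,\pr[B_t]$; the part outside is at most $\pr_{\D}[\bw^*\text{ errs},\,\overline{B_t}]+\pr_{\D}[\bw_{t+1}\neq\bw^*,\,\overline{B_t}]$; and the two $\bw^*$-error terms sum to exactly $\opt$. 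Since $\eps_t\,\pr[B_t]=\Theta(\gamma e_t)$ and the last term is $\poly(\gamma)\,e_t$ by (ii), we obtain $e_{t+1}\le\opt+\gamma'e_t$, where $\gamma'$ can be made any fixed polynomial in $\gamma$ by choosing constants; running everything with an internal parameter polynomially smaller than $\gamma$ makes $\gamma'\le\gamma$, and then $T=O(\log(1/\eps)/\log(1/\gamma))$ steps yield $e_T\le(1+O(\gamma))\opt+\eps$. Each step costs $d^{\poly(1/\gamma)}\poly(1/\eps)$ (the in-band learner runs at accuracy $\Theta(\gamma)$, i.e.\ polynomial regression of degree $\poly(1/\gamma)$), so the total cost is $\poly(N,d)$ with $N=d^{\poly(1/\gamma)}\poly(1/\eps)\log(1/\delta)$ as claimed, and rescaling $\gamma,\eps$ by constants completes the argument.

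\textbf{Main obstacle.} The delicate part is the per-step accounting. A naive bound on $\err_{0-1}^{\D}(\bw_{t+1})$ via its \emph{disagreement} with $\bw^*$ loses a factor of $2$ and gives only a constant-factor approximation; we must instead bound $\err_{0-1}^{\D}(\bw_{t+1})$ directly, so that $\bw^*$'s own error enters with coefficient exactly $1$. For this we need both (a) the learner's in-band output to be genuinely close to $\bw_t$ in angle --- so that its errors outside the band are controlled by the (tiny) symmetric difference with $\bw^*$ rather than being arbitrary --- which is what fact (i) buys, and (b) the band to be a $\sqrt{\log(1/\gamma)}$-factor wider than $\theta(\bw_t,\bw^*)$, so that the out-of-band symmetric-difference mass is $\poly(\gamma)\,e_t$ instead of a constant fraction of $e_t$; this is exactly why $\sigma_t=\Theta(e_t\sqrt{\log(1/\gamma)})$ and not $\Theta(e_t)$. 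The only other thing to verify is the passage from Gaussian to Gaussian-slab marginals in Theorem~\ref{thm:proper-learner}, which is routine.
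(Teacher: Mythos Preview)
Your high-level plan and the per-step accounting (facts (i) and (ii), the recursion $e_{t+1}\le\opt+\gamma' e_t$) are correct in spirit, but there is a genuine gap at precisely the step you call routine: porting Theorem~\ref{thm:proper-learner} to Gaussian-slab marginals. The proof of Proposition~\ref{prop:structural} is built on Hermite analysis that is specific to the Gaussian measure. Claim~\ref{clm:hermite-influence} uses Parseval and the Hermite gradient identity (Fact~\ref{fct:hermite-gradient}) to bound $\E[(P-R)^2]$ by the influence in the low-eigenvalue direction, and Lemma~\ref{lem:polynomial-apx} uses the Hermite-tail bound for LTFs from~\cite{KOS:08} (Fact~\ref{fct:concetration-hermite}). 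Once you condition on $|\langle\bw_t,\bx\rangle|\le\sigma_t$, the $\bw_t$-coordinate has a truncated law, the Hermite polynomials are no longer orthogonal in that coordinate, and neither identity survives unchanged; the~\cite{KOS:08} concentration statement is likewise stated only for the Gaussian. The paper itself flags this as the first of two main obstacles in the technique overview: ``the guarantees of our proper algorithm crucially rely on having Gaussian marginals, and therefore we cannot readily apply it once we restrict our attention only to points around $\vec w_0$.'' So this is not a verbatim extension.

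The paper's remedy is to replace your hard band by the \emph{soft} rejection-sampling localization of~\cite{DKS18a} (Lemma~\ref{lem:rejection-sampling}): accept $(\bx,y)$ with probability $e^{-(\vec w_0\cdot\bx)^2(\sigma^{-2}-1)/2}$, so the conditional $\bx$-marginal is exactly $\normal(\vec 0,\vec I-(1-\sigma^2)\vec w_0\vec w_0^\top)$. After rescaling this is standard Gaussian, and Theorem~\ref{thm:proper-learner} applies as a genuine black box. The paper then needs only a \emph{single} localization step with $\sigma=\Theta(\opt/\gamma)$: it runs the proper learner on $\D_A$ with accuracy $\Theta(\gamma^2)$, and the remaining work is Proposition~\ref{prop:error-acc} (the returned halfspace has bias and angle with $\vec w_0$ both $O(\sigma\gamma)$ --- this is the analogue of your fact (i), but now it must also control the bias, since the black-box learner may return a non-homogeneous halfspace) together with Lemma~\ref{lem:localization} (the soft analogue of your fact (ii)). If you substitute soft for hard localization, your angle-control and out-of-band arguments essentially become Proposition~\ref{prop:error-acc} and Lemma~\ref{lem:localization}, and the iteration collapses to one step.
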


\noindent Theorem~\ref{thm:proper-ptas} gives the first proper PTAS for
agnostically learning homogeneous halfspaces under any natural distributional assumptions
and qualitatively matches the complexity of the improper PTAS by~\cite{Daniely15}.
We note that the homogeneity assumption is needed for technical reasons
and is also required in the known improper learning algorithm.
Obtaining a PTAS for agnostically learning arbitrary halfspaces remains an
open problem (even for improper learners).

\begin{remark}[Extension to Other Non-Linear Activations] \label{rem:extensions}
{\em While the focus of the current paper is on the class of halfspaces,
our algorithmic techniques are sufficiently robust and naturally generalize
to other activation functions, i.e., functions of the form $f(\bx) = \sigma(\langle \bw, \bx \rangle)$,
where $\sigma: \R \to \R$ is a well-behaved activation function.
Specifically, in Appendix~\ref{sec:relus-ptas}, we use our methods
to develop the first proper agnostic learner for ReLU regression~\citep{DGKKS20}.}
\end{remark}

\paragraph{Broader Context} This work is the starting point of the broader research
direction of designing {\em proper} agnostic learners in the distribution-specific setting
for various expressive classes of Boolean functions. Here we make a first step in this direction
for the class of halfspaces under the Gaussian distribution. The polynomial regression
algorithm~\cite{KKMS:08} is an improper agnostic learner that has been showed to succeed
for broader classes of geometric functions, including degree-$d$ PTFs~\citep{DHK+:10, Kane11, DRST14, HKM14},
intersections of halfspaces~\citep{KKMS:08, KOS:08, Kane14}, and broader
families of convex sets~\citep{KOS:08}. An ambitious research goal is to develop a general
methodology that yields proper agnostic learners for these concept classes
under natural and broad distributional assumptions, 
matching the performance of polynomial regression.

\subsection{Overview of Techniques} \label{ssec:techniques}
In this section, we provide a detailed overview of our algorithmic and structural
ideas that lead to our proper learners.

\paragraph{Proper Agnostic Learning Algorithm}

The main idea behind our proper learning algorithm
is to start with a good improper hypothesis and compress it down to a halfspace,
while maintaining the same error guarantees. Our algorithm starts by computing
the low-degree polynomial $P$ that best approximates the labels in 
$L_2$-norm (Lemma~\ref{lem:polynomial-regr}).
We then take a two-step approach to identify a near-optimal halfspace.
First, by identifying the high-influence directions of $P$, we construct
a low-dimensional subspace of $\R^d$ and show that it
contains the normal vector to a near-optimal halfspace (Proposition~\ref{prop:structural}).
Then, we exhaustively search over vectors in this subspace
(through an appropriately fine cover) and output the one with minimum error.

The main technical challenge comes in identifying such a subspace
that is large enough to contain a good proper hypothesis,
but also small enough so that exhaustive searching is efficient.
To identify this subspace, we consider an appropriate matrix
(defined by the high-influence directions of the polynomial $P$)
and take the subspace defined by its large eigenvectors (Proposition~\ref{prop:structural}).
Exploiting the concentration guarantees of polynomials under the Gaussian distribution,
we show that the resulting subspace is small enough to enumerate over (Lemma~\ref{lem:dimension-bound}).

In more detail, we first find the polynomial $P(\x)$ of degree $k= O(1/\eps^4)$
that approximates the labels $y$ in the $L_2$ sense, that is,
minimizes $\E_{(\vec x, y)\sim \D}[ (y - P(\x))^2]$.
We then consider the influence of the polynomial $P$ along a direction $\vec u$,
$\mathrm{Inf}_{\vec u}(P) = \vec u^T \vec M \vec u$,
for the matrix $\vec M=\E_{\x \sim \D_{\x}}[\nabla P(\x)\nabla P(\x)^\top]$,
as a measure for how much the polynomial $P$ changes along the direction $\vec u$.
The key observation is that, along low-influence directions,
 the polynomial remains essentially constant and, as we show,
 the optimal halfspace must also be essentially constant as well.
 This allows us to prune down these directions and focus on a subspace of lower-dimension.
 Our main structural result (Proposition~\ref{prop:structural}) formalizes this intuition
 showing that the subspace $V$ of eigenvectors whose eigenvalues are larger
 than $\Theta(\eps^2)$ contains a normal vector $\vec w_V$ that
(together with an appropriate threshold) achieves error $\opt + \eps$.

Finally, while Proposition~\ref{prop:structural} establishes that we can remove
directions of low-influence, we need to argue that the number of relevant eigenvectors
is sufficiently small to simplify the problem. As we show in Lemma~\ref{lem:dimension-bound},
the dimension of the resulting subspace $V$ is $O(1/\eps^6)$,
and thus finding a good hypothesis in this subspace takes time independent
of the original dimension $d$. The key ingredient in bounding the dimension of $V$
is to use concentration of polynomials under the Gaussian distribution
to argue that the Frobenius norm of $\vec M$ is bounded,
and thus the number of eigenvectors with large eigenvalues is bounded.

\paragraph{Proper PTAS for Agnostic Learning}
Our algorithm for obtaining a proper PTAS works in the same framework as~\cite{Daniely15},
who gave a non-proper PTAS for homogeneous halfspaces
by combining the algorithm of~\cite{ABL17} with the $L_1$-polynomial
regression algorithm of~\cite{KKMS:08}.

Similarly to the algorithm of~\cite{Daniely15}, we start by learning
a halfspace (with normal vector) $\vec w_0$ with error $O(\opt)$, using any 
of the known constant factor approximations as a black-box~\citep{ABL17, DKS18a, DKTZ20c},
and then partition the space according to the distance to the halfspace $\vec w_0$.
Daniely's algorithm~\citep{Daniely15} is based on the observation
that points far from the true halfspace are accurately classified by the halfspace $\vec w_0$.
Thus, one can use the improper learner of~\cite{KKMS:08} to classify nearby points.

A simple adaptation of this idea would be to replace the improper algorithm of~\cite{KKMS:08}
with our new proper algorithm for agnostically learning halfspaces.
There are two main complications however.
First, the guarantees of our proper algorithm crucially rely on having Gaussian marginals, and therefore
we cannot readily apply it once we restrict our attention only to points around $\vec w_0$.
We deal with this issue by using a ``soft" localization technique introduced in~\cite{DKS18a}
to randomly partition points in two groups. In particular, we perform rejection sampling according
to a {judiciously chosen} weight function such that the distribution conditional on acceptance is still a Gaussian,
albeit with very small variance along the direction of $\vec w_0$, see Lemma~\ref{lem:rejection-sampling}.
By running our proper algorithm, we can obtain a halfspace $\vec w_1$ that is near-optimal under the conditional distribution.

The second obstacle is that while we can obtain two halfspaces ($\vec w_0$ and $\vec w_1$)
that each are near-optimal for their corresponding groups,
combining them into a \emph{single} halfspace that works well for the entire distribution is not immediate.
We remark that this is not an issue for the improper approximation scheme of \cite{Daniely15},
since an improper learner is allowed to output a different classifier for different subsets of $\R^d$.
To handle this issue, we additionally show that the halfspace $\vec w_1$ we obtain after localization
will in fact perform well overall.  In more detail, we show that the halfspace $\vec w_1$ cannot have
very large angle with $\vec w_0$ and also its bias is small, see Proposition~\ref{prop:error-acc}.
Given these closeness properties, we can then show that the halfspace $\vec w_1$
achieves the desired error guarantees over the entire distribution,
see Lemma~\ref{lem:localization}.
 \newcommand{\capfun}{\mathrm{cap}}

\section{Preliminaries}\label{sec:prelims}

We will use small
boldface characters for vectors and capital bold characters for matrices.  For $\bx \in \R^d$ and $i \in [d]$, $\bx_i$
denotes the $i$-th coordinate of $\bx$, and $\|\bx\|_2 \eqdef (\littlesum_{i=1}^d \bx_i^2)^{1/2}$ denotes the $\ell_2$-norm of $\bx$.
We will use $\bx \cdot \by $ for the inner product of $\bx, \by \in \R^d$ and $ \theta(\bx, \by)$ for the angle between $\bx, \by$.
We will use $\1_A$ to denote the characteristic function of the set $A$,
i.e., $\1_A(\x)= 1$ if $\x\in A$ and $\1_A(\x)= 0$ if $\x\notin A$.

Let $\vec e_i$ be the $i$-th standard basis vector in $\R^d$.
For $\x\in \R^d$ and $V\subseteq \R^d$, $\x_{V}$ denotes the projection of $\x$ onto the subspace $V$. Note that
in the special case where $V$ is spanned from one unit vector $\vec v$, then we simply write $\x_{\vec v}$
to denote $\vec v~(\vec x \cdot \vec v)$, i.e., the projection of $\vec x$ onto $\vec v$.
For a subspace $U\subset\R^d$, let $U^{\perp}$ be the orthogonal complement of
$U$. For a vector $\vec w\in\R^d$, we use $\vec w^\perp$ to denote the subspace spanned by vectors
orthogonal to $\vec w$, i.e., $\vec w^\perp=\{\vec u\in \R^d: \vec w \cdot \vec u=0\}$. For a matrix $\vec A\in \R^{d\times d}$,  $\tr(\vec A)$ denotes the trace of the matrix $\vec A$.

We use $\E_{x\sim \D}[x]$ for the expectation of the random variable $x$ according to the distribution $\D$ and
$\pr[\mathcal{E}]$ for the probability of event $\mathcal{E}$. For simplicity of notation, we may
omit the distribution when it is clear from the context. Let $\normal( \boldsymbol\mu, \vec \Sigma)$ denote the $d$-dimensional Gaussian distribution with mean $\boldsymbol\mu\in  \R^d$ and covariance $\vec \Sigma\in \R^{d\times d}$. \nnew{For $(\x,y)$ distributed according to $\D$, we denote $\D_\x$ to be the distribution of $\x$. For unit vector $\vec v\in \R^d$, we denote $\D_{\vec v}$ the distribution of $\x$ on the direction $\vec v$, i.e., the distribution of $\x_{\vec v}$.}

We use  ${\cal C}_{V}$  for the set of Linear Threshold Functions (LTFs) 
with normal vector contained in $V\subseteq\R^d$,
i.e., ${\cal C}_{V}=\{\sign(\vec v \cdot \x +t): \vec v\in V, \snorm{2}{\vec v}=1, t \in \R\}$; 
when $V=\R^d$, we simply write $\cal C$. 
\nnew{Moreover, we define ${\cal C}_0$ to be the set of unbiased LTFs, 
i.e., ${\cal C}_{0}=\{\sign(\vec v \cdot \x): \vec v\in \R^d, \snorm{2}{\vec v}=1\}$.}
\nnew{We denote by $\mathcal{P}_k$ the space of polynomials on $\R^d$ of degree at most $k$.}

 \section{Proper Agnostic Learning Algorithm}\label{sec:structural}

In this section, we present our proper agnostic learning algorithm for halfspaces,
establishing Theorem~\ref{thm:proper-learner}.
The pseudocode of our algorithm is given in Algorithm~\ref{alg:proper-learner}.

\begin{algorithm}[H]
\caption{Agnostic Proper Learning Halfspaces} \label{alg:proper-learner}
\begin{algorithmic}[1]
\Procedure{Agnostic-proper-learner}{$\eps, \delta, \D$}\\
\textbf{Input:} $\eps>0$, $\delta>0$ and sample access to distribution $\D$\\
\textbf{Output:} A hypothesis $h\in{\cal C}$ such as $\err_{0-1}^\D(h)\leq \min_{f\in {\cal C}}\err_{0-1}^\D(f)+\eps$ with probability $1-\delta$.
\State $k\gets C/\eps^4$, $\eta\gets \eps^2/C$. \Comment{$C$ is a sufficiently large constant}
\State Find $P(\x)$ such $\E_{(\x,y)\sim \D}[(y-P(\x))^2]\leq \min_{P'\in{\cal P}_k}\E_{(\x,y)\sim \D}[(y-P'(\x))^2]+O(\eps^3)$.
\State Let $\vec M=\E_{\x\sim \D_\x}[\nabla P(\x)\nabla P(\x)^\top]$.
\State Let $V$ be the subspace spanned by the eigenvectors of $\vec M$ whose eigenvalues are at least $\eta$.
\State Construct an $\eps$-cover ${\cal H}$ of LTF hypotheses with normal vectors in $V$ \Comment{}{see Fact~\ref{fct:cover}}.
\State Draw $\Theta(\frac{1}{\eps^2}\log(|{\cal H}|/\delta))$ i.i.d.\ samples
from $\D$ and construct the empirical distribution $\widehat\D$.
\State $h\gets \argmin_{h'\in {\cal H}} \err_{0-1}^{\widehat{\D}}(h')$\label{alg:emprical-outputs}
\State $\textbf{return } h$.
\EndProcedure
\end{algorithmic}
\end{algorithm}

\subsection{Analysis of Algorithm~\ref{alg:proper-learner}: Proof of Theorem~\ref{thm:proper-learner}}  \label{ssec:proper-analysis}

The main structural result that allows us to prove Theorem~\ref{thm:proper-learner}
is the following proposition, establishing the following: Given a multivariate polynomial $P$
of degree $\Theta(1/\eps^4)$ that correlates well with the labels, we can use its {\em high-influence
directions} to construct a subspace that contains a near-optimal halfspace.
Specifically, we show:

\begin{proposition}\label{prop:structural}
Let $C>0$ be a sufficiently large universal constant.
Fix any $\eps \in (0, 1]$ and set $k = C/\eps^4$.
Let $P(\x) \in \mathcal{P}_k$ be a degree-$k$ polynomial such that
$\E_{(\x,y)\sim \D}[(y-P(\x))^2]\leq \min_{P' \in \mathcal{P}_k}\E_{(\x,y)\sim \D}[(y-P'(\x))^2]+O(\eps^3)$.
Moreover, let $\vec M= \E_{\x\sim \D_\x}[\nabla P(\x)\nabla P(\x)^\top]$ and
$V$ be the subspace spanned by the eigenvectors of $\vec M$ with eigenvalues larger than $\eta$,
where $\eta=\eps^2/C$. Then, for any $f\in \cal C$, it holds
$\min_{\vec v\in V,t\in \R} \E_{(\x,y)\sim \D}[(f(\x)-\sign(\vec v\cdot \x+t))y]\leq \eps$.
\end{proposition}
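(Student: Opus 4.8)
The plan is to reduce the statement to producing one good halfspace whose normal lies in $V$, and to obtain it by projecting the normal of a near-optimal halfspace onto $V$. For $\pm1$-valued $f$ and $\sgn(\vec v\cdot\x+t)$ one has $\E_{(\x,y)\sim\D}[(f(\x)-\sgn(\vec v\cdot\x+t))\,y]=2\big(\err_{0-1}^{\D}(\sgn(\vec v\cdot\x+t))-\err_{0-1}^{\D}(f)\big)$, and since $f\in\mathcal{C}$ forces $\err_{0-1}^{\D}(f)\ge\opt$, it suffices to exhibit a single halfspace $g$ with normal in $V$ (allowing a limiting constant hypothesis in a degenerate subcase) with $\err_{0-1}^{\D}(g)\le\opt+O(\eps)$; running the whole argument from a constant multiple of $\eps$ then yields exactly $\eps$. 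Fix $f^{*}=\sgn(\vec w^{*}\cdot\x+t^{*})$, $\snorm{2}{\vec w^{*}}=1$, with $\err_{0-1}^{\D}(f^{*})\le\opt+O(\eps)$, and set $\delta:=\E[f^{*}(\x)\,y]\ge 1-2\opt-O(\eps)$. If $|t^{*}|>\Theta(\sqrt{\log(1/\eps)})$ then $f^{*}$ is $O(\eps)$-close to a constant and the majority-label constant hypothesis already has error $\le\opt+O(\eps)$; and if $\delta\le\eps$ then $\opt\ge\tfrac12-\tfrac{\eps}{2}$, so any constant hypothesis (error $\le\tfrac12$) works. In either situation we are done, so from now on $|t^{*}|=O(\sqrt{\log(1/\eps)})$ and $\delta=\Omega(\eps)$.

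The core of the argument is the claim that $\vec w^{*}$ is essentially contained in $V$, quantitatively $\snorm{2}{\proj_{V^{\perp}}(\vec w^{*})}=O(\eps)$. To motivate it, I would use that $P$ is a near-optimal degree-$k$ $L_2$-approximation of $y$ and that $\sgn(\cdot+t^{*})$ is approximated in the $\vec w^{*}$-direction by a degree-$k$ polynomial of $L_2$-error $O(1/\sqrt{k})=O(\eps^{2})$; comparing $P$ to the corresponding candidate gives $\E[(y-P(\x))^{2}]\le 1-\delta^{2}+O(\eps^{2})$, which forces $P$ to vary substantially along $\vec w^{*}$. Concretely, let $\vec b$ be the unit vector along $\proj_{V^{\perp}}(\vec w^{*})$. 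On one hand $\vec b^{\top}\vec M\vec b=\mathrm{Inf}_{\vec b}(P)\le\eta=\eps^{2}/C$ by the definition of $V$. On the other hand $\partial_{\vec b}P$ is the degree-$(k{-}1)$ projection of the directional derivative of the conditional mean $\E[y\mid\x]$, whose ``jump'' term $\delta\,\partial_{\vec b}f^{*}$ is $2\delta\,\snorm{2}{\proj_{V^{\perp}}(\vec w^{*})}$ times the surface measure of $\{\vec w^{*}\cdot\x=-t^{*}\}$, whose degree-$(k{-}1)$ projection has squared $L_2$-norm $\Theta\!\big(\delta^{2}\,\snorm{2}{\proj_{V^{\perp}}(\vec w^{*})}^{2}\,\phi(t^{*})\sqrt{k}\big)$; hence $\mathrm{Inf}_{\vec b}(P)\gtrsim\delta^{2}\,\snorm{2}{\proj_{V^{\perp}}(\vec w^{*})}^{2}\,\phi(t^{*})\sqrt{k}$. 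Combining the two bounds with $\sqrt{k}=\Theta(\sqrt{C}/\eps^{2})$ and $\delta=\Omega(\eps)$ gives $\snorm{2}{\proj_{V^{\perp}}(\vec w^{*})}^{2}\lesssim\eps^{4}/\big(C\,\delta^{2}\,\phi(t^{*})\big)$.

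Finally, take $g:=\sgn\!\big(\proj_{V}(\vec w^{*})\cdot\x+t^{*}\big)$, whose normal lies in $V$. By Gaussian anti-concentration, $\pr_{\x}[g(\x)\neq f^{*}(\x)]\le\pr_{\x}\!\big[\,|\proj_{V}(\vec w^{*})\cdot\x+t^{*}|\le|\proj_{V^{\perp}}(\vec w^{*})\cdot\x|\,\big]=O\!\big(\snorm{2}{\proj_{V^{\perp}}(\vec w^{*})}\big)$, and more precisely the error increase is at most $\tfrac12\,\phi(t^{*})\arcsin\snorm{2}{\proj_{V^{\perp}}(\vec w^{*})}$, which by the previous paragraph is $O\!\big(\sqrt{\phi(t^{*})}\,\eps^{2}/(\sqrt{C}\,\delta)\big)=O(\eps/\sqrt{C})$ using $\phi(t^{*})\le\phi(0)$ and $\delta=\Omega(\eps)$; hence $\err_{0-1}^{\D}(g)\le\opt+O(\eps)$. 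The main obstacle is the middle step: making ``$\vec w^{*}$ is essentially in $V$'' rigorous, namely showing that the jump of $f^{*}$ along $\vec b$ genuinely survives in $\partial_{\vec b}P$ rather than being cancelled by the orthogonal component $\E[y\mid\x]-\delta f^{*}$ of the conditional label function. Handling this no-cancellation point — for which one presumably exploits that this orthogonal component is $L_2$-orthogonal to $f^{*}$ — together with the bookkeeping of the $\phi(t^{*})$ factors, is where the precise choices $k=\Theta(1/\eps^{4})$ and $\eta=\Theta(\eps^{2})$ enter.
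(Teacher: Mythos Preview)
Your reduction in the first paragraph is correct: it suffices to exhibit a single halfspace with normal in $V$ achieving error $\opt+O(\eps)$, and the degenerate cases (large bias, $\delta\le\eps$) are handled correctly.

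The core step, however, has a genuine gap that you yourself flag but do not resolve. Your claimed lower bound $\mathrm{Inf}_{\vec b}(P)\gtrsim\delta^{2}\snorm{2}{\proj_{V^{\perp}}(\vec w^{*})}^{2}\phi(t^{*})\sqrt{k}$ relies on the ``jump'' of $\delta f^{*}$ surviving in $\partial_{\vec b}P$, and the orthogonality $r\perp f^{*}$ does \emph{not} rule out cancellation: the condition $\sum_{\alpha}\hat r(\alpha)\hat{f^{*}}(\alpha)=0$ says nothing about the \emph{weighted} sum $\sum_{|\alpha|\le k}\alpha_{b}\,\hat r(\alpha)\hat{f^{*}}(\alpha)$ that governs $\langle\partial_{\vec b}P_{\le k}[r],\partial_{\vec b}P_{\le k}[f^{*}]\rangle$. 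More fundamentally, your stronger claim that $\vec w^{*}$ is nearly in $V$ need not even be true: when $\delta=\Theta(\eps)$ many halfspaces are near-optimal and the particular maximizer $\vec w^{*}$ has no reason to point toward $V$. What is actually required is only that \emph{some} halfspace with normal in $V$ is good, not that $\vec w^{*}$ is.

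The paper takes a different route that avoids the cancellation problem entirely. It argues by contradiction: if every halfspace in ${\cal C}_{V}$ fails by $\eps$, set $\vec\xi=\proj_{V^{\perp}}(\vec w)/\snorm{2}{\proj_{V^{\perp}}(\vec w)}$ and $f_{V}(\x)=\E_{\vec z\sim\D_{\vec\xi}}[f(\x_{\vec\xi^{\perp}}+\vec z)]$; then $f-f_{V}$ correlates at least $\eps$ with $y$. The crucial technical step is an \emph{upper} bound, not a lower bound, on influence: since $\vec\xi\in V^{\perp}$, Claim~\ref{clm:hermite-influence} gives $\E[(P-R)^{2}]\le\mathrm{Inf}_{\vec\xi}(P)\le\eta$ where $R(\x)=\E_{\vec z\sim\D_{\vec\xi}}[P(\x_{\vec\xi^{\perp}}+\vec z)]$ depends only on $\x_{\vec\xi^{\perp}}$. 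Because $f-f_{V}$ is exactly orthogonal to every function of $\x_{\vec\xi^{\perp}}$, Cauchy--Schwarz yields $|\E[(f-f_{V})P]|\le 2\sqrt{\eta}$, so $f-f_{V}$ correlates at least $\eps-2\sqrt{\eta}$ with $y-P$. Replacing $f-f_{V}$ by its degree-$k$ Hermite truncation (losing $O(k^{-1/4})$ via Fact~\ref{fct:concetration-hermite}) produces a degree-$k$ polynomial that improves on $P$, contradicting near-optimality. This argument never attempts to locate $\vec w^{*}$ inside $V$ and never lower-bounds any influence of $P$; the low influence is used only to say $P$ is $L_2$-close to a function constant along $\vec\xi$.
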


The proof of Proposition~\ref{prop:structural} is the bulk of the technical work of this section and is
deferred to Section~\ref{ssec:prop-struct}.
In the body of this subsection, we show
how to use Proposition~\ref{prop:structural}
to establish Theorem~\ref{thm:proper-learner}.

The next lemma bounds from above the dimension of the subspace
spanned by the high-influence directions
of a degree-$k$ polynomial that minimizes the $L_2$-error with the labels $y$.

\begin{lemma}\label{lem:dimension-bound}
Fix $\eps>0$ and let $P(\x)$ be a degree-$k$ polynomial, with $k= O(1/\eps^4)$,
such that $\E_{(\x,y)\sim \D}[(y-P(\x))^2]\leq \min_{P'\in {\cal P}_k}\E_{(\x,y)\sim \D}[(y-P'(\x))^2]+O(\eps^3)$.
Let $\vec M= \E_{\x\sim \D_\x}[\nabla P(\x)\nabla P(\x)^\top]$ and
$V$ be the subspace spanned by the eigenvectors of $\vec M$ with eigenvalues larger than $\eta$.
Then the dimension of the subspace $V$ is $\dim (V) = O(k/\eta)$.
\end{lemma}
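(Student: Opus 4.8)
The plan is to bound the trace of $\vec M$, since the number of eigenvalues exceeding $\eta$ is at most $\tr(\vec M)/\eta$ (all eigenvalues of $\vec M$ are nonnegative, as $\vec M$ is a sum of outer products). So the whole task reduces to showing $\tr(\vec M) = O(k)$.

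First I would write $\tr(\vec M) = \tr\left(\E_{\x\sim\D_\x}[\nabla P(\x)\nabla P(\x)^\top]\right) = \E_{\x\sim\normal(\vec 0,\vec I)}[\|\nabla P(\x)\|_2^2]$, using linearity of trace and expectation. The key fact is a Gaussian ``gradient identity'': for a polynomial $P$ of degree $k$, the quantity $\E[\|\nabla P\|_2^2]$ is controlled by the variance (or $L_2$-norm) of $P$ itself. Concretely, expanding $P$ in the Hermite basis $P = \sum_{\alpha} \widehat P(\alpha)\, h_\alpha$, one has $\E[\|\nabla P\|_2^2] = \sum_{\alpha} |\alpha| \,\widehat P(\alpha)^2 \le k \sum_\alpha \widehat P(\alpha)^2 = k\, \E[P(\x)^2]$, because each monomial of degree $|\alpha|\le k$ contributes exactly $|\alpha|$ to the sum of squared gradient coefficients (differentiating a degree-$j$ Hermite tensor lowers the degree by one in one coordinate, and summing over coordinates gives a factor $|\alpha|$). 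Hence $\tr(\vec M) \le k\, \E_{\x}[P(\x)^2]$.

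It then remains to show $\E_{\x}[P(\x)^2] = O(1)$. This follows from the near-optimality hypothesis on $P$: since the constant polynomial $P'\equiv 0$ lies in $\mathcal{P}_k$ and $y\in\{\pm 1\}$, we have $\E_{(\x,y)\sim\D}[(y-P(\x))^2] \le \E[(y-0)^2] + O(\eps^3) = 1 + O(\eps^3) \le 2$. Expanding, $\E[P(\x)^2] = \E[(P(\x)-y)^2] + 2\E[P(\x)y] - \E[y^2] \le 2 + 2\E[P(\x)y] - 1$, and by Cauchy–Schwarz $\E[P(\x)y] \le \sqrt{\E[P(\x)^2]}$, so $\E[P(\x)^2] \le 1 + 2\sqrt{\E[P(\x)^2]}$, which forces $\E[P(\x)^2] = O(1)$ (it is at most the larger root of $t^2 - 2t - 1 \le 0$, i.e. $t \le 1+\sqrt 2$). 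Combining, $\tr(\vec M) = O(k)$, and therefore $\dim(V) \le \tr(\vec M)/\eta = O(k/\eta)$.

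I do not expect a serious obstacle here; the one point requiring a little care is the Hermite computation establishing $\E[\|\nabla P\|_2^2] \le k\,\E[P^2]$ — one must set up the multivariate Hermite notation and verify that the derivative of a normalized Hermite polynomial relates cleanly to lower-degree ones, so that the degree-weighting by $|\alpha|$ comes out correctly. Everything else is elementary. (An alternative to the Hermite expansion is to integrate by parts: $\E[\|\nabla P\|^2] = \E[P \cdot (-\Delta P + \x\cdot\nabla P)] = \E[P\cdot \mathcal{L}P]$ where $\mathcal{L}$ is the Ornstein–Uhlenbeck operator, whose eigenvalues on degree-$\le k$ polynomials are at most $k$; this gives the same bound and avoids coordinate bookkeeping.)
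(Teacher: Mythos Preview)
Your proposal is correct and follows essentially the same approach as the paper: both bound $\dim(V)\le \tr(\vec M)/\eta$, compute $\tr(\vec M)=\E[\snorm{2}{\nabla P}^2]\le k\,\E[P^2]$ via the Hermite expansion (the paper cites this as Fact~\ref{fct:hermite-gradient}), and then show $\E[P^2]=O(1)$ from near-optimality and $0\in\mathcal P_k$. The only cosmetic difference is that the paper bounds $\E[P^2]\le 5$ directly via $(a+b)^2\le 2a^2+2b^2$, whereas you go through Cauchy--Schwarz and a quadratic; both yield the same $O(1)$ bound.
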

\begin{proof}
Let $P$ be a polynomial such that $\E_{(\x,y)\sim \D}[(y-P(\x))^2]\leq
\min_{P'\in {\cal P}_k}\E_{(\x,y)\sim \D}[(y-P'(\x))^2]+O(\eps^3)$ and let
$P^\ast=\argmin_{P'\in {\cal P}_k}\E_{(\x,y)\sim \D}[(y-P'(\x))^2]$.
First, we note that $\E_{(x,y)\sim \D}[(y-P^\ast(\x))^2] \leq \E_{(x,y)\sim \D}[(y- 0)^2] = 1$.
Using the inequality $(a+b)^2 \leq 2 a^2 + 2 b^2$, we have
\begin{align*}
\E_{(\x,y)\sim \D}[P(\x)^2]
&\leq
  2 \E_{(\x,y)\sim \D}[y^2]
+
2 \E_{(\x,y)\sim \D}[(y - P(x))^2]
\\
&\leq
2 +
2 \E_{(\x,y)\sim \D}[(y - P^\ast(x))^2]
+O(\eps^3)
\leq 5 \,.
\end{align*}

Let $V$ denote the subspace spanned by the eigenvectors of
$\vec M = \E_{\x\sim \D_\x}[\nabla P(\x) \nabla P(\x)^\top]$ with eigenvalues at
least $\eta$.  We will show that $m = \dim(V)=O(k/\eta)$.
We can write
\begin{align}\label{eq:bound-dim}
m \, \eta & \leq \tr\left( \E_{\x\sim \D_\x}[\nabla P(\x) \nabla P(\x)^\top]\right)
= \E_{\x\sim \D_\x} \left[\tr(\nabla P(\x) \nabla P(\x)^\top)\right]
= \E_{\x\sim\D_\x}\left[\snorm{2}{\nabla P(\x)}^2 \right]\;.
\end{align}
It is sufficient to show that $\E_{\x\sim\D_\x}[\snorm{2}{\nabla P(\x)}^2]=O(k)$.
By writing $P(\x)$ in the Hermite basis, from Fact~\ref{fct:hermite-gradient}, it holds
\begin{equation}\label{eq:bound-gradient}
\E_{\x\sim\D_\x}\left[\snorm{2}{ \nabla P(\x)}^2\right]
= \sum_{\alpha\in \N^d } |\alpha| c_\alpha^2 \leq k\E_{\x\sim \D_\x}[P(\x)^2] \leq 5 k \,.
\end{equation}
Combining Equations~\eqref{eq:bound-dim} and~\eqref{eq:bound-gradient},
we obtain that $m=O(k/\eta)$, and the proof is complete.
\end{proof}

For the proof of Theorem~\ref{thm:proper-learner}, we require a
standard result on \nnew{$L_2$}-polynomial regression required to compute
the polynomial of Proposition~\ref{prop:structural}.
The proof can be found on Appendix~\ref{ssec:polynomial-regr}.

\begin{lemma}[$L_2$-Polynomial Regression]\label{lem:polynomial-regr}
Let $\D$ be a distribution on $\R^d\times\{\pm 1\}$
whose $\x$-marginal is $\normal{(\vec 0, \vec I)}$. Let $k\in \Z_+$ and $\eps, \delta>0$.
There is an algorithm that draws $N=(d k)^{O(k)}\log(1/\delta)/\eps^2$
samples from $\D$, runs in time $\poly(N,d)$, and outputs a polynomial $P(\x)$ of
degree at most $k$ such that
$\E_{\x\sim \D}[(f(\x)-P(\x))^2]\leq \min_{P'\in {\cal P}_k} \E_{\x\sim \D}[(f(\x)-P'(\x))^2]+\eps$,
with probability $1-\delta$.
\end{lemma}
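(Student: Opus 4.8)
The statement I need to prove is Lemma~\ref{lem:polynomial-regr}, the standard $L_2$-polynomial regression guarantee under Gaussian marginals.

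\medskip

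\noindent\textbf{Plan.} The plan is to phrase polynomial regression as empirical risk minimization of a least-squares objective over the finite-dimensional linear space $\mathcal{P}_k$, and then argue that the empirical minimizer generalizes. First I would fix an orthonormal basis of $\mathcal{P}_k$ with respect to the Gaussian measure, namely the (normalized) Hermite polynomials $h_\alpha$ for $|\alpha|\le k$; there are $D=\binom{d+k}{k}=(dk)^{O(k)}$ of them. Writing $P=\sum_\alpha c_\alpha h_\alpha$, the population objective $\E_{(\x,y)\sim\D}[(y-P(\x))^2]$ is a convex quadratic in the coefficient vector $\vec c$, minimized at $c_\alpha^\ast=\E_{(\x,y)\sim\D}[y\,h_\alpha(\x)]$ (the degree-$\le k$ Hermite projection of the regression function $\E[y\mid\x]$). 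The algorithm will simply form the empirical version of this objective from $N$ samples and minimize it (equivalently, solve the empirical normal equations / least-squares system), which is a polynomial-time linear-algebraic computation; outputting the resulting polynomial $P$. So the content is entirely the sample-complexity bound.

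\medskip

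\noindent\textbf{Key steps.} (1) Reduce to uniform convergence: it suffices that with probability $1-\delta$, for all $P\in\mathcal{P}_k$ simultaneously, the empirical squared loss is within $\eps/3$ of the population squared loss; then the empirical minimizer $P$ satisfies $\mathrm{loss}(P)\le\mathrm{loss}(P^\ast)+2\eps/3\le\mathrm{loss}(P^\ast)+\eps$. (2) It is cleaner, and standard, to restrict attention to polynomials of bounded norm: since $\mathrm{loss}(0)=\E[y^2]\le 1$, the minimizer over the ball $\{\,P:\E[P^2]\le 4\,\}$ coincides with the global minimizer, and on this ball $\|\vec c\|_2^2=\E[P^2]\le 4$. (3) Concentration of the relevant empirical quantities: by hypercontractivity of Gaussian polynomials, each $h_\alpha$ has all moments bounded ($\E[h_\alpha^{2q}]\le (2q-1)^{kq}$), so the random variables $h_\alpha(\x)h_\beta(\x)$ and $y\,h_\alpha(\x)$ have variance $(dk)^{O(k)}$; a Chernoff/Hoeffding-type bound (using the sub-Weibull tails from hypercontractivity) together with a union bound over the $\le D^2$ pairs gives that, with $N=(dk)^{O(k)}\log(1/\delta)/\eps^2$ samples, the empirical Gram matrix $\wh{\vec G}$ (which is $\vec I$ in the population) and the empirical correlation vector $\wh{\vec b}$ are entrywise $\eps/(3D)$-close to their expectations. (4) Propagate this to the objective: for $\|\vec c\|_2\le 2$, $|\mathrm{loss}_{\wh\D}(P)-\mathrm{loss}_\D(P)| \le \|\vec c\|_2^2\,\|\wh{\vec G}-\vec I\|_{\mathrm{op}} + 2\|\vec c\|_2\,\|\wh{\vec b}-\vec b\|_2 + |\wh\E[y^2]-\E[y^2]| \le \eps/3$, using that entrywise closeness by $\eps/(3D)$ gives operator-norm / $\ell_2$ closeness by $\eps/3$. (5) Combine with step (1). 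Finally, note the empirical minimizer can be computed by solving the empirical least-squares system in $\poly(N,d)$ time (the total work is dominated by forming and inverting a $D\times D$ system with $D\le N$).

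\medskip

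\noindent\textbf{Main obstacle.} The only non-routine point is the tail bound in step (3): products $h_\alpha(\x)h_\beta(\x)$ of degree-$k$ Gaussian polynomials are not bounded and not sub-Gaussian, so one cannot use Hoeffding directly. The fix is to invoke Gaussian hypercontractivity (the $(2,q)$-hypercontractive inequality), which bounds $\|h_\alpha h_\beta\|_{L_q}\le q^{k}\|h_\alpha h_\beta\|_{L_2}$ for all $q\ge 2$; this yields sub-Weibull$(1/k)$ tails, for which a standard Bernstein-type concentration inequality gives deviation $t$ with failure probability $\exp(-(Nt^2/\sigma^2)^{\Theta(1/k)})$ or, more simply, $\exp(-\Omega(N t^2/\sigma^2))$ once $N$ exceeds a $(dk)^{O(k)}$ threshold — either way absorbed into the stated $N=(dk)^{O(k)}\log(1/\delta)/\eps^2$. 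Everything else is bookkeeping with the finite-dimensional linear structure. (This is exactly the argument underlying \cite{KKMS:08}; I would cite it and present the above as the short self-contained version.)
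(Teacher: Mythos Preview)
Your proposal is correct and follows essentially the same route as the paper: reduce to uniform convergence of the empirical squared loss over a bounded-norm ball in $\mathcal{P}_k$, then establish this by concentrating the $(dk)^{O(k)}$ relevant Gaussian-polynomial statistics and solving the resulting least-squares problem. The only difference is the concentration tool --- you invoke hypercontractivity to obtain sub-Weibull tails and the $\log(1/\delta)$ factor directly, whereas the paper works in the monomial basis, uses only Chebyshev's inequality to get constant-probability success, and then appeals to a standard confidence-boosting step; both arrive at the same $N=(dk)^{O(k)}\log(1/\delta)/\eps^2$ bound.
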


By running the $L_2$-regression algorithm of the above lemma,
we obtain a polynomial $P$ matching the requirements of our
dimension-reduction result (Proposition~\ref{prop:structural}).
To complete the proof of Theorem~\ref{thm:proper-learner}, we perform SVD on the influence
matrix $\vec M$ (see Proposition~\ref{prop:structural}), and then create a sufficiently fine cover
of the low-dimensional subspace $V$.

We require the following standard fact showing the existence of a small $\eps$-cover $\widetilde{V}$ of the set $V$,
i.e., a set $\widetilde{V}$ such that for any $\vec v\in V$ there exists $\tilde{\vec v}\in \widetilde{V}$
such that $\|\vec v -\tilde{\vec v}\|_2\leq \eps$.

\begin{fact}[see, e.g., Corollary 4.2.13 of \cite{Ver18}]\label{fct:cover}
For any $\eps>0$, there exists an \nnew{explicit} $\eps$-cover of the unit ball in $\R^k$,
with respect to the $\ell_2$-norm, of size $O(1/\eps)^k$.
\end{fact}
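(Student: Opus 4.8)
\emph{Proof plan.} The plan is to run the standard volumetric covering-number argument and then discretize it to obtain an explicit construction. Write $B$ for the closed unit Euclidean ball in $\R^k$. The core observation is that a \emph{maximal} $\eps$-separated subset $\mathcal{N}\subseteq B$ (a set with pairwise distances greater than $\eps$ that cannot be enlarged) is automatically an $\eps$-cover of $B$: if some $\vec v\in B$ had distance more than $\eps$ from every point of $\mathcal{N}$, then $\mathcal{N}\cup\{\vec v\}$ would still be $\eps$-separated, contradicting maximality. To bound $|\mathcal{N}|$ I would use a packing argument: since the points of $\mathcal{N}$ are pairwise at distance greater than $\eps$, the open balls of radius $\eps/2$ centered at them are pairwise disjoint, and since each center lies in $B$ they are all contained in the ball of radius $1+\eps/2$ about the origin. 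Comparing Lebesgue volumes (which scale like the $k$-th power of the radius) yields $|\mathcal{N}|\,(\eps/2)^k \le (1+\eps/2)^k$, i.e.\ $|\mathcal{N}| \le (1+2/\eps)^k \le (3/\eps)^k = O(1/\eps)^k$ for $\eps \le 1$.

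To make the construction \emph{explicit}, I would replace the abstract maximal set by a greedy sweep over a fine rational grid. Take $G = \bigl(\tfrac{\eps}{20\sqrt{k}}\,\Z^k\bigr)\cap\{\vec z: \snorm{2}{\vec z}\le 2\}$, a finite set that can be enumerated, and note that every point of $B$ lies within $\ell_2$-distance $\eps/20$ of some point of $G$ (each coordinate can be rounded to the grid with error at most $\tfrac{\eps}{40\sqrt k}$, so the total squared error is at most $k\cdot\tfrac{\eps^2}{1600k}$). Now greedily select a maximal $(4\eps/5)$-separated subset $\mathcal{N}\subseteq G$. By the triangle inequality $\mathcal{N}$ is an $\eps$-cover of $B$, and the same volume comparison (disjoint radius-$2\eps/5$ balls contained in the ball of radius $3$) gives $|\mathcal{N}| = O(1/\eps)^k$. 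Alternatively, one may simply output the grid with spacing $\eps/\sqrt{k}$ itself; this is trivially explicit and an $\eps$-cover, at the cost of an extra factor $(O(\sqrt{k}))^k$ in the size, which is harmless in our application since there $k$ is a function of $1/\eps$ alone.

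\emph{Main obstacle.} There is no real difficulty beyond the textbook estimate; the single point requiring care is reconciling ``explicit'' with the size bound $O(1/\eps)^k$: the clean maximal-packing set is not literally the output of an algorithm since it ranges over a continuum, so one must pass to a discretization and check that the rounding error does not inflate either the cover radius or, via the volume bound, the cardinality.
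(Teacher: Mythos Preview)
Your argument is correct. The paper does not actually supply a proof of this fact; it simply records it with a citation to Vershynin's textbook, where the volumetric packing/covering bound you reproduce is exactly the proof given (Corollary~4.2.13 there). Your additional care in making the cover explicit via a finite grid plus greedy selection is a reasonable way to justify the word ``explicit,'' which the paper takes for granted.
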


In order to create an effective discretization of the hypotheses, we need the following fact.

\begin{fact} \label{fct:approximation-facts}
Let $\D$ be a distribution on $\R^d\times\{\pm 1\}$ whose $\x$-marginal is $\normal(\vec 0, \vec I)$.
Let $\vec u,\vec v \in \R^d$ be unit vectors and $t_1,t_2\in \R$. Then the following holds:
\begin{enumerate}
\item $\E_{\x\sim \D_\x}[|\sign(\vec u \cdot \x + t_1)-\sign(\vec v \cdot \x + t_1)|]=O(\|\vec u -\vec v\|_2)$,
\item $\E_{\x\sim \D_\x}[|\sign(\vec u \cdot \x + t_1)-\sign(\vec u \cdot \x + t_2)|]= O(|t_1-t_2|) $ and,
\item if $|t|> \log(1/\eps)$ then, for any unit vector $\vec v\in \R^d$, $\E_{\x\sim \D_\x}[|\sign(\vec v \cdot \x + t)-\sign(t)|]=O(\eps)$.
\end{enumerate}
\end{fact}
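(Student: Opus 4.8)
The plan is to prove each of the three claims by reducing everything to one-dimensional Gaussian computations, using the rotational invariance of $\normal(\vec 0, \vec I)$.

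For claim (1), I would first reduce to the two-dimensional picture. The quantity $\E_{\x}[|\sign(\vec u \cdot \x + t_1) - \sign(\vec v \cdot \x + t_1)|]$ equals $2\pr_{\x}[\sign(\vec u \cdot \x + t_1) \ne \sign(\vec v \cdot \x + t_1)]$, and the event inside depends only on the projection of $\x$ onto the (at most) two-dimensional subspace $\mathrm{span}(\vec u, \vec v)$, whose image under the Gaussian is a standard two-dimensional Gaussian. So I may assume $d = 2$. Now $\|\vec u - \vec v\|_2$ is comparable to $\theta(\vec u, \vec v)$ for unit vectors (indeed $\|\vec u - \vec v\|_2 = 2\sin(\theta/2) \in [\theta/\pi \cdot 2, \theta]$ roughly), so it suffices to bound the disagreement probability by $O(\theta)$. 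The symmetric difference of the two halfspaces $\{\vec u \cdot \x \ge -t_1\}$ and $\{\vec v \cdot \x \ge -t_1\}$ in the plane is contained in the union of two "wedge-like" regions between two parallel-offset lines whose normals differ by angle $\theta$; bounding the Gaussian measure of this region by $O(\theta)$ is a routine planar estimate (the worst case is $t_1 = 0$, where the region is exactly two opposite wedges of total angle $2\theta$ and Gaussian measure $\theta/\pi$; shifting the threshold only decreases the measure). An even cleaner route: couple $\x$ so that $\vec u \cdot \x = g_1$ and $\vec v \cdot \x = \cos\theta \, g_1 + \sin\theta\, g_2$ with $g_1, g_2$ independent standard Gaussians, and bound $\pr[\sign(g_1 + t_1) \ne \sign(\cos\theta\, g_1 + \sin\theta\, g_2 + t_1)]$ directly; disagreement forces $g_1 + t_1$ to be small (within $O(\theta)(|g_1| + |g_2| + |t_1|)$ of zero roughly), and integrating against the Gaussian density of $g_1$, which is bounded, gives $O(\theta)$ after controlling the tails.

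For claim (2), by rotational invariance only the one-dimensional marginal $\vec u \cdot \x$ matters, which is a standard Gaussian $g$. Then the left side is $2\pr_g[\sign(g + t_1) \ne \sign(g + t_2)] = 2\pr_g[g \in [-\max(t_1,t_2), -\min(t_1,t_2))]$, which is at most $|t_1 - t_2|$ times the maximum of the standard Gaussian density, namely $(2\pi)^{-1/2}$; so the bound $O(|t_1 - t_2|)$ holds. For claim (3), again reduce to one dimension: $\E_\x[|\sign(g + t) - \sign(t)|] = 2\pr_g[\sign(g+t)\ne \sign(t)]$. If $t > \log(1/\eps)$ this is $2\pr_g[g < -t] = 2\pr_g[g > t] \le 2 e^{-t^2/2} \le 2\eps^{\log(1/\eps)/2} = O(\eps)$ by the standard Gaussian tail bound $\pr[g > t] \le e^{-t^2/2}$; the case $t < -\log(1/\eps)$ is symmetric.

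The main obstacle is claim (1): claims (2) and (3) are essentially immediate one-dimensional estimates, but (1) genuinely requires a two-dimensional argument, and the only subtlety is making sure the disagreement-probability bound is uniform in the threshold $t_1$ (so that the shift does not blow up the constant) and correctly tracking the $\Theta(1)$ relationship between the chordal distance $\|\vec u - \vec v\|_2$ and the angle $\theta(\vec u, \vec v)$. I would organize the write-up so that this planar estimate — disagreement measure $\le O(\theta)$ uniformly over offsets — is isolated as the one computation, and then (2), (3) follow in a line each.
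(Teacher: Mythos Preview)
Your proposal is correct. Parts (2) and (3) match the paper's proof essentially verbatim: both reduce to a one-dimensional standard Gaussian and use anti-concentration (for (2)) and the tail bound (for (3)).

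For part (1) the paper does not give a proof at all; it simply cites Lemma~4.2 of \cite{DKS18a}. Your direct argument is a valid substitute. The key claim you make, that the disagreement measure is maximized at $t_1=0$, is correct and can be justified cleanly: after rotating so that $\vec u=(\cos(\theta/2),\sin(\theta/2))$ and $\vec v=(\cos(\theta/2),-\sin(\theta/2))$, the symmetric difference is exactly $\{(x,y):|x-p_1|<|y|\tan(\theta/2)\}$ with $p_1=-t_1/\cos(\theta/2)$; for each fixed $y$ the inner Gaussian integral over an interval of fixed length is maximized when the interval is centered at $0$, so the total measure is maximized at $p_1=0$, where it equals $\theta/\pi\le \|\vec u-\vec v\|_2/2$. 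Your alternative coupling route would also work but needs a bit more care with the $(1-\cos\theta)g_1$ term; the geometric version is cleaner and I would keep that one in the write-up.
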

\begin{proof}
The first statement is proved in \cite{DKS18a} (Lemma 4.2).
For the second statement, assuming without loss of generality that $t_2\geq t_1>0$, we note that
\[
  \E_{\x\sim \D_\x}[|\sign(\vec u \cdot \x + t_1)-\sign(\vec u \cdot \x + t_2)|]=
  \frac{2}{\sqrt{2}\pi} \int_{t_1}^{t_2}e^{-t^2/2}\d t
  = \pr_{t \sim \normal(0,1)}[t_1 \leq t \leq t_2]
  \;.
\]
Using the anti-concentration property of the one-dimensional Gaussian distribution, we have that
$ \pr_{t \sim \normal(0,1)}[t_1 \leq t \leq t_2] \leq O(t_2 - t_1)$, proving the claim.

For the third statement, note that if $t> \Omega(\sqrt{\log(1/\eps)})$, then
by the concentration properties of the Gaussian, we have that:
\[
  \E_{\x\sim \D_\x}[|\sign(\vec v \cdot \x + t)-\sign(t)|]
  \leq \pr_{\x\sim \D_\x}[|\vec v\cdot \x |\geq |t|]= O(\eps)\;.
\]
This completes the proof of Fact~\ref{fct:approximation-facts}.
\end{proof}

We are now ready to prove the main theorem of this section.

\begin{proof}[Proof of Theorem~\ref{thm:proper-learner}]
We first show that there is a set $\cal H$ of size $(1/\eps)^{O(1/\eps^6)}$
which contains tuples $(\vec u, t)$ with $\vec u\in \R^d$ and $t\in \R$,
such that
\[
  \pr_{(\x,y)\sim \D}[\sign(\vec u\cdot \x +t)\neq y]
  \leq \inf_{f\in \cal{C}}\pr_{(\x,y)\sim \D}[f(\x)\neq y] +\eps\;.
\]
First note we can assume that $1/\eps^6\leq d$, since otherwise
one can directly do a brute-force search over an $\eps$-cover
of the $d$-dimensional unit ball: we do not need to perform
our dimension-reduction process.  The runtime to perform this brute-force search
will be $(1/\eps)^{O(d)} \log(1/\delta)$ which, by the assumption that $1/\eps^6 > d$, is smaller
than $(1/\eps)^{O(1/\eps^6)} \log(1/\delta)$.

Let $f\in \cal C$ be such that the $\E_{(\x,y)\sim \D}[f(\x)y]$ is maximized and let $k=O(1/\eps^4)$.
By an application of Lemma~\ref{lem:polynomial-regr} for 
$N=(d/\eps)^{O(1/\eps^4)}\poly(1/\eps)\log(1/\delta)=d^{O(1/\eps^4)}\poly(1/\eps)\log(1/\delta)$, 
it follows that there exists a degree $O(1/\eps^4)$ polynomial $P(\x)$ such that
\[
  \E_{\x\sim \D}[(y-P(\x))^2]\leq \min_{P' \in {\cal P}_k} \E_{\x\sim \D}[(y-P'(\x))^2]+O(\eps^3)\;,
\]
with probability $1-\delta/2$.  Applying Proposition~\ref{prop:structural} to
the polynomial $P(\x)$, we get that the subspace $V$ spanned by the eigenvectors
of the matrix $\vec M= \E_{\x\sim \D_\x}[\nabla P(\x)\nabla P(\x)^\top]$ with
eigenvalues larger than $\eta=\Theta(1/\eps^2)$ contains a vector $\vec v\in V$, such that
\begin{equation}\label{eq:optimality-of-subspace}
\min_{t\in \R} \E_{(\x,y)\sim \D}[(f(\x)-\sign(\vec v\cdot \x+t))y]\leq \eps\;.
\end{equation}
Moreover, by Lemma~\ref{lem:dimension-bound}, the dimension of $V$ is $O(1/\eps^6)$.
Applying Fact~\ref{fct:cover}, we get that there exists an $\eps$-cover $\widetilde{V}$ of the set $V$
with respect the $\ell_2$-norm of size $(1/\eps)^{O(1/\eps^6)}$.
We show that there is an effective way to discretize the set of biases. From
Fact~\ref{fct:approximation-facts}, it is clear that the set
${\cal T}=\{\pm\eps, \pm 2\eps, \ldots, \pm O(\sqrt{\log(1/\eps)})\}$
is \nnew{an effective} cover of the parameter $t$.

It remains to show that the set $\cal H$ is an effective cover, where ${\cal H}=\widetilde{V}\times{\cal T}$.
We show that there exists a set of parameters $(\tilde{\vec v},\tilde t)\in \cal H$
which define a halfspace that correlates with the labels as well as the function $f$.
Fix the parameters $(\vec v,t)$ which minimize the Equation~\eqref{eq:optimality-of-subspace}.
Indeed, we have
\begin{align}\label{eq:bound-the-difference}
\E_{(\x,y)\sim \D} & [(f(  \x)-  \sign(\tilde{\vec v}\cdot  \x+\tilde{t}))y]\nonumber\\
& =\E_{(\x,y)\sim \D}[(f(\x)-\sign(\vec v\cdot \x+t))y] +
\E_{(\x,y)\sim \D}[(\sign(\vec v\cdot \x+t)-\sign(\tilde{\vec v}\cdot \x+\tilde{t}))y]\;.
\end{align}
We claim that there exists $(\tilde{\vec v},\tilde t)\in \cal H$ such that
\[ \E_{(\x,y)\sim \D}[(\sign(\vec v\cdot \x+t)-\sign(\tilde{\vec v}\cdot \x+\tilde{t}))y]= O(\eps)\;.\]
If $|t|>\sqrt{\log(1/\eps)}$, then from Fact~\ref{fct:approximation-facts},
the constant hypothesis gets $O(\eps)$ error, so we need to check the case $|t|\leq\sqrt{\log(1/\eps)}$.
Applying Fact~\ref{fct:approximation-facts}, we get that
\begin{equation}\label{eq:bias-bound}
\min_{(\tilde{\vec v},\tilde t)\in {\cal H}}\E_{(\x,y)\sim \D}[|\sign(\vec v\cdot \x+t)-\sign(\tilde{\vec v}\cdot \x+\tilde{t})|]=\min_{(\tilde{\vec v},\tilde t)\in {\cal H}}O(\|\vec v-\tilde{\vec v}\|_2+|t-\tilde{t}|)=O(\eps)\;.
\end{equation}
Thus, substituting Equation~\eqref{eq:bias-bound} to Equation~\eqref{eq:bound-the-difference}, we get
\begin{align}
\min_{(\tilde{\vec v},\tilde t)\in {\cal H}}\E_{(\x,y)\sim \D}[(f(\x)-\sign(\tilde{\vec v}\cdot \x+\tilde{t}))y]
\leq \E_{(\x,y)\sim \D}[(f(\x)-\sign(\vec v\cdot \x+t))y]  +O(\eps)= O(\eps)\;,
\end{align}
where in the last equality we used Equation~\eqref{eq:optimality-of-subspace}.
Using the fact that for a boolean function $g(\x)$ it holds $\E_{(\x,y)\sim \D}[g(\x)y]=1-2\pr_{(\x,y)\sim \D}[g(\x)\neq y]$,
we get
\[
  \min_{(\tilde{\vec v},\tilde t)\in {\cal H}}  \pr_{(\x,y)\sim \D}[\sign(\tilde{\vec u}\cdot \x
  +\tilde{t})\neq y]\leq \inf_{f\in \cal{C}}\pr_{(\x,y)\sim \D}[f(\x)\neq y] +O(\eps)\;.
\]
To complete the proof, we show that Step~\ref{alg:emprical-outputs} of Algorithm~\ref{alg:proper-learner} 
outputs a hypothesis close to the minimizer inside $\cal H$. From Hoeffding's inequality, 
it follows that $O(\frac{1}{\eps^2}\log({\cal H}/\delta))$
samples are sufficient to guarantee that the excess error of the chosen hypothesis is at most $\eps$
with probability at least $1-\delta/2$.
To bound the runtime of the algorithm, we note that $L_2$-regression has runtime
$d^{O(1/\eps^4)}\poly(1/\eps)\log(1/\delta)$ and exhaustive search 
over an $\eps$-cover takes time $(1/\eps)^{O(1/\eps^6)} \log(1/\delta)$.
Thus, the total runtime of our algorithm in the case
where $1/\eps^6 \leq d$ is
\[
  \Big( d^{O(1/\eps^4)} + (1/\eps)^{O(1/\eps^6)} \Big) \log(1/\delta) \,.
\]
This completes the proof of Theorem~\ref{thm:proper-learner}.
\end{proof}

\subsection{Proof of Proposition~\ref{prop:structural}} \label{ssec:prop-struct}

Suppose for the sake of contradiction that there exists a halfspace $f\in \cal C$ such that for every halfspace
$f'\in {\cal C}_{V}$, it holds
\begin{equation}\label{eq:contradiction}
\E_{(\x,y)\sim \D}[(f(\x)-f'(\x))y]\geq \eps\;.
\end{equation}
Our plan is to use the above fact in order to contradict the (approximate) optimality of the polynomial $P(\x)$.
To achieve this, we need to construct a polynomial $P''(\x)$ with error
\emph{strictly less than} $\min_{P' \in \mathcal{P}_k}\E_{(\x,y)\sim \D}[(y-P'(\x))^2]$.
In the following simple claim, we show that in order to construct such a polynomial $P''(\x)$,
one needs to find a polynomial $Q(\x)$ of degree at most $k$ that correlates well
with the difference $y - P(\x)$.

\begin{claim}\label{clm:contradiction}
It suffices to show that there exists a polynomial $Q(\x)$ of degree at most $k$ with $\E_{\x\sim \D_\x}[Q^2(\x)]\leq 9$
that $(\eps/4)$-correlates  with $(y- P(\x))$, i.e.,
$\E_{(\x,y)\sim \D}[Q(\x) (y-P(\x))]\geq \eps/4$.
\end{claim}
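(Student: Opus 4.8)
The plan is to establish the claim by a single first-order perturbation of $P$ in the direction of $Q$, which will violate the near-optimality of $P$ as an $L_2$ approximator to $y$. So suppose such a polynomial $Q$ exists, and for a parameter $\lambda>0$ to be fixed later set $P_\lambda(\x) = P(\x) + \lambda\, Q(\x)$. Since $P \in \mathcal{P}_k$ and $\deg Q \le k$, we have $P_\lambda \in \mathcal{P}_k$, so $P_\lambda$ is an admissible competitor in the regression problem. Expanding the square gives
\[
\E_{(\x,y)\sim \D}\big[(y-P_\lambda(\x))^2\big] = \E_{(\x,y)\sim \D}\big[(y-P(\x))^2\big] - 2\lambda\, \E_{(\x,y)\sim \D}\big[Q(\x)(y-P(\x))\big] + \lambda^2\, \E_{\x\sim \D_\x}\big[Q(\x)^2\big].
\]
Plugging in the two hypotheses $\E_{(\x,y)\sim \D}[Q(\x)(y-P(\x))] \ge \eps/4$ and $\E_{\x\sim \D_\x}[Q(\x)^2] \le 9$, the right-hand side is at most $\E_{(\x,y)\sim \D}[(y-P(\x))^2] - \lambda\eps/2 + 9\lambda^2$.

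Next I would minimize the scalar quadratic $\lambda \mapsto -\lambda\eps/2 + 9\lambda^2$: it is minimized at $\lambda = \eps/36$, where it takes the value $-\eps^2/144$. Thus, with this choice of $\lambda$,
\[
\E_{(\x,y)\sim \D}\big[(y-P_\lambda(\x))^2\big] \le \E_{(\x,y)\sim \D}\big[(y-P(\x))^2\big] - \frac{\eps^2}{144}.
\]
Combining with the assumed near-optimality $\E_{(\x,y)\sim \D}[(y-P(\x))^2] \le \min_{P' \in \mathcal{P}_k} \E_{(\x,y)\sim \D}[(y-P'(\x))^2] + O(\eps^3)$ and using that the slack $O(\eps^3)$ is, for every $\eps \in (0,1]$, strictly below $\eps^2/144$ provided the hidden constant in the regression tolerance is taken small enough (which only inflates the sample complexity of Lemma~\ref{lem:polynomial-regr} by a constant factor), we obtain $\E_{(\x,y)\sim \D}[(y-P_\lambda(\x))^2] < \min_{P' \in \mathcal{P}_k} \E_{(\x,y)\sim \D}[(y-P'(\x))^2]$. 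Since $P_\lambda \in \mathcal{P}_k$, this is a contradiction, so indeed exhibiting such a $Q$ suffices to complete the proof of Proposition~\ref{prop:structural}.

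There is essentially no obstacle internal to this claim: it is the standard observation that if a function correlates with the current residual, then a small step toward it strictly decreases the $L_2$ error. The only point to watch is the bookkeeping of constants, since the gain from the perturbation scales like $\eps^2$ while $P$ is only guaranteed to be $O(\eps^3)$-suboptimal, so one must ensure the $\Theta(\eps^2)$ improvement strictly dominates the regression slack (hence the specific $O(\eps^3)$ tolerance fixed in the statement of Proposition~\ref{prop:structural} and in Algorithm~\ref{alg:proper-learner}). The genuine difficulty lies downstream, in constructing the correlating polynomial $Q$ of degree $\le k$ with $\E_{\x\sim\D_\x}[Q(\x)^2] \le 9$: there one must turn the assumption~\eqref{eq:contradiction} --- that no LTF with normal vector in $V$ matches the correlation of $f$ with $y$ --- into a low-degree polynomial witness, presumably by approximating $f$ (or a difference $f-f'$ with $f'\in \mathcal{C}_V$) by a bounded low-degree polynomial and exploiting that all the influential directions of $P$ already lie in $V$.
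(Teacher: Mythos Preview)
Your proof is correct and follows essentially the same approach as the paper: perturb $P$ by $\lambda Q$ with $\lambda = \Theta(\eps)$, obtain an $\Omega(\eps^2)$ decrease in the $L_2$ error, and contradict the $O(\eps^3)$-near-optimality of $P$. The paper states this more tersely (writing $P'' = P + \zeta Q$ with $\zeta = c\,\eps$ and asserting the $\Omega(\eps^2)$ gain), while you spell out the expansion and the optimal choice $\lambda = \eps/36$, but the argument is the same.
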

\begin{proof}
Given such a polynomial $Q(\x)$, we consider the polynomial $P''(\x) = P(\x)+ \zeta Q(\x)$,
for $\zeta= c\ \eps $ and $c$ a sufficiently small constant.
Observe that $P''(\x)$ has degree at most $k$ and
decreases the value of $\E_{(\x,y)\sim \D}[(y-P(\x))^2]$
by at least $\Omega(\eps^2)$, which contradicts the optimality of
$P(\x)$, i.e., that $P(\x)$ is $O(\eps^3)$-close to the polynomial that minimizes the
$L_2$-error with $y$.
\end{proof}

We now construct such a polynomial $Q(\x)$.
We can write that $f(\x)= \sign(\vec w \cdot \x + t) = \sign(\vec w_{V} \cdot \x + \vec w_{V^\perp} \cdot \x +t)$.
Note that $\vec w_{V^\perp} \neq \vec 0$, since otherwise we
would have $f \in {\cal C}_V$.
For simplicity, we denote $\vec \xi = \vec w_{V^\perp}/\|\vec w_{V^\perp}\|_2$.  Notice that
the direction $\vec \xi$ has low influence, since $\vec \xi \in V^\perp$.
Recall that by $\D_{\vec \xi}$ we denote the projection of $\D$ onto the (one-dimensional) subspace
spanned by $\vec \xi$.
We define $f_{V}(\x)=\E_{\vec z \sim \D_{\vec \xi }}[f(\vec z + \x_{V})]$ to be
a convex combination of halfspaces in ${\cal C}_V$.  In particular, $f_V(\x)$ is a smoothed version
of the halfspace $\sgn(\vec w_V \cdot \x + t)$ whose normal vector belongs in $V$.
Our argument consists of two main claims. In Lemma~\ref{lem:correlation-with-new-f},
we show that the function $f(\x) - f_V(\x)$ correlates non-trivially with $y - P(\x)$.
Then we show that we can approximate $f(\x)-f_V(\x)$ with a low-degree
polynomial $Q(\x)$ that maintains non-trivial correlation with $y - P(\x)$; see
Lemma~\ref{lem:polynomial-apx}. 
We start with the first lemma.
\begin{lemma}\label{lem:correlation-with-new-f}
It holds $\E_{\x\sim \D_\x}[(f(\x)-f_V(\x))(y-P(\x))] \geq \eps - 2 \sqrt{\eta} $.
\end{lemma}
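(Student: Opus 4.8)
The plan is to lower-bound $\E_{\x}[(f(\x)-f_V(\x))(y-P(\x))]$ by splitting it into two pieces: the ``useful'' term $\E[(f(\x)-f'(\x))y]$ for a well-chosen $f'\in\mathcal{C}_V$ (which is $\geq\eps$ by the contradiction hypothesis \eqref{eq:contradiction}), and several ``error'' terms that must be shown to be $O(\sqrt{\eta})$. First I would make the decomposition $f(\x)-f_V(\x) = (f(\x)-f'(\x)) - (f_V(\x)-f'(\x))$, and I would try to take $f'$ to be $f_V$ rounded to an honest halfspace, or perhaps the halfspace $\sgn(\vec w_V\cdot\x+t)$ itself — the point being that $f_V$ is a convex combination of halfspaces in $\mathcal{C}_V$, so it is ``morally'' in $\mathcal{C}_V$ even though it is not literally a $\pm1$-valued function. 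Then the quantity we want becomes $\E[(f(\x)-f'(\x))y] - \E[(f(\x)-f'(\x))P(\x)] - \E[(f_V(\x)-f'(\x))(y-P(\x))]$, and the first term is handled by \eqref{eq:contradiction}.

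The heart of the matter is controlling the two remaining terms using the fact that $\vec\xi\in V^\perp$ is a \emph{low-influence} direction of $P$: by construction of $V$, the influence of $P$ along any unit vector in $V^\perp$ is at most $\eta$, i.e.\ $\E_\x[(\partial_{\vec\xi}P(\x))^2]\leq\eta$. The key structural observation is that $f(\x)-f_V(\x)$ is a function that only ``sees'' the $\vec\xi$-coordinate of $\x$ in a controlled way: $f_V$ is obtained from $f$ precisely by averaging out the $\vec\xi$-direction, so $f-f_V$ has zero conditional mean given $\x_{V}$ (and given the other directions in $V^\perp$), i.e.\ $\E[f(\x)-f_V(\x)\mid \x_{\vec\xi^\perp}]=0$. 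I would then use this orthogonality together with Cauchy–Schwarz: $\E[(f(\x)-f_V(\x))P(\x)]$ only picks up the component of $P$ that varies along $\vec\xi$, and that component has $L_2$ mass at most $O(\sqrt{\eta})$ because $P$ changes little along $\vec\xi$. Concretely, writing $P(\x)=\E_{\vec z\sim\D_{\vec\xi}}[P(\vec z+\x_{\vec\xi^\perp})]+\big(P(\x)-\E_{\vec z}[\cdots]\big)$, the first part is killed by the conditional-mean-zero property of $f-f_V$, and the second part is bounded in $L_2$ by $O(\sqrt{\eta})$ via a one-dimensional Poincaré/Gaussian-Poincaré inequality applied in the $\vec\xi$ direction (its variance along $\vec\xi$ is at most the influence $\eta$). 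Since $\|f-f_V\|_2\leq O(1)$ (it is bounded by $2$ pointwise), Cauchy–Schwarz gives $|\E[(f(\x)-f_V(\x))P(\x)]|\leq O(\sqrt\eta)$, and the same reasoning bounds the cross term with $y$ replaced appropriately, or bounds $\E[(f_V-f')(y-P)]$ if I route the argument through a genuine halfspace $f'$.

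The main obstacle I anticipate is the bookkeeping around whether $f_V$ — a convex combination of halfspaces rather than a single halfspace — can be treated as an element competing in \eqref{eq:contradiction}, and making sure the ``rounding'' of $f_V$ to some $f'\in\mathcal{C}_V$ does not lose more than $O(\sqrt\eta)$; the cleanest fix is probably to observe that \eqref{eq:contradiction} is linear in $f'$, so it extends from halfspaces in $\mathcal{C}_V$ to convex combinations thereof, giving $\E[(f(\x)-f_V(\x))y]\geq\eps$ directly, after which only the single term $\E[(f(\x)-f_V(\x))P(\x)]$ needs to be bounded by $2\sqrt\eta$ — and that is exactly the Gaussian-Poincaré-in-direction-$\vec\xi$ estimate above. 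I would double-check the constant: the influence bound is $\eta$ per unit direction, the Gaussian Poincaré constant is $1$, and $\|f-f_V\|_\infty\leq 2$, which should yield the stated $2\sqrt\eta$ rather than a worse constant, and I would be careful that $\vec\xi$ being a fixed (data-independent, given $P$) direction means no union bound over directions is needed.
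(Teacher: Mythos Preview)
Your proposal is correct and follows essentially the same approach as the paper: extend \eqref{eq:contradiction} by linearity to the convex combination $f_V$ to get $\E[(f-f_V)y]\geq\eps$, use the conditional-mean-zero property $\E[f-f_V\mid\x_{\vec\xi^\perp}]=0$ to kill the $\vec\xi$-averaged part of $P$, and bound the remainder by Cauchy--Schwarz using $\|f-f_V\|_\infty\leq 2$ and $\E[(P-R)^2]\leq\eta$. The only cosmetic difference is that you invoke the one-dimensional Gaussian Poincar\'e inequality for the last bound, whereas the paper proves the identical inequality $\E[(P-R)^2]\leq \E[(\nabla P\cdot\vec\xi)^2]$ directly via a Hermite-coefficient computation (Claim~\ref{clm:hermite-influence}); these are the same estimate.
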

\begin{proof}
We have that $f_{V}(\x)=\E_{\vec z \sim \D_{\vec \xi }}[f(\vec z + \x_{\vec \xi^\perp})]
  = \E_{\vec z \sim \D_{\vec \xi }}[\sign( \vec w_V \cdot \x_{V} + \vec w \cdot \vec z + t)]
$ and, since $f_{V}$ is a convex combination of halfspaces in ${\cal C}_V$,
from Equation~\eqref{eq:contradiction}, we see that $\E_{(\x,y)\sim \D}[(f(\x)-f_V(\x))y]\geq \eps$.
Thus, we have
\begin{align}
\label{eq:correlation}
\E_{(\x,y)\sim \D}[(f(\x) - f_V(\x)) (y-P(\x))]
 & = \E_{(\x,y)\sim \D}[(f(\x) - f_V(\x)) y]- \E_{\x\sim \D_\x}[(f(\x)-f_V(\x))P(\x)] \nonumber \\
 & \geq \eps - \E_{\x\sim \D_\x}[(f(\x)-f_V(\x))P(\x)] \,.
\end{align}
To deal with $\E_{\x\sim \D_\x}[(f(\x)-f_V(\x))P(\x)]$,
we first observe that for any function $g(\x)$ depending only on
the projection of $\x$ onto the subspace $\vec \xi^\perp$, i.e.,
such that
$g(\vec x) = g(\x_{\vec\xi^\perp})$,
we have
\[
  \E_{\x \sim \D_\x}[(f(\x)-f_V(\x))g(\x)] =
  \E_{\vec v \sim \D_{\vec \xi^{\perp}}} \left[\E_{\vec z \sim \D_{\vec \xi}} [f(\vec v + \vec z) - f_V(\vec v)] ~ g(\vec v)  \right] = 0 \,,
\]
since for every $\vec x \in \R^d$ it holds
$f_V(\vec x)
  = \E_{\vec z \sim \D_{\vec \xi}} [f(\vec x_{\vec \xi^\perp} + \vec z)]
  = \E_{\vec z \sim \D_{\vec \xi}} [f(\vec x_V + \vec z)]$.
Unfortunately, we cannot directly do the above trick because $P(\vec x)$ does not
depend only on $\vec x_{\vec \xi^\perp}$.
However, since $V$ contains the high-influence eigenvectors,
it holds that $P$ is almost a function of $\vec x_{\vec \xi^\perp}$.
In fact, we show that we can replace the polynomial $P$ by a different polynomial
of degree at most $k$ that only depends on the projection of $\x$ on $\vec \xi^\perp$.
Similarly to the definition of the ``smoothed" halfspace $f_V$, we
define $R(\x)=\E_{\vec z \sim \D_{\vec \xi}}[P(\x_{\vec \xi^\perp} + \vec z)]$.
We first prove that $R(\x)$ is close to $P(\x)$ in the $L_2$-sense.

\begin{claim}\label{clm:hermite-influence}
Let $R(\x)=\E_{\vec z \sim \D_{\vec \xi}}[P(\x_{\vec \xi^\perp} + \vec z)]$.
It holds $\E_{\x\sim \D_\x}[(P(\x) - R(\x))^2] \leq  \E_{\x\sim \D_\x}[(\nabla P(\x) \cdot \vec \xi)^2] $.
\end{claim}
\begin{proof}
We start by showing that without loss of generality we may assume that $\vec \xi =\vec e_1$.
Let $\vec U$ be an orthogonal matrix such that $\vec U\vec \xi = \vec e_1$.
Since $P(\x)$ is a polynomial, we can apply the orthogonal transformation $\vec U$ to $\x$
and then use the  Hermite basis to represent it, that is
$P(\x)=\sum_{\alpha\in \N^d} c_{\alpha}H_\alpha(\vec U^\top\x)$. Our objective is equivalent to
\[ \E_{\x\sim \D_\x}[(\nabla P(\x) \cdot \vec \xi)^2 -(P(\x) - R(\x))^2 ]\geq 0 \;.\]
By the change of variables $\x \mapsto \vec U \vec x$, we have
\[
  \E_{(\vec U\x)\sim \D_\x}\left[(\nabla_{\vec U\vec x} P(\vec U\x) \cdot (\vec U \vec \xi))^2 -(P(\vec U\x) - R(\vec U\x))^2 \right]\geq 0\;,
\]
where we used the chain rule for the gradient.
Observe that
\[R(\x)=\E_{\vec z \sim \D_{\vec \xi}}[P((\vec I-\vec \xi \vec\xi^\top)\x + \vec z)] =
  \E_{\vec z \sim \D_{\vec \xi}}\left[\sum_{\alpha\in \N^d} c_{\alpha}H_\alpha(\vec U^\top(\vec I-\vec \xi \vec\xi^\top)\x 
  + \vec U^\top \vec z)\right]\;,\]
thus
$R(\vec U\x)=\E_{\vec z \sim \D_{\vec \xi}}[\sum_{\alpha\in \N^d} c_{\alpha}H_\alpha(\vec U^\top(\vec I-\vec \xi \vec\xi^\top)\vec U\x 
+ \vec U^\top\vec z)]$.
Moreover, $P(\vec U\vec x)=\sum_{\alpha\in \N^d} c_{\alpha}H_\alpha(\vec U^\top\vec U\x)$.
Using the fact that $\vec U^\top \vec U=\vec I$ and $\vec U^\top \xi \xi^\top\vec U=\vec e_1 \vec e_1^\top$, 
it follows that without loss of generality, we may assume that $\vec \xi =\vec e_1$.

To keep notation simple, we write $P(\x) = \sum_{\alpha \in \N^d} c_\alpha H_\alpha(\x)$.
Note that
\begin{equation}\label{eq:hermite-expansion}
P(\x)-\E_{\x_1\sim \D_{\vec e_1}}[P(\x_{\vec \xi^\perp}+\x_1)] =
\sum_{\alpha\in \N^d} c_{\alpha}H_\alpha(\x)-\sum_{\alpha\in \N^d} c_{\alpha}\E_{\x_1\sim \D_{\x_1}}[H_\alpha(\x)]
= \sum_{\alpha\in {\cal S}} c_{\alpha}H_\alpha(\x) \;,
\end{equation}
where $\cal S$ contains all the tuples for which the first index is
non-zero, this follows from the fact that $\E_{\x\sim \D}[H_\alpha(\x)]=0$.
Applying Parseval's identity yields
\begin{align}\label{eq:hermite-coef}
\E_{\x\sim \D_\x}[(P(\x)-\E_{\x_1\sim \D_{\vec e_1}}[P(\x_{\vec \xi^\perp}+\x_1)])^2] & =\sum_{\alpha\in {\cal S}}c_\alpha^2 \,.
\end{align}
From Fact~\ref{fct:hermite-gradient}, we have
\begin{align}
\label{eq:influence-final}
\E_{\x\sim \D_\x}[(\nabla P(\x) \cdot \vec e_1)^2] = \sum_{\alpha\in \N^d} \alpha_1 c_\alpha^2 \geq\sum_{\alpha\in {\cal S}} c_\alpha^2 \;,
\end{align}
where we used that $\alpha_1 \geq 1$ on the set $\cal S$.
Combining~\eqref{eq:hermite-coef} and \eqref{eq:influence-final} completes the proof.
\end{proof}

Adding and subtracting $R(\x) = \E_{\vec z \sim \D_\vec \xi}[P(\x_{\vec \xi^\perp}+ \vec z)]$, we get
\begin{align*}
\E_{\x\sim \D_\x}[(f(\x) - f_V(\x)) P(\x)] =
\E_{\x\sim \D_\x}[(f(\x)-f_V(\x))(P(\x)-R(\x_{\vec \xi^\perp}))]
+\E_{\x\sim \D_\x}[(f(\x)-f_V(\x)) R(\x_{\vec \xi^\perp})]\;.
\end{align*}
The second term is equal to zero, from the fact that
$\E_{\vec z \sim \D_{\vec \xi}} [f(\vec z +\x_{\vec \xi^\perp})-f_V(\x_{\vec \xi^\perp})]=0$.
From the Cauchy-Schwartz inequality, we get
\begin{align}
\E_{\x\sim \D_\x}[(f(\x)-f_V(\x))(P(\x)-R(\x_{\vec \xi^\perp}))]
 & \leq \sqrt{\E_{\x\sim \D_\x}[(f(\x)-f_V(\x))^2]\E_{\x\sim \D_\x}[(P(\x) - R(\x_{\vec \xi^\perp}))^2]} \nonumber      \\
 & \leq 2\sqrt{\E_{\x\sim \D_\x}[(P(\x)- R(\x_{\vec \xi^\perp}))^2]}\leq 2\sqrt{\eta}\;,\label{eq:bound-of-polynomials}
\end{align}
where we used Claim~\ref{clm:hermite-influence}. Using Equation~\eqref{eq:correlation}, we get that
$\E_{(\x,y)\sim \D}[(f(\x) - f_V(\x)) (y-P(\x))]\geq \eps-2\sqrt{\eta}$.
which completes the proof of Lemma~\ref{lem:correlation-with-new-f}.
\end{proof}

Our final claim replaces $f-f_V$ by its polynomial approximation.
By Hermite concentration arguments, we can show that we can use
a polynomial $Q(\x)$ of degree  $O(1/\eps^4)$. More specifically, we show:

\begin{lemma}\label{lem:polynomial-apx}
There exists a  polynomial $Q(\x)$ of degree $O(1/\eps^4)$ such that
$\E_{\x\sim \D_\x}[Q(\x)(y - P(\x))] \geq \eps/2 - 2 \sqrt{\eta}$ and $\E_{\x\sim \D_\x}[Q^2(\x)]\leq 9$.
\end{lemma}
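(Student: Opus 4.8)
The plan is to take $Q$ to be the orthogonal projection of $f-f_V$ onto $\mathcal{P}_k$, i.e., the truncation of its Hermite expansion to degree at most $k$, and to argue that this truncation changes essentially nothing. Write $g(\x)=f(\x)-f_V(\x)$. Since $f$ is $\{\pm 1\}$-valued and $f_V$ is a convex combination of $\{\pm 1\}$-valued functions, $|g(\x)|\le 2$ pointwise, hence $\E_{\x\sim\D_\x}[g(\x)^2]\le 4$. As $Q$ is a projection of $g$, Parseval's identity gives $\E_{\x\sim\D_\x}[Q(\x)^2]\le \E_{\x\sim\D_\x}[g(\x)^2]\le 4\le 9$, which establishes the second claimed bound.

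For the correlation bound I would decompose
\[
\E_{(\x,y)\sim \D}[Q(\x)(y-P(\x))]=\E_{(\x,y)\sim \D}[g(\x)(y-P(\x))]-\E_{(\x,y)\sim \D}[(g(\x)-Q(\x))(y-P(\x))]\,.
\]
The first term is at least $\eps-2\sqrt\eta$ by Lemma~\ref{lem:correlation-with-new-f}. For the second term, Cauchy--Schwarz gives $|\E_{(\x,y)\sim \D}[(g-Q)(y-P)]|\le \sqrt{\E_{\x\sim\D_\x}[(g-Q)^2]}\cdot\sqrt{\E_{(\x,y)\sim\D}[(y-P)^2]}$, and since $P$ is $O(\eps^3)$-close to the $L_2$-optimal degree-$k$ polynomial, $\E_{(\x,y)\sim\D}[(y-P(\x))^2]\le \min_{P'\in\mathcal{P}_k}\E_{(\x,y)\sim\D}[(y-P'(\x))^2]+O(\eps^3)\le \E_{(\x,y)\sim\D}[y^2]+O(\eps^3)\le 2$. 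So it remains to bound $\E_{\x\sim\D_\x}[(g-Q)^2]=\sum_{|\alpha|>k}(\widehat{g}(\alpha))^2$, the Hermite mass of $f-f_V$ at degrees above $k$.

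The heart of the proof — and the step I expect to be the main obstacle — is this tail bound, which is precisely where the choice $k=O(1/\eps^4)$ is forced. I would invoke the standard Hermite concentration bound for Gaussian halfspaces (see, e.g., \cite{KKMS:08}): a halfspace $\sign(\vec w\cdot\x+t)$ depends only on the one-dimensional projection along $\vec w$, and the univariate threshold function $u\mapsto\sign(\|\vec w\|_2\,u+t)$ (a function of a standard Gaussian) has Hermite weight $O(1/\sqrt k)$ at levels above $k$, uniformly in the threshold; hence $\sum_{|\alpha|>k}(\widehat{f}(\alpha))^2=O(1/\sqrt k)$, and the same bound holds for each halfspace $f(\x_{\vec\xi^\perp}+\vec z)$ appearing in the average defining $f_V$. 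Since $\widehat{f_V}(\alpha)=\E_{\vec z\sim\D_{\vec\xi}}[\widehat{f(\x_{\vec\xi^\perp}+\vec z)}(\alpha)]$, Jensen's inequality yields $\sum_{|\alpha|>k}(\widehat{f_V}(\alpha))^2\le \E_{\vec z\sim\D_{\vec\xi}}\big[\sum_{|\alpha|>k}(\widehat{f(\x_{\vec\xi^\perp}+\vec z)}(\alpha))^2\big]=O(1/\sqrt k)$. Applying the triangle inequality to the degree-$>k$ parts, $\E_{\x\sim\D_\x}[(g-Q)^2]=\sum_{|\alpha|>k}(\widehat{f}(\alpha)-\widehat{f_V}(\alpha))^2=O(1/\sqrt k)$. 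With $k=C/\eps^4$ this is $O(\eps^2/\sqrt C)$, so $|\E_{(\x,y)\sim\D}[(g-Q)(y-P)]|=O(\eps/C^{1/4})\le \eps/2$ for $C$ a sufficiently large constant. Combining, $\E_{(\x,y)\sim\D}[Q(\x)(y-P(\x))]\ge (\eps-2\sqrt\eta)-\eps/2=\eps/2-2\sqrt\eta$, completing the proof.
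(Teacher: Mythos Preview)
Your proof is correct and follows essentially the same approach as the paper: both take $Q$ to be the degree-$k$ Hermite truncation of $f-f_V$ (the paper writes it as $S-\E_{\vec z}[S(\x_{\vec\xi^\perp}+\vec z)]$ with $S$ the truncation of $f$, which is the same polynomial), invoke the $O(1/\sqrt{k})$ Hermite tail bound for LTFs together with Jensen for the $f_V$ piece, and finish with Cauchy--Schwarz. Your projection/Parseval argument for the $L_2$ norm of $Q$ is marginally cleaner than the paper's reverse triangle inequality step (yielding $\le 4$ rather than $\le 9$), but otherwise the arguments coincide.
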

\begin{proof}
We will require the following result from \cite{KOS:08} which bounds the Hermite concentration of LTFs.
\begin{fact}[Theorem 15 of~\cite{KOS:08}]\label{fct:concetration-hermite}
Let $f\in\cal C$, and let $S$ be the Hermite expansion up to degree $k$
of $f$, i.e., $S(\x) = \sum_{|\alpha| \leq k} \hat{f}(\alpha) H_\alpha(\x)$. Then
$\E_{\x \sim \normal(\vec 0,\vec I)}[(S(\x)-f(\x))^2]=O(1/\sqrt k)$.
\end{fact}
For any polynomial $Q(\x)$, we have
\begin{align}
\E_{\x\sim \D_\x}[Q(\x)(y - P(\x))] & = \E_{\x\sim \D_\x}  [(Q(\x) + (f(\x)-f_V(\x)) - (f(\x) - f_V(\x)) )(y-P(\x)) ]\nonumber
\\   & \geq   \eps - 2 \sqrt{\eta} + \E_{\x\sim \D_\x}[(Q(\x) - (f(\x) - f_V(\x)) )(y-P(\x)) ]\label{eq:connective_equ}\;,
\end{align}
where we used Lemma~\ref{lem:correlation-with-new-f}.
By choosing $Q(\x)=S(\x)-\E_{\vec z \sim \D_{\vec \xi }}[S(\x_{\xi^\perp} +
    \vec z)]$, where we denote by $S(\x)$ the Hermite expansion of $f$ truncated up to
degree $k$, $S(\x) = \sum_{|\alpha| \leq k } \hat{f}(\alpha) H_\alpha(\x)$, we will show that
$
  \E_{\x\sim \D_\x}[(f(\x)-f_V(\x)-Q(\x))^2]=O(1/\sqrt{k})\;.
$
Using the elementary inequality $(a+b)^2\leq 2a^2 + 2b^2$, we get that
\[
  \E_{\x\sim \D_\x}[(f(\x)-f_V(\x)-Q(\x))^2]\leq 2\E_{\x\sim \D_\x}[(f(\x)-S(\x))^2] + 2\E_{\x\sim \D_\x}[(f_V(\x)-\E_{\vec z \sim \D_{\vec \xi }}[S(\x_{\xi^\perp})])^2]\;.
\]
Moreover, by Jensen's inequality, it holds that
\[
  \E_{\x\sim \D_\x}[(f_V(\x)-\E_{\vec z \sim \D_{\vec \xi }}[S(\x_{\xi^\perp}+\vec z)])^2]\leq \E_{\x\sim \D_\x}[(f(\x)-S(\x))^2]=O(1/\sqrt{k})\;,
\]
where in the last equality we used Fact~\ref{fct:concetration-hermite}.
Note that from the reverse triangle inequality, it holds that
\begin{equation}\label{eq:norm-bound}
\sqrt{\E_{\x\sim\D}[Q^2(\x)]}\leq \sqrt{  \E_{\x\sim \D_\x}[(f(\x)-f_V(\x))^2]} +O(1/k^{1/4})\leq 2+O(1/k^{1/4})\;.
\end{equation}
Choosing $k=O(1/\eps^4)$ and applying
Cauchy-Schwartz to the Equation~\eqref{eq:connective_equ}, we get
\begin{align*}
\E_{(\x,y)\sim \D}[Q(\x)(y - P(\x))] & \geq
\eps - 2 \sqrt{\eta} - \sqrt{\E_{\x\sim \D_\x}[(Q(\x) - (f(\x) - f_V(\x)))^2]\E_{(\x,y)\sim \D}[(y-P(\x))^2]}
\\ & \geq\eps/2 - 2 \sqrt{\eta} \;,
\end{align*}
where we used the fact that $\E_{(\x,y)\sim \D}[(y-P(\x))^2]\leq \E_{(\x,y)\sim \D}[(y-0)^2]\leq 1$;
the polynomial $P(\x)$ is closer to $y$ than the trivial polynomial $0$. For this choice of $k$, Equation~\eqref{eq:norm-bound} gives $\E_{\x\sim\D}[Q^2(\x)]\leq 9$.
This completes the proof of Lemma~\ref{lem:polynomial-apx}.
\end{proof}
By choosing $\eta=\eps^2/64$, Lemma~\ref{lem:polynomial-apx} contradicts our
assumption that $P(\x)$ is $O(\eps^3)$-close to the polynomial $P'(\x)$ that
minimizes the $\E_{(\x,y)\sim \D}[(y-P'(\x))^2]$, see Claim~\ref{clm:contradiction}. This completes the proof
of Proposition~\ref{prop:structural}.

 \section{Agnostic Proper PTAS for Homogeneous Halfspaces}\label{sec:ptas}

In this section, we provide a proper PTAS for agnostically learning homogeneous halfspaces,
thereby establishing Theorem~\ref{thm:proper-ptas}.
Concretely, let $f(\x) = \sgn(\vec w^* \cdot \x)$ be an optimal halfspace, i.e.,
$\opt=\min_{h\in {{\cal C}_0}}\err_{0-1}^\D(h) = \err_{0-1}^\D(f) $.
For any $\gamma, \eps \in (0,1)$ our algorithm computes a halfspace $h(\x)$
such that $\err_{0-1}^\D(h) \leq (1+ \gamma) \opt + \eps $.

The pseudocode of our algorithm is given in Algorithm~\ref{alg:proper-ptas}.

\begin{algorithm}[H]
  \caption{Agnostic Proper PTAS for Homogeneous Halfspaces} \label{alg:proper-ptas}
  \begin{algorithmic}[1]
    \Procedure{Agnostic-Proper-PTAS}{$\gamma, \eps, \delta, \D$} \Comment{$C$, $C'$ are absolute constants}\\
    \textbf{Input:} $\gamma>0$, $\eps, \delta>0$, and  distribution $\D$\\
    \textbf{Output:} A hypothesis $h\in{\cal C}$ such that $\err_{0-1}^\D(h)\leq (1+\gamma)\opt+\eps$ with probability $1-\delta$
    \State $\sigma\gets C' \, \opt/\gamma$
    \State Let $\vec w_0$ be the normal vector of the homogeneous halfspace computed using Lemma~\ref{lem:initialization}
    \State \textbf{If} $\eps> C \, \opt$:
    \State \qquad \textbf{return} $h_0(\x)=\sgn(\vec w_0 \cdot \x)$
    \State Let $\D_A$ be the distribution $\D$ after applying rejection sampling with $\vec w_0$ and $\sigma$, see Fact~\ref{lem:rejection-sampling}
    \State Run Algorithm~\ref{alg:proper-learner} on $\D_A$ with accuracy $\Theta(\gamma^2)$ and confidence $\delta$ to get $(\vec w, t)$
    \State  \textbf{return} $h(\x)=\sgn(\vec w \cdot \x+t)$
    \EndProcedure
  \end{algorithmic}
\end{algorithm}

\subsection{Analysis of Algorithm~\ref{alg:proper-ptas}: Proof of Theorem~\ref{thm:proper-ptas}} \label{ssec:ptas-analysis}

The following lemma provides us with an efficient algorithm that learns a halfspace within error $O(\opt)$
in polynomial time.  This halfspace serves as the initialization of Algorithm~\ref{alg:proper-ptas}: we will use
it to perform localization around it.

\begin{lemma}[\cite{ ABL17,DKTZ20c}]\label{lem:initialization}
Let $\D$ be a distribution on $\R^d\times\{\pm 1\}$ whose $\x$-marginal is $\normal(\vec 0, \vec I)$.
There is an algorithm that draws $N=O((d/\eps^4)\log(1/\delta))$ samples from $\D$, runs in time $\poly(N,d)$,
and outputs a hypothesis $h \in \cal C$ such that, with probability at least $1-\delta$,
we have $\err_{0-1}^\D(h)\leq O(\opt)+\eps$.
\end{lemma}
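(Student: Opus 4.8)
My plan is to invoke the constant-factor agnostic learners of~\cite{ABL17,DKTZ20c}; for completeness I sketch the non-convex optimization approach of~\cite{DKTZ20c}. Since the target is a homogeneous halfspace, it suffices to search over unit normal vectors, so fix $\wstar\in\Sp^{d-1}$ with $\miscl(\sgn(\wstar\cdot\x)) = \opt$. The idea is to replace the discontinuous $0$-$1$ loss by a non-smooth surrogate and minimize it over the sphere. Concretely, for a small absolute constant $\lambda\in(0,1/2)$ define the \emph{LeakyReLU loss}
\[
  \mathcal{L}_\lambda(\bw) \eqdef \E_{(\x,y)\sim\D}\big[\LR_\lambda\big(-y\,(\bw\cdot\x)\big)\big], \qquad \LR_\lambda(t) \eqdef \max\{t,\lambda t\},
\]
and work with unit vectors $\bw$. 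The heart of the argument is the structural claim that \emph{any approximate stationary point of $\mathcal{L}_\lambda$ on the sphere is already a good classifier}: if $\bw\in\Sp^{d-1}$ and the tangential component $\proj_{\bw^\perp}\nabla\mathcal{L}_\lambda(\bw)$ has $\ell_2$-norm at most $\poly(\eps)$, then $\miscl(\sgn(\bw\cdot\x)) = O(\opt) + \eps$.

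To prove this structural claim I would exploit Gaussian-specific identities. Let $\theta = \theta(\bw,\wstar)$. Decomposing the label $y$ into the ``clean'' label $\sgn(\wstar\cdot\x)$ and an adversarial flip that occurs with probability $\opt$, a two-dimensional Gaussian computation in $\mathrm{span}(\bw,\wstar)$ shows that the component of $-\nabla\mathcal{L}_\lambda(\bw)$ along the tangential direction pointing from $\bw$ toward $\wstar$ is at least $c_1\sin\theta - c_2\,\opt$, where $c_1,c_2>0$ depend only on $\lambda$: the clean part contributes a $\Theta(\sin\theta)$ drift, while the adversarial part contributes at most $O(\opt)$ since a single flipped example perturbs the gradient by at most $\|\x\|_2$ and this is controlled after projection onto the relevant two-dimensional subspace (Cauchy--Schwarz plus Gaussian moment bounds). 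Hence a $\poly(\eps)$-small gradient forces $\sin\theta = O(\opt)+\poly(\eps)$, and the same computation rules out $\bw$ being close to $-\wstar$ (there the loss is large and the drift points away). Finally the exact Gaussian formula $\pr_{\x}[\sgn(\bw\cdot\x)\ne\sgn(\wstar\cdot\x)] = \theta/\pi$ and the triangle inequality give $\miscl(\sgn(\bw\cdot\x)) \le \opt + \theta/\pi = O(\opt)+\eps$.

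Given the structural claim, the algorithm runs projected stochastic gradient descent on $\mathcal{L}_\lambda$ over $\Sp^{d-1}$. Since $\LR_\lambda$ is $1$-Lipschitz and $\normal(\vec 0,\vec I)$ has bounded moments, stochastic subgradients have bounded norm, so $\poly(d,1/\eps)$ iterations with a suitably small step size reach a point whose expected tangential gradient norm is $\poly(\eps)$. A standard uniform-convergence argument over an $\eps$-net of $\Sp^{d-1}$ shows that $N = O((d/\eps^4)\log(1/\delta))$ fresh samples suffice to estimate $\mathcal{L}_\lambda$ and its gradient to the required accuracy simultaneously over all candidate vectors, so with probability $1-\delta$ we output a unit vector satisfying the hypothesis of the structural claim; returning $h(\x)=\sgn(\bw\cdot\x)$ completes the proof, and the $\poly(N,d)$ running time is immediate. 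The main obstacle is the structural claim: carefully carrying out the Gaussian anti-concentration and moment estimates under adversarial label noise, and calibrating $\lambda$ together with the stopping tolerance so that the hidden constant in $O(\opt)$ stays absolute. The full details can be found in~\cite{DKTZ20c}.
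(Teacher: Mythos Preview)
The paper does not prove this lemma at all: it is stated with the citation \cite{ABL17,DKTZ20c} and used as a black box, with no accompanying argument. So there is nothing in the paper to compare your proposal against; your sketch of the non-convex optimization approach from \cite{DKTZ20c} is a reasonable outline of one of the cited results and goes well beyond what the paper itself supplies.
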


The following lemma provides a ``soft" localization procedure.  Instead
of performing rejection sampling inside a band around $\vec w_0$, i.e.,
$|\x \cdot \vec w_0| < \sigma$, we perform rejection sampling with
weight $ e^{-\vec w_0 \cdot \x (\sigma^{-2}-1)}$: samples that are far
from the halfspace $\vec w_0$ are accepted with very  small probability.
Using this rejection sampling process, we get that the distribution conditional on
acceptance is a normal distribution. 
This allows us to use our proper learning algorithm of Section~\ref{sec:structural}.

\begin{lemma}[Lemma 4.7 of \cite{DKS18a}]\label{lem:rejection-sampling}
Let $\vec w_0\in \R^d$ be a unit vector and let $\D$ be a distribution on
$\R^d\times \{\pm 1\}$ whose $\x$-marginal is $\normal(\vec 0,\vec I)$. Fix
$\sigma \in (0,1)$ and define the distribution $\D_{A}$ as follows: draw a
sample $(\x, y)$ from $\D$ and accept it with probability
$e^{-(\vec w_0 \cdot \x )^2(\sigma^{-2}-1)/2}$.
$\D_A$ is the distribution of $(\x,y)$ conditional on acceptance.
The $\x$-marginal of $\D_{A}$ is $\normal(\vec 0, \vec \Sigma)$,
where $\vec \Sigma = \vec I - (1-\sigma^2) \vec w_0 \vec w_0^\top$, and the
probability that some point will be accepted is $\sigma$.
\end{lemma}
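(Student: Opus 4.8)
The plan is a direct Gaussian computation, after reducing to the case $\vec w_0 = \vec e_1$. By rotational invariance of the standard Gaussian $\normal(\vec 0, \vec I)$, we may apply an orthogonal change of coordinates mapping $\vec w_0$ to $\vec e_1$; since the acceptance weight $e^{-(\vec w_0 \cdot \x)^2(\sigma^{-2}-1)/2}$ depends on $\x$ only through $\vec w_0 \cdot \x$, in the new coordinates it becomes $e^{-\x_1^2(\sigma^{-2}-1)/2}$, a function of $\x_1$ alone. Note also that this weight is always at most $1$, since $\sigma \in (0,1)$ forces the exponent to be non-positive, so the rejection sampling step is well-defined.

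First I would compute the unnormalized density of the $\x$-marginal of $\D_A$. It is proportional to $e^{-\|\x\|_2^2/2} \cdot e^{-\x_1^2(\sigma^{-2}-1)/2} = e^{-\x_1^2/(2\sigma^2)} \cdot \prod_{i=2}^d e^{-\x_i^2/2}$. This factorizes over coordinates, so under $\D_A$ the coordinates $\x_2, \dots, \x_d$ are independent $\normal(0,1)$ while $\x_1 \sim \normal(0, \sigma^2)$, independently of the rest. Hence the $\x$-marginal of $\D_A$ is $\normal(\vec 0, \mathrm{diag}(\sigma^2, 1, \dots, 1))$, which, transforming back to the original basis, is exactly $\normal(\vec 0, \vec I - (1-\sigma^2)\vec w_0 \vec w_0^\top)$ (contracting only the $\vec w_0$-direction variance from $1$ to $\sigma^2$).

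Second, the acceptance probability is $\pr[\text{accept}] = \E_{\x \sim \normal(\vec 0, \vec I)}[e^{-\x_1^2(\sigma^{-2}-1)/2}] = \E_{Z \sim \normal(0,1)}[e^{-a Z^2/2}]$ with $a = \sigma^{-2} - 1 > 0$. Using the standard one-line identity $\E_{Z\sim\normal(0,1)}[e^{-aZ^2/2}] = (1+a)^{-1/2}$ for $a > -1$ (completing the square in the Gaussian integral), this equals $(1 + \sigma^{-2} - 1)^{-1/2} = \sigma$. Finally, since the acceptance event depends only on $\x$ and not on $y$, the conditional law of $y$ given $\x$ is identical under $\D$ and $\D_A$, so $\D_A$ is a legitimate labeled distribution whose $\x$-marginal is as claimed.

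There is essentially no obstacle here: the argument is a routine factorization of a Gaussian density plus a scalar Gaussian moment-generating-function evaluation. The only points that merit (minor) care are the initial reduction to $\vec w_0 = \vec e_1$ via rotational invariance and verifying that $\vec I - (1-\sigma^2)\vec w_0 \vec w_0^\top$ is indeed the coordinate-free form of $\mathrm{diag}(\sigma^2, 1, \dots, 1)$.
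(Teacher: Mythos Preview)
Your proposal is correct. The paper does not actually supply its own proof of this lemma; it is simply quoted as Lemma~4.7 of \cite{DKS18a} and used as a black box. Your direct Gaussian computation --- reduce to $\vec w_0 = \vec e_1$ by rotational invariance, factorize the weighted density, and read off both the conditional covariance and the acceptance probability via $\E_{Z\sim\normal(0,1)}[e^{-aZ^2/2}] = (1+a)^{-1/2}$ --- is exactly the standard argument and goes through without issue.
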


The main technical tool of this section is the following proposition.  It shows
that, if a halfspace performs reasonably well with respect to the ``localized"
distribution $\normal(\vec 0, \Sigma)$ of Lemma~\ref{lem:rejection-sampling},
then it cannot be very biased or have very large angle with the initial guess
$\vec w_0$.  This allows us to prove that the halfspace that we find using the
``localized" distribution will perform well over the initial Gaussian,
$\normal(\vec 0, \vec I)$.

\begin{proposition}\label{prop:error-acc}
Let $\vec w_0 \in \R^d$ be a unit vector and let $\alpha, \gamma \in (0,1/4]$.
Let $\D_A$ be defined as in Fact~\ref{lem:rejection-sampling},
i.e., its $\x$-marginal is $\normal(\vec 0, \vec \Sigma)$,
where $\vec \Sigma = \vec I - (1-\sigma^2) \vec w_0 \vec w_0^T$ for some $\sigma \in (0, \cos(\pi \alpha))$.
Moreover, assume that $\pr_{(\x, y) \sim \D}[ \sgn(\vec w_0 \cdot \x) \neq y ] \leq \alpha/3$.
There exists an algorithm that runs in time $d^{\poly(1/(\gamma\alpha))} \log(1/\delta)$ and
with probability at least $1 - \delta$ returns a halfspace $h(\x) = \sgn(\vec w \cdot \x + t)$ such that
$|t| = O(\sigma \alpha)$, $\theta(\vec w, \vec w_0) = O(\sigma \alpha)$.
Moreover, it holds
\[
    \pr_{(\x, y) \sim \D_A}[ h(\x) \neq y ] \leq
    \min_{\bar{h}\in {\cal C}} \pr_{(\x, y) \sim \D_A}[ \bar h(\x) \neq y ] + \alpha \gamma
    \,.
  \]
\end{proposition}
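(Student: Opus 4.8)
The plan is to run Algorithm~\ref{alg:proper-learner} on the localized distribution $\D_A$ and then argue that the returned halfspace has the claimed structural properties (small bias $|t|$, small angle with $\vec w_0$), obtaining the error bound essentially for free from the guarantee of Theorem~\ref{thm:proper-learner} applied to $\D_A$. First I would observe that the $\vec x$-marginal of $\D_A$ is $\normal(\vec 0, \vec\Sigma)$ with $\vec\Sigma = \vec I - (1-\sigma^2)\vec w_0\vec w_0^\top$, which after the linear change of variables $\vec x \mapsto \vec\Sigma^{-1/2}\vec x$ becomes the standard Gaussian; under this reparametrization a halfspace stays a halfspace, so Algorithm~\ref{alg:proper-learner} run with accuracy parameter $\Theta(\alpha\gamma)$ returns $(\vec w, t)$ with $\pr_{\D_A}[\sgn(\vec w\cdot\x + t)\neq y] \le \min_{\bar h\in\cal C}\pr_{\D_A}[\bar h(\x)\neq y] + \alpha\gamma$, in time $d^{\poly(1/(\alpha\gamma))}\log(1/\delta)$ as required. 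That takes care of the error statement directly.

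The substance is the structural part. The key point is that $\sgn(\vec w_0\cdot\x)$ itself is a competitor under $\D_A$: since rejection sampling only reweights $\x$ and never touches the labels, and since acceptance probability is a function of $\vec w_0\cdot\x$ alone, I would bound $\pr_{\D_A}[\sgn(\vec w_0\cdot\x)\neq y]$ in terms of $\pr_{\D}[\sgn(\vec w_0\cdot\x)\neq y] \le \alpha/3$ — concretely, $\pr_{\D_A}[E] = \E_{\D}[\1_E \cdot w(\x)]/\E_\D[w(\x)]$ where $w$ is the acceptance weight and $\E_\D[w] = \sigma$ by Lemma~\ref{lem:rejection-sampling}; one checks that the misclassification event, restricted to the region near the $\vec w_0$-hyperplane where $w$ is largest, still has $\D_A$-probability $O(\alpha)$ (in fact one can argue $\pr_{\D_A}[\sgn(\vec w_0\cdot\x)\neq y] \le \alpha/3$ still, since conditioning on a slab through the decision boundary can only help a halfspace through that boundary — this monotonicity needs a short argument). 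Hence the optimal $\D_A$-error over $\cal C$ is $\le \alpha/3$, so $\vec w$ achieves $\D_A$-error $\le \alpha/3 + \alpha\gamma \le \alpha/2$. Now I use the converse direction: a halfspace $\sgn(\vec w\cdot\x+t)$ with large bias or large angle relative to $\vec w_0$ must \emph{disagree} substantially with $\sgn(\vec w_0\cdot\x)$ under $\D_A$. Because the $\D_A$-marginal is compressed by a factor $\sigma$ along $\vec w_0$, the relevant scale for both the threshold and the angle is $\sigma$: writing $\vec x = \vec x_{\vec w_0} + \vec x_{\vec w_0^\perp}$, the component $\vec w_0\cdot\x$ has standard deviation $\sigma$ under $\D_A$, so a threshold $t$ corresponds to $t/\sigma$ standard deviations, and the disagreement region between $\sgn(\vec w\cdot\x+t)$ and $\sgn(\vec w_0\cdot\x)$ has $\D_A$-mass $\Omega(\min\{|t|/\sigma, \theta(\vec w,\vec w_0)/\sigma, 1\})$ by a Gaussian anti-concentration / geometry estimate (analogous to Fact~\ref{fct:approximation-facts} but in the rescaled coordinates). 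Combining with triangle inequality $\pr_{\D_A}[\sgn(\vec w\cdot\x+t)\neq\sgn(\vec w_0\cdot\x)] \le \pr_{\D_A}[\sgn(\vec w\cdot\x+t)\neq y] + \pr_{\D_A}[\sgn(\vec w_0\cdot\x)\neq y] = O(\alpha)$ forces $|t|/\sigma = O(\alpha)$ and $\theta(\vec w,\vec w_0)/\sigma = O(\alpha)$, i.e. $|t| = O(\sigma\alpha)$ and $\theta(\vec w,\vec w_0) = O(\sigma\alpha)$.

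The main obstacle I anticipate is the geometric lower bound on the $\D_A$-disagreement between two halfspaces in the \emph{anisotropic} Gaussian $\normal(\vec 0,\vec\Sigma)$, and making the dependence on $\sigma$ come out correctly: one must verify that when the angle between $\vec w$ and $\vec w_0$ is $\phi$, the fraction of $\normal(\vec 0, \vec\Sigma)$-mass caught between the two hyperplanes through the origin is $\Theta(\phi/\sigma)$ for $\phi \lesssim \sigma$ (rather than $\Theta(\phi)$ as in the isotropic case), which uses that the ``wedge'' between the hyperplanes is thin precisely in the direction where $\vec\Sigma$ is large, i.e. along $\vec w_0^\perp$, while being compressed along $\vec w_0$; and similarly that adding a bias $t$ shifts the decision region by $\approx t/\sigma$ in units of the $\vec w_0$-standard deviation. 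A secondary nuisance is confirming that $\sgn(\vec w_0\cdot\x)$ remains a valid near-optimal competitor under $\D_A$ (the monotonicity-of-error-under-band-conditioning claim); this should follow because conditioning on $|\vec w_0\cdot\x|$ being small does not change the conditional label distribution given $\x$ and only concentrates mass where $\vec w_0$ has the same sign pattern, but I would spell it out carefully. Everything else is routine: invoking Theorem~\ref{thm:proper-learner} after the linear reparametrization, and the runtime bookkeeping.
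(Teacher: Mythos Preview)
Your plan coincides with the paper's: whiten $\D_A$ by $\vec\Sigma^{-1/2}$, run Algorithm~\ref{alg:proper-learner} at accuracy $\Theta(\alpha\gamma)$ to get the error guarantee, then deduce $|t|,\theta(\vec w,\vec w_0)=O(\sigma\alpha)$ from $\pr_{\D_A}[h\neq\sgn(\vec w_0\cdot\x)]=O(\alpha)$ via lower bounds on the anisotropic-Gaussian disagreement. The paper carries out the last step with two separate claims --- Claim~\ref{clm:angle-lower-bound} shows $\pr_{\normal(\vec 0,\vec I)}[h_1\neq h_0]\geq\theta(\vec u,\vec v)/\pi$ whenever $h_0$ is homogeneous (irrespective of $h_1$'s bias), and Claim~\ref{clm:hermite-lower-bound} bounds the disagreement below by the normalized bias gap --- followed by an explicit algebraic translation from $\theta(\vec\Sigma^{1/2}\vec w,\vec\Sigma^{1/2}\vec w_0)\leq\pi\alpha$ back to $\theta(\vec w,\vec w_0)\leq 4\sigma\alpha$. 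One slip in your write-up: the lower bound must be $\Omega(\max\{|t|/\sigma,\theta/\sigma,1\})$, not $\min$; with $\min$ you would only conclude that \emph{one} of the two quantities is $O(\alpha)$, whereas you need both. In practice this just means proving the angle and bias lower bounds separately, exactly as the paper does.

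The place your sketch actually fails is the proposed ``monotonicity'' for transferring $\pr_{\D}[\sgn(\vec w_0\cdot\x)\neq y]\leq\alpha/3$ to $\D_A$. Your intuition (``conditioning on a slab through the decision boundary can only help a halfspace through that boundary'') is false: soft-localizing around the $\vec w_0$ hyperplane up-weights precisely the region where misclassifications of $\sgn(\vec w_0\cdot\x)$ can concentrate, and can inflate its error by up to a factor $1/\sigma$. The paper's proof does not attempt any such derivation; it simply writes ``using our assumption for the misclassification error of $\vec w_0$ with respect to $\D_A$'' and proceeds with $\pr_{\D_A}[\sgn(\vec w_0\cdot\x)\neq y]\leq\alpha$ as the working hypothesis. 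You should follow suit rather than trying to justify a monotonicity that does not hold.
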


\noindent The proof of Proposition is quite technical and is deferred 
to Section~\ref{ssec:prop-error-acc}.

The following lemma is similar to the localization lemma (Lemma 2.1) 
given in~\cite{Daniely15}. We need to adapt it to work in our setting, 
where we use a soft localization procedure (see Lemma~\ref{lem:rejection-sampling}), 
as opposed to a hard one.

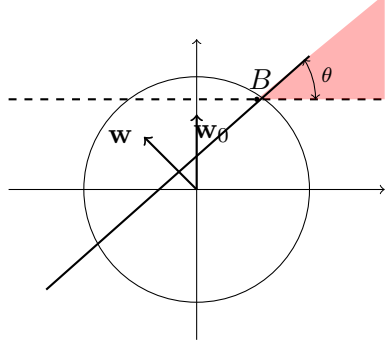
\begin{figure}

  \begin{center}

    \begin{tikzpicture}[scale=1]
      \coordinate (start) at (2.5,1.2);
      \coordinate (center) at (2.9/5,1.2);
      \coordinate (end) at (2.5-1,4.44/5 * 2.5 -2.22/5);

      \draw[black,dashed, thick](-2.5,1.2) -- (2.5,1.2);
      \draw[fill=red, opacity=0.3,draw=none]  (0.8,1.2)--(2.5,1.2)--(2.5,2.6)--cycle;
\draw[->] (-2.5,0) -- (2.5,0) node[anchor=north west,black] {};
      \draw[->] (0,-2) -- (0,2) node[anchor=south east] {};
      \draw[thick,->] (0,0) -- (-0.7,0.7) node[anchor= south east,below,left=0.1mm] {$\vec w$};
      \draw[black,thick] (-2,-1.332) -- (2.5-1,4.44/5 * 2.5 -2.22/5);
      \draw[thick ,->] (0,0) -- (0,1) node[right=2mm,below] {$\vec w_0$};
\draw (0.85,1.2)node[above] {$B$};
      \draw (0,0) circle [radius=1.5];
      \fill (0.8,1.2) circle [radius=1pt];
\pic [draw, <->, angle radius=10mm, angle
        eccentricity=1.2, "$\scriptstyle\theta$"] {angle = start--center--end};
    \end{tikzpicture}
  \end{center}
  \caption{Localization technique, Lemma~\ref{lem:localization}}  \label{fig:localization}
\end{figure}

\begin{lemma}[Gaussian Localization]\label{lem:localization} 
Let $R(\x)$ be the event that the sample $\x$ is rejected 
from the rejection sampling procedure of Lemma~\ref{lem:rejection-sampling} 
with vector $\vec w_0$ and $\sigma=\Theta\left(\frac{\opt}{\alpha}\right)$.
Let $h(\x)=\sign(\vec w_0 \cdot \x)$, $h'(\x)=\sign(\vec w \cdot \x +t)$ be halfspaces
with $t=O(\sigma\alpha)$ and $\theta(\vec w_0,\vec w)=O(\sigma \alpha)$.
Then, $\pr_{\x \sim \D_\x}[ h(\bx) \neq h'(\x),R(\x)] =O(\alpha\, \opt)$.
\end{lemma}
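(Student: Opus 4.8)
The plan is to bound the probability $\pr_{\x \sim \D_\x}[ h(\x) \neq h'(\x), R(\x)]$ by splitting on whether the event $h(\x) \neq h'(\x)$ is caused by the small angle between $\vec w_0$ and $\vec w$ or by the small bias $t$. First I would recall that, by the union bound, $\Ind[h(\x) \neq h'(\x)] \leq \Ind[\sgn(\vec w_0 \cdot \x) \neq \sgn(\vec w \cdot \x)] + \Ind[\sgn(\vec w \cdot \x) \neq \sgn(\vec w \cdot \x + t)]$, so it suffices to bound the probability of each of these two events intersected with $R(\x)$ by $O(\alpha\,\opt)$. Throughout, I will use the key fact from Lemma~\ref{lem:rejection-sampling} that the \emph{rejection} probability $\pr[R(\x)]$ given a value of $\vec w_0 \cdot \x = u$ is $1 - e^{-u^2(\sigma^{-2}-1)/2}$, which is very small when $|u| \lesssim \sigma$ and close to $1$ when $|u| \gg \sigma$; in other words, rejection essentially only happens far from the hyperplane $\vec w_0^\perp$.

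For the angle term, I would observe that the disagreement region $\{\sgn(\vec w_0 \cdot \x) \neq \sgn(\vec w \cdot \x)\}$ is a ``double wedge'' of angular width $\theta = \theta(\vec w_0, \vec w) = O(\sigma\alpha)$ around the hyperplane $\vec w_0^\perp$; a point in this wedge at distance roughly $r$ from the origin satisfies $|\vec w_0 \cdot \x| = O(r\theta)$. Decomposing $\x$ into its component along $\vec w_0$ (a standard Gaussian $u$) and the orthogonal part (whose norm concentrates around $\sqrt{d}$, but more carefully whose relevant behavior is captured by a constant-scale variable since only the ratio matters), I would argue that on the disagreement wedge $|u| = |\vec w_0 \cdot \x|$ is itself of order at most $O(\sigma\alpha \cdot \|\x_{\vec w_0^\perp}\|)$, and then integrate: $\pr[\text{wedge}, R(\x)] = \E[\Ind[\text{wedge}]\,(1 - e^{-u^2(\sigma^{-2}-1)/2})]$. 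Since on the wedge $u^2/\sigma^2 \lesssim \alpha^2 \|\x_{\vec w_0^\perp}\|^2$, one uses $1 - e^{-s} \leq s$ to bound the rejection factor by $O(\alpha^2 \|\x_{\vec w_0^\perp}\|^2)$, pays an $O(\theta) = O(\sigma\alpha)$ factor for the wedge itself (Fact~\ref{fct:approximation-facts}-style bound, or direct Gaussian anti-concentration), and takes expectation of $\|\x_{\vec w_0^\perp}\|^2 = O(d)$ — wait, this would give a $d$-dependence, so instead the correct accounting is to use that $|u| \leq |t'|$-type truncation makes the relevant orthogonal mass $O(1)$; more precisely, I bound $\pr[\text{wedge}, R] \leq \pr[|\vec w_0\cdot\x| \geq \sigma] \cdot \pr[\text{wedge} \mid |\vec w_0 \cdot\x|\geq\sigma]$ and note the wedge with $|\vec w_0\cdot \x|\geq \sigma$ forces $\|\x_{\vec w_0^\perp}\| \gtrsim \sigma/\theta = \Omega(1/\alpha)$, an event of Gaussian probability $e^{-\Omega(1/\alpha^2)}$, which is certainly $O(\alpha\,\opt)$ once we also recall $\sigma = \Theta(\opt/\alpha)$ so $\opt = \Theta(\sigma\alpha)$. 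I expect the cleanest route is this last one: the intersection event is contained in a thin wedge AND far from $\vec w_0^\perp$, forcing the orthogonal component to be large, which is exponentially unlikely.

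For the bias term, the disagreement region $\{\sgn(\vec w \cdot \x) \neq \sgn(\vec w \cdot \x + t)\}$ is a slab of width $|t| = O(\sigma\alpha)$ perpendicular to $\vec w$; since $\theta(\vec w_0,\vec w) = O(\sigma\alpha)$ is small, on this slab we have $|\vec w_0 \cdot \x| \leq |\vec w \cdot \x| + |(\vec w_0 - \vec w)\cdot \x| \leq |t| + \sin\theta \cdot \|\x\| = O(\sigma\alpha) + O(\sigma\alpha)\|\x\|$, so again rejection requires $\|\x\|$ to be of order at least $1/\alpha$, which is exponentially unlikely; alternatively, directly: $\pr[\text{slab},R] \leq \E[\Ind[\text{slab}](1-e^{-(\vec w_0\cdot\x)^2(\sigma^{-2}-1)/2})]$ and on the slab $(\vec w_0\cdot\x)^2/\sigma^2 = O(\alpha^2(1+\|\x\|^2))$, so $1 - e^{-(\cdots)} = O(\alpha^2(1+\|\x\|^2))$, while $\pr[\text{slab}] = O(|t|) = O(\sigma\alpha)$; combining (and controlling the correlation between the slab indicator and $\|\x\|^2$ by Cauchy--Schwarz, using that $\Ind[\text{slab}]$ has $L_2$ norm $O(\sqrt{\sigma\alpha})$ and $\|\x\|^2$ has bounded moments) yields $O(\sigma\alpha \cdot \alpha^2) = O(\sigma\alpha^3)$, which is $O(\alpha\,\opt)$ since $\opt = \Theta(\sigma\alpha)$ gives $\alpha\,\opt = \Theta(\sigma\alpha^2)$, and $\sigma\alpha^3 \leq \sigma\alpha^2$. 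The main obstacle I anticipate is making the angle-term bound clean: one has to be careful that the ``thin wedge far from the hyperplane'' event is correctly seen as forcing a large orthogonal Gaussian component — i.e., keeping track of which direction the smallness of $\sigma = \Theta(\opt/\alpha)$ pushes things — and ensuring the final bound is genuinely $O(\alpha\,\opt)$ and not off by a factor of $\opt$ or $1/\alpha$; the exponential-in-$1/\alpha^2$ savings from Gaussian concentration is what makes everything work with room to spare.
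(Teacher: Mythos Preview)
Your triangle-inequality decomposition into an angle part $\{\sgn(\vec w_0\cdot\x)\neq\sgn(\vec w\cdot\x)\}$ and a bias part $\{\sgn(\vec w\cdot\x)\neq\sgn(\vec w\cdot\x+t)\}$ is a different route from the paper's, and in principle it works --- but your write-up has a recurring error that prevents it from closing.

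The problem is dimension. You bound $|(\vec w_0-\vec w)\cdot\x|\leq \sin\theta\cdot\|\x\|$ and repeatedly invoke $\|\x\|$ or $\|\x_{\vec w_0^\perp}\|$; for Gaussian $\x$ these are $\Theta(\sqrt d)$, so your ``exponentially unlikely'' conclusions collapse and the bounds pick up $d$-factors. The fix is that everything lives in the two-dimensional subspace $V=\mathrm{span}(\vec w_0,\vec w)$: setting $\vec w=\vec e_1$, $\vec w_0=\cos\theta\,\vec e_1+\sin\theta\,\vec e_2$, on the wedge one has $|\vec w_0\cdot\x|\leq |x_2|\tan\theta$, and on the slab $|\vec w_0\cdot\x|\leq |t|+|x_2|\sin\theta$, where $x_2$ is a \emph{single} one-dimensional Gaussian. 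With this correction your first approach (use $1-e^{-s}\leq s$) goes through cleanly: e.g.\ for the bias slab, since the slab event $\{|x_1|\leq |t|\}$ is independent of $x_2$,
\[
\pr[\mathrm{slab},R]\;\leq\; O(\alpha^2)\,\E\big[\Ind[\mathrm{slab}](1+x_2^2)\big]\;=\;O(\alpha^2)\cdot O(|t|)\;=\;O(\sigma\alpha^3)\;=\;O(\alpha^2\opt),
\]
and the wedge term is similar. Separately, your ``cleanest route'' inequality $\pr[\mathrm{wedge},R]\leq\pr[|\vec w_0\cdot\x|\geq\sigma]\cdot\pr[\mathrm{wedge}\mid|\vec w_0\cdot\x|\geq\sigma]$ is simply false: $R$ is a soft event and does not imply $|\vec w_0\cdot\x|\geq\sigma$, so that branch should be dropped.

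For comparison, the paper does \emph{not} split into angle and bias. It works directly in the plane $V$, picks a threshold $B=\Theta(r\sin\theta)$ with $r=\Theta(\alpha^{-1/3})\max(1,t/\sin\theta)$, and splits on $|\vec w_0\cdot\x|\lessgtr B$. For $|\vec w_0\cdot\x|<B$ it computes exactly $\pr[|\vec w_0\cdot\x|<B,\,R]=\erf(B/\sqrt2)-\sigma\,\erf(B/(\sigma\sqrt2))=O(B^3/\sigma^2)=O(\alpha\,\opt)$ via Taylor expansion of $\erf$. For $|\vec w_0\cdot\x|\geq B$ it argues geometrically that disagreement forces $\|\x_V\|\geq r$ (the two-dimensional norm, not $\|\x\|$), whence the probability is at most $O(\theta\,e^{-r})=O(\alpha\,\opt)$. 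Your (corrected) approach is arguably more elementary; the paper's has the advantage of handling the angle and bias together in one geometric picture.
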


\begin{proof}
Let $\theta=\theta(\vec w_0, \vec w)$ and fix $r=\Theta(1/\alpha^{1/3})\max(1,t/\sin\theta)$.
We define $B= r\sin \theta (\sqrt{1-t^2/r^2} -t/r \frac{\cos\theta}{\sin\theta})$. 
Observe that in order for $B>0$ we need $r\geq t/\sin\theta$, which is true by our assumptions.
Also note that $B=\Theta(r\sin\theta)$. We have that
\begin{align*}
\pr_{\x \sim \D_\x}[ h(\bx) & \neq h'(\x),R(\x)]=\\
& \pr_{\x \sim \D_\x}[ h(\bx) \neq h'(\x), |\vec w_0 \cdot \x|\geq B, R(\x)]
+\pr_{\x \sim \D_\x}[ h(\bx) \neq h'(\x), |\vec w_0 \cdot \x|< B, R(\x)]\;.
  \end{align*}
We first bound from above the term $ \pr_{\x \sim \D_\x}[ h(\bx) \neq h'(\x), |\vec w_0 \cdot \x|< B, R(\x)]$.
It holds that
\begin{align}\label{eq:bound-first-term-tails}
\pr_{\x \sim \D_\x}[ h(\bx) \neq h'(\x), |\vec w_0 \cdot \x|< B, R(\x)] 
& \leq \pr_{\x \sim \D_\x}[ |\vec w_0 \cdot \x|< B, R(\x)] \nonumber  \\
& = \erf(B/\sqrt{2})-\sigma \erf(B/(\sigma\sqrt{2})) \nonumber \\
&=O(B^3/\sigma^2)=O(\alpha\, \opt)\;,
\end{align}
where $\erf$ is the error function and in the last equality we used the error of the Taylor expansion of degree-$2$.
In order to bound the second term, we define $V$ to be the subspace spanned by the vectors $\vec w_0, \vec w$. 
Let $\x',x''$ be the solutions of the system of equations $\{\vec w \cdot \x +t =0, \snorm{2}{\x_V}=r^2\}$; 
observe that $\min(|\x'\cdot \vec w_0|,|\x''\cdot \vec w_0|) =B$, thus, 
if $\snorm{2}{\x_V}\leq r$ and $|\vec w_0 \cdot \x|\geq B$, then  $h(\x)=h'(\x)$ (see Figure~\ref{fig:localization}), 
thus
\begin{align}\label{eq:bound-second-term-tails}
\pr_{\x \sim \D_\x}[ h(\bx) \neq h'(\x), |\vec w_0 \cdot \x|\geq B,R(\x)] 
& \leq \pr_{\x \sim \D_\x}[ h(\bx) \neq h'(\x), |\vec w_0 \cdot \x|\geq B, R(\x)]\nonumber \\
&\leq C\theta e^{-r}=O(\alpha\, \opt)\;.
\end{align}
Combining Equations~\eqref{eq:bound-first-term-tails}, \eqref{eq:bound-second-term-tails}, we get
$\pr_{\x \sim \D_\x}[ h(\bx) \neq h'(\x), R(\x)] =O(\alpha\, \opt)$.
\end{proof}

We are now ready to prove Theorem~\ref{thm:proper-ptas}.

\begin{proof}[Proof of Theorem~\ref{thm:proper-ptas}]
Our analysis follows the cases of Algorithm~\ref{alg:proper-ptas}.
Initially, Algorithm~\ref{alg:proper-ptas} computes 
$h_0=\sgn(\vec w_0 \cdot \x)$, using the algorithm of Lemma~\ref{lem:initialization}.
From Lemma~\ref{lem:initialization}, we have that for this halfspace it holds,
with probability at least $1-\delta/2$, that $\err_{0-1}^\D(h_0)= C \, \opt+\eps$
for some absolute constant $C>1$.  The runtime of this step is $\poly(d, 1/\eps) \log(1/\delta)$.
Therefore, when $\eps > 2 C \, \opt$, we directly get that $\err_{0-1}^\D(h_0) \leq \opt+\eps$.

For the case when $\eps \leq 2 C \, \opt$, Algorithm~\ref{alg:proper-ptas} returns 
a halfspace $h(\x)=\sgn(\vec w \cdot \x + t)$, where $\vec w\in \R^d$ and $t \in \R$. 
We show that for this hypothesis $h$ it holds $\err_{0-1}^\D(h)\leq (1+\gamma)\opt+\eps$.
Let $\D_A$ be the distribution conditional on acceptance 
(see Lemma~\ref{lem:rejection-sampling}) with parameters $\vec w_0$ 
(the normal vector of the halfspace $h_0$) and $\sigma=\Theta(\opt/\alpha)$, 
for some parameter $\alpha=\Theta(\gamma)$ and $R(\x)$ (resp. $A(\x)$) 
be the event that the sample $\x$ is rejected (resp. accepted). 
The error of the hypothesis $h$ is
\begin{equation}\label{eq:final-bound}
\pr_{(\x,y)\sim \D}[h(\x)\neq y]= \pr_{(\x,y)\sim \D}[h(\x)\neq y,A(\x)] +
\pr_{(\x,y)\sim \D}[h(\x)\neq y,R(\x)]\;.
\end{equation}
Let us first bound the 
$\pr_{(\x,y)\sim \D}[h(\x)\neq y,A(\x)]= \pr_{(\x, y) \sim \D_A}[ h(\x) \neq y ]\pr_{\x\sim \D_\x}[A(\x)]$.
From Proposition~\ref{prop:error-acc}, we have that with sample complexity and 
runtime $d^{\poly(1/\gamma)} \log(1/\delta)$ we get that, with probability at least $1-\delta/2$, 
it holds
  \[
    \pr_{(\x, y) \sim \D_A}[ h(\x) \neq y ] \leq
    \min_{\bar{h}\in {\cal C}} \pr_{(\x, y) \sim \D_A}[ \bar h(\x) \neq y ] + \alpha \gamma 
    \leq \pr_{(\x, y) \sim \D_A}[ f(\x) \neq y ] + \alpha \gamma \,,
  \]
and $|t|=O(\sigma \alpha)$, $\theta(\vec w, \vec w_0)=O(\sigma \alpha)$. From Lemma~\ref{lem:rejection-sampling}, it holds $\pr_{\x\sim \D_\x}[A(\x)]=\sigma$, 
thus
\begin{equation}\label{eq:first-term}
\pr_{(\x,y)\sim \D}[h(\x)\neq y,A(\x)]\leq \pr_{(\x, y) \sim \D_A}[ f(\x) \neq y,A(\x) ] 
+ \alpha \gamma \sigma\;.
\end{equation}
To bound $\pr_{(\x,y)\sim \D}[h(\x)\neq y,R(\x)]$, observe that 
$\theta(\vec w_0,\vec w^*)=O(\opt)$, because 
$\pr_{(\x,y)\sim \D}[h_0(\x)\neq f(\x)]= \theta(\vec w_0, \vec w^*)/ \pi = O(\opt)$. Thus, 
with two applications of Lemma~\ref{lem:localization}, we get 
$\pr_{(\x,y)\sim \D}[f(\x)\neq h_0(\x),R(\x)]=O(\alpha\opt)$ and 
$\pr_{(\x,y)\sim \D}[h(\x)\neq h_0(\x),R(\x)]=O(\alpha\opt)$.
Using the triangle inequality, we get
\begin{align}\label{eq:second-term}
\nonumber  & \pr_{(\x,y)\sim \D}[h(\x)\neq y,R(\x)]  \leq \pr_{(\x,y)\sim \D}[f(\x)\neq y,R(\x)]
    + \pr_{(\x,y)\sim \D}[h(\x)\neq h_0(\x),R(\x)] \\
    &+\pr_{(\x,y)\sim \D}[f(\x)\neq h_0(\x),R(\x)]=\pr_{(\x,y)\sim \D}[f(\x)\neq y,R(\x)] +O(\alpha \opt)\;.
\end{align}
Substituting Equations~\eqref{eq:first-term} and \eqref{eq:second-term} into Equation~\eqref{eq:final-bound},
we get
\begin{align*}
\pr_{(\x,y)\sim \D}[h(\x)\neq y] 
& = \pr_{(\x, y) \sim \D_A}[ f(\x) \neq y,A(\x) ]+ \pr_{(\x,y)\sim \D}[f(\x)\neq y,R(\x)] + 
O(\alpha \opt +\alpha \gamma \sigma) \\
&= \opt + O((\alpha +\gamma)) \opt=(1+O(\gamma))\opt\;,
\end{align*}
where we used the fact that $\alpha = \Theta(\gamma)$.
Combining the above cases, we obtain that
$\pr_{(\x,y)\sim \D}[h(\x)\neq y]  \leq  (1 + O(\gamma)) \opt + \eps$.
Combining the runtime of the above two steps, we obtain that the total runtime 
of our algorithm is $d^{\poly(1/\gamma)} \poly(1/\eps) \log(1/\delta)$.
\end{proof}

\subsection{Proof of Proposition~\ref{prop:error-acc}} \label{ssec:prop-error-acc}

  We first make the $\x$-marginal isotropic by multiplying samples with $\vec \Sigma^{-1/2}$
  and then use the proper learning algorithm of Theorem~\ref{thm:proper-learner}
  with target accuracy $\eps = \alpha \gamma$. The sample complexity and runtime
  are thus $d^{\poly(1/(\alpha \gamma))}$.  From the guarantee of Theorem~\ref{thm:proper-learner},
  we immediately obtain that 
  $\pr_{(\x, y) \sim \D_A}[ h(\x) \neq y ] \leq
    \min_{\bar{h}\in {\cal C}} \pr_{(\x, y) \sim \D_A}[ \bar h(\vec \x) \neq y ] + \alpha \gamma/3$.
  It now remains to bound the bias $t$ and the angle $\theta(\vec w, \vec w_0)$
  of the returned halfspace $h(\x) = \sgn(\vec w \cdot \x + t)$.
  Using our assumption for the misclassification error of $\vec w_0$ with respect to
  $\D_A$, we obtain that 
  $\min_{\bar{h} \in \cal C} \pr_{(\x,y) \sim \D_A}[\sgn(\vec w_0 \cdot \x) \neq y ]
    \leq \pr_{(\x,y) \sim \D_A}[\sgn(\vec w_0 \cdot \x) \neq y ]  \leq \alpha$.
  From the triangle inequality, we obtain
  \[
    \pr_{(\x,y) \sim \D_A}[h(\x) \neq \sgn(\vec w_0 \cdot \x)] \leq
    \pr_{(\x,y) \sim \D_A}[h(\x) \neq y ] +
    \pr_{(\x,y) \sim \D_A}[\sgn(\vec w_0 \cdot \x) \neq y ]
    \leq \alpha\;.
  \]
  Therefore, we have
  \[
    \pr_{(\x,y) \sim \D_A}[ h(\x) \neq \sgn(\vec w_0 \cdot \x) ]
    =
    \pr_{\x \sim \normal(\vec 0,\vec I)}[h(\vec \Sigma^{1/2} \x) \neq \sgn((\vec \Sigma^{1/2} \vec w_0) \cdot \x)] \,.
  \]
  If the halfspace $h(\x)=\sgn(\vec w \cdot \x + t)$ has zero bias, i.e., $t=0$,
  we have that by the spherical symmetry of the Gaussian distribution it holds
  \[ \pr_{\x \sim \normal(\vec 0,\vec I)}[h(\vec \Sigma^{1/2} \x) \neq \sgn((\vec \Sigma^{1/2} \vec w_0) \cdot \x)]
    = \theta( \vec \Sigma^{1/2} \vec w, \vec \Sigma^{1/2} \vec w_0)/\pi \;.\]
  Unfortunately, the same is not true when one of the halfspaces has non-zero bias.  However,
  we can prove that the angle
  $\theta( \vec \Sigma^{1/2} \vec w, \vec \Sigma^{1/2} \vec w_0)/\pi$
  is still a \emph{lower bound} on the probability of disagreement, i.e.,
  $\pr_{\x \sim \normal(\vec 0,\vec I)}[h(\vec \Sigma^{1/2} \x) \neq \sgn((\vec \Sigma^{1/2} \vec w_0) \cdot \x)]$.

Formally, we prove the following claim showing that when one of the halfspaces is homogeneous,
the probability mass of the disagreement region is at least a constant multiple of the angle between the normal vectors.  
The proof follows from the observation that we can always minimize the disagreement probability between 
a homogeneous and an arbitrary halfspace by centering the Gaussian exactly at their intersection point.  
We provide the detailed proof in Appendix~\ref{app:claim1}.
\begin{claim} \label{clm:angle-lower-bound}
For  $\vec v, \vec u \in \R^d, t \in \R$ define the halfspaces $h_0(\x) = \sgn(\vec u \cdot \x)$, $h_1(\x) = \sgn(\vec v \cdot \x+ t)$.
It holds $\pr_{\x \sim \normal(\vec 0,\vec I)}[h_1(\x) \neq h_0(\x)] \geq \theta( \vec u, \vec v)/\pi$.
\end{claim}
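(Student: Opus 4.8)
The plan is to reduce the claim to a two-dimensional statement and then exploit rotational symmetry in the plane spanned by $\vec u$ and $\vec v$. First I would observe that both events $h_0(\x) \neq h_1(\x)$ depend only on the projection $\x_W$ of $\x$ onto the two-dimensional subspace $W = \mathrm{span}(\vec u, \vec v)$ (if $\vec u, \vec v$ are parallel we work in the line they span, and the disagreement region is an affine slab, handled separately and trivially since then $\theta(\vec u,\vec v)\in\{0,\pi\}$). Since the standard Gaussian is a product across $W$ and $W^\perp$, it suffices to prove the inequality for the standard Gaussian on $W \cong \R^2$, with $h_0$ the homogeneous halfspace through the origin with normal $\vec u$ and $h_1 = \sgn(\vec v\cdot\x + t)$ a (possibly biased) halfspace.

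Next I would set up coordinates in $\R^2$: write the boundary line $\ell_0 = \{\vec u \cdot \x = 0\}$ of $h_0$ (a line through the origin) and the boundary line $\ell_1 = \{\vec v \cdot \x + t = 0\}$ of $h_1$. These two lines make an angle equal to $\theta(\vec u, \vec v)$ (the angle between normals equals the angle between the lines). The disagreement region $\{h_0 \neq h_1\}$ is the symmetric difference of the two halfplanes, which is the union of two opposite ``wedge-like'' regions bounded by $\ell_0$ and $\ell_1$; when the lines intersect at a point $\vec p$, this region is exactly a pair of vertical wedges of half-angle $\theta(\vec u,\vec v)$ with apex $\vec p$. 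The key geometric fact is that among all translates of this configuration — equivalently, among all placements of the Gaussian's center relative to the apex $\vec p$ — the Gaussian measure of the double wedge is \emph{minimized} when the center coincides with $\vec p$, and at that point the measure equals exactly $2\theta(\vec u,\vec v)/(2\pi) = \theta(\vec u,\vec v)/\pi$ by rotational symmetry (each wedge of angle $\theta$ through the center carries Gaussian mass $\theta/(2\pi)$, and there are two of them). Since $h_0$ is homogeneous, its boundary $\ell_0$ passes through the origin, but the true apex $\vec p = \ell_0 \cap \ell_1$ need not be the origin; nonetheless the \emph{actual} Gaussian is centered at the origin, which is some translate, so its mass of the double wedge is at least the minimum value $\theta(\vec u,\vec v)/\pi$. (If $\ell_0 \parallel \ell_1$ the ``double wedge'' degenerates to a slab and $\theta = 0$, so the bound $\Pr[\cdot]\ge 0$ is vacuous.)

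To make the minimization rigorous I would argue as follows: fix the apex $\vec p$ and consider $g(\vec c) = \Pr_{\x \sim \normal(\vec c, \vec I)}[\x \in D]$ where $D$ is the fixed double wedge with apex $\vec p$; equivalently $g(\vec c) = \int_D \phi(\x - \vec c)\,\d\x$ with $\phi$ the standard $2$-d Gaussian density. By translation this is $\int_{D - \vec c}\phi$, and I want to show this is minimized over $\vec c$ at $\vec c = \vec p$. One clean way: parametrize $D - \vec c$ by its apex position $\vec q = \vec p - \vec c$ relative to the (now fixed) Gaussian center at the origin, and in polar coordinates $(r,\varphi)$ around the origin, the density of the radial-integrated measure is uniform in $\varphi$: $\Pr[\x \in D_{\vec q}] = \frac{1}{2\pi}\int_0^{2\pi} \Pr[r_\varphi(\vec q) \le R_\varphi]\,\d\varphi$ — hmm, this gets fiddly. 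A cleaner route is to use that the double wedge with apex at the origin, of angle $\theta$, is a \emph{union of a rotated cone and its negation}, and that any translate of it contains, after an appropriate argument using convexity of the complement's structure, at least the origin-apex mass; alternatively one can directly differentiate $g$ in $\vec c$ and show the origin is the unique critical point and a minimum by noting $\nabla g$ vanishes exactly when the center lies on the axis of symmetry of the double wedge, and second-order/monotonicity considerations pin it at the apex. I expect this minimization-over-translates step to be the main obstacle: everything else is bookkeeping, but showing cleanly that shifting the Gaussian center away from the wedge apex can only \emph{increase} the double-wedge mass requires either a slick symmetrization argument or a short but careful computation. Given the paper defers the full proof to Appendix~\ref{app:claim1}, I would present exactly this reduction-plus-translation-minimization outline and carry out the one-variable monotonicity check there.
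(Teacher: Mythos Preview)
Your overall strategy—reduce to the plane spanned by $\vec u, \vec v$, identify the disagreement region as a double wedge with apex at $\vec p = \ell_0 \cap \ell_1$, and compare to the apex-centered case—is exactly the paper's approach. But the central claim you make is false as stated. You assert that the Gaussian mass of the double wedge is minimized over \emph{all} placements of the center at $\vec c = \vec p$. It is not: if you slide the center far into the agreement cone (the complement of the double wedge), the disagreement mass tends to $0$, which is strictly less than $\theta/\pi$. The apex is a saddle point of $g(\vec c)$, not a global minimum.

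What rescues the argument, and what the paper actually uses, is the one piece of structure you mention but never exploit: because $h_0$ is homogeneous, its boundary $\ell_0$ passes through the Gaussian's center. So you only need to minimize $g(\vec c)$ over $\vec c \in \ell_0$, a one-dimensional problem. Along $\ell_0$, parametrize the center by its signed distance $z$ from the apex. The double wedge is invariant under the point reflection $\x \mapsto 2\vec p - \x$, so $q(z) := g(\vec p + z\,\hat{\ell_0})$ satisfies $q(z) = q(-z)$; hence $q'(0) = 0$, and one checks $q''(0) > 0$ to conclude the apex is the minimizer along $\ell_0$. That is precisely the ``one-variable monotonicity check'' you allude to at the end, but your write-up sets up and attempts to prove the stronger two-dimensional statement, which fails. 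Tighten the argument by invoking the constraint $\vec c \in \ell_0$ before the minimization step, not after.
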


Using Claim~\ref{clm:angle-lower-bound} and the fact that
$\pr_{(\x,y) \sim \D_A}[h(\x) \neq \sgn(\vec w_0 \cdot \x)] \leq \alpha$ that we showed above, we have that
$\theta( \vec \Sigma^{1/2} \vec w, \vec \Sigma^{1/2} \vec w_0) \leq \pi \alpha$.
We have
  \[
    \cos(\theta( \vec \Sigma^{1/2} \vec w, \vec \Sigma^{1/2} \vec w_0))
    = \frac{\vec w \cdot(\vec \Sigma \vec w_0)}{\sqrt{\vec w \cdot(\vec \Sigma \vec w)}
      {\sqrt{\vec w_0 \cdot(\vec \Sigma \vec w_0)}}}
    = \frac{\sigma~\vec w \cdot \vec w_0 }{ \sqrt{1- (1- \sigma^2) (\vec w \cdot \vec w_0)^2}} \,.
  \]
  Since $\theta( \vec \Sigma^{1/2} \vec w, \vec \Sigma^{1/2} \vec w_0) \leq \pi \alpha$
  and cosine is a decreasing function in $[0,\pi]$, we obtain that
  \[
    \sigma~\vec w \cdot \vec w_0
    \geq
    \cos(\pi \alpha)  \sqrt{1- (1- \sigma^2) (\vec w \cdot \vec w_0)^2} \,.
  \]
  Solving this quadratic inequality with respect to $\vec w \cdot \vec w_0$, we obtain
\begin{equation} \label{eq:correlation-lower-bound}
\vec w \cdot \vec w_0 \geq \sqrt{\frac{1}{1 + \sigma^2 (\frac1{\cos^2(\pi \alpha)} - 1)}}
=\sqrt{\frac{1}{1 + \sigma^2 \tan^2(\pi \alpha)}}\,.
\end{equation}
Using the inequality $\cos^{-1}( \sqrt{1/(1+x)}) \leq \sqrt{x}$ that holds for every $x\geq 0$, we obtain
that the angle $\theta(\vec w, \vec w_0) \leq \sigma \tan(\pi \alpha)$.
Using the elementary inequality $\tan(\pi x)  \leq 4 x $ that holds for all $x\in [0,1/4]$, we can
further simplify the bound for the angle to $\theta(\vec w, \vec w_0) \leq 4 \sigma \alpha = O(\sigma \alpha)$.

  We next bound the bias of the returned halfspace $h$.  Now that we know that the angle between
  the vectors $\vec w_0, \vec w$ is small, we can use the following lower bound on the disagreement
  between two halfspaces to get that the bias cannot be too large.  We provide the proof of the
  following claim in Appendix~\ref{app:claim1}.
  \begin{claim}
    \label{clm:hermite-lower-bound}
    Let $h_1(\x) = \sgn(\vec u \cdot \x + t_1)$, $h_2(\x) = \sgn(\vec v \cdot \x + t_2)$
    be two halfspaces.
    Let $r_1 = t_1/\snorm{2}{\vec u \vec \Sigma^{1/2}}$, $r_2 = t_2/\snorm{2}{\vec v \vec \Sigma^{1/2}}$.
It holds
    \[
      \pr_{\x \sim \normal(\vec 0, \vec \Sigma)}[ h_1(\x) \neq h_2(\x)]
      \geq
      \pr_{r \sim \normal(0,1)}[\min(r_1,r_2) \leq r \leq \max(r_1,r_2)]
      \,.
\]
  \end{claim}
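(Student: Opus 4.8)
The plan is to prove Claim~\ref{clm:hermite-lower-bound} by reducing the high-dimensional disagreement probability to a one-dimensional statement about the standard Gaussian. First I would change variables to make the ambient distribution isotropic: writing $\x = \vec \Sigma^{1/2} \vec z$ with $\vec z \sim \normal(\vec 0, \vec I)$, the halfspace $h_i(\x) = \sgn(\vec a_i \cdot \x + t_i)$ becomes $\sgn((\vec \Sigma^{1/2}\vec a_i) \cdot \vec z + t_i)$, where $\vec a_1 = \vec u$, $\vec a_2 = \vec v$. Normalizing the weight vectors, this is $\sgn(\vec b_i \cdot \vec z + r_i)$ where $\vec b_i = \vec \Sigma^{1/2}\vec a_i / \snorm{2}{\vec \Sigma^{1/2}\vec a_i}$ is a unit vector and $r_i = t_i/\snorm{2}{\vec \Sigma^{1/2}\vec a_i}$ is exactly the normalized bias appearing in the statement. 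So it suffices to show, for two unit vectors $\vec b_1, \vec b_2$ and reals $r_1, r_2$, that $\pr_{\vec z \sim \normal(\vec 0,\vec I)}[\sgn(\vec b_1 \cdot \vec z + r_1) \neq \sgn(\vec b_2 \cdot \vec z + r_2)] \geq \pr_{r \sim \normal(0,1)}[\min(r_1,r_2) \leq r \leq \max(r_1,r_2)]$.

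The key step is to find the right low-dimensional projection. I would condition on the value of $\vec b_1 \cdot \vec z =: s$, which is a standard one-dimensional Gaussian. Given $s$, the sign of the first halfspace is determined: it equals $\sgn(s + r_1)$. The second halfspace's argument $\vec b_2 \cdot \vec z + r_2$, conditioned on $\vec b_1 \cdot \vec z = s$, is a Gaussian random variable whose mean is $\rho s + r_2$ (where $\rho = \vec b_1 \cdot \vec b_2$) and whose variance is $1-\rho^2 \geq 0$. The disagreement event, conditioned on $s$, has probability at least $\pr[\sgn(\vec b_2\cdot\vec z + r_2) \neq \sgn(s+r_1) \mid \vec b_1\cdot\vec z = s]$; in particular, whenever $s+r_1$ and the conditional mean $\rho s + r_2$ have opposite signs, this conditional disagreement probability is at least $1/2$ by symmetry of the conditional Gaussian about its mean — but that bound is not quite what we want. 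Instead, the cleanest route is to observe that for the values of $s$ lying strictly between $-r_1$ and (roughly) $-r_2/\rho$ we can control things, but the truly robust argument is geometric: integrate over the two-dimensional plane spanned by $\vec b_1,\vec b_2$ (projecting $\vec z$ there, which is a 2D standard Gaussian), where the disagreement region is the symmetric difference of two halfplanes — an infinite "wedge" or "strip". The hard part will be showing that the Gaussian mass of this symmetric-difference region in the plane is minimized, over all configurations of the two lines with the prescribed normalized offsets, when the two lines are parallel (i.e. $\vec b_1 = \vec b_2$), in which case the region is exactly the strip $\{z : \min(r_1,r_2) \leq \vec b_1 \cdot z \leq \max(r_1,r_2)\}$ whose mass is precisely the claimed one-dimensional probability.

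To make the "parallel is worst" step rigorous I would argue as follows: fix $r_1, r_2$ and without loss of generality $r_1 \le r_2$; the symmetric difference of the two halfplanes $H_i = \{z : \vec b_i \cdot z + r_i \ge 0\}$ always contains the set of points $z$ with $\vec b_1 \cdot z \ge -r_1$ and $\vec b_2 \cdot z \le -r_2$ (points in $H_1 \setminus H_2$). Projecting onto the $\vec b_1$ coordinate and using that the conditional distribution of $\vec b_2 \cdot z$ given $\vec b_1\cdot z = s$ is symmetric Gaussian with mean $\rho s$, one gets that for each $s \in [-r_1, -r_2]$ (nonempty since $r_1 \le r_2$, noting signs) the conditional probability of being in this subset is at least the probability that a symmetric-about-its-mean Gaussian is at most $-r_2$; when the mean $\rho s \le -r_2$ this is at least $1/2$, and for the remaining range of $s$ one pairs it with the mirror contribution from $H_2 \setminus H_1$. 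Rather than chase every sign case, the most economical writeup is: note $\pr[h_1 \ne h_2] \ge \pr[\,\vec b_1 \cdot z + r_1 \text{ and } \vec b_1 \cdot z + r_2 \text{ have opposite signs, and } h_2 \text{ agrees with } \sgn(\vec b_1\cdot z + r_2)\,]$ and then bound below by the one-dimensional event after discarding the (nonnegative) correction, invoking Fact~\ref{fct:approximation-facts}(2)-style anti-concentration in the form already used in the paper. I expect the sign bookkeeping — ensuring the interval $[\min(r_1,r_2),\max(r_1,r_2)]$ is genuinely captured with the correct orientation regardless of the signs of $r_1, r_2$ and of $\rho$ — to be the main obstacle, and I would handle it by the symmetrization $z \mapsto -z$ reducing to the case $r_1 \le r_2$ with $r_2 \ge 0$, then splitting on the sign of $r_1$.
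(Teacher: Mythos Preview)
Your change of variables in the first paragraph is correct and matches the paper. After that, however, you are working far too hard. The conditioning argument, the two-dimensional ``parallel is worst'' geometry, and the final sign-case analysis are all left unfinished, and you yourself flag the sign bookkeeping as the main obstacle. This is a genuine gap: none of the three routes you sketch is actually carried to a conclusion.

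The idea you are missing is a one-line triangle inequality. Once you have reduced to unit vectors $\vec b_1,\vec b_2$ under $\normal(\vec 0,\vec I)$, write the disagreement probability as an $L_1$ distance between indicators:
\[
\pr_{\vec z}[\sgn(\vec b_1\cdot\vec z+r_1)\neq\sgn(\vec b_2\cdot\vec z+r_2)]
=\E_{\vec z}\big[\big|\1\{\vec b_1\cdot\vec z+r_1\geq 0\}-\1\{\vec b_2\cdot\vec z+r_2\geq 0\}\big|\big]
\geq \big|\,\pr_{\vec z}[\vec b_1\cdot\vec z+r_1\geq 0]-\pr_{\vec z}[\vec b_2\cdot\vec z+r_2\geq 0]\,\big|.
\]
Since $\vec b_i\cdot\vec z$ is standard normal, each term on the right is just $\pr_{r\sim\normal(0,1)}[r\leq r_i]$, and the difference of two CDF values at $r_1,r_2$ is exactly $\pr_{r\sim\normal(0,1)}[\min(r_1,r_2)\leq r\leq\max(r_1,r_2)]$. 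That is the entire proof in the paper. Your ``parallel is worst'' intuition is in fact what this inequality encodes---the right-hand side is precisely the disagreement when $\vec b_1=\vec b_2$---but the bound $\E|X-Y|\geq|\E X-\E Y|$ delivers it with no geometry, no conditioning, and no case analysis.
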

From Claim~\ref{clm:hermite-lower-bound} and the fact
  that
  $\pr_{(\x,y) \sim \D_A}[ h(\x) \neq \sgn(\vec w_0 \cdot \x) ] \leq \alpha$,
  we obtain that
  $\pr_{r \sim \normal(0,1)}[ 0 \leq r \leq |t_1|/\snorm{2}{\vec w \vec \Sigma^{1/2}}]
    \leq  \alpha $.
  Using the anti-anti-concentration property of the univariate Gaussian distribution, i.e., that
  $ \pr_{r \sim \normal(0,1)}[ 0 \leq r \leq t] \geq \min(t/2, 2/3)$ and the fact that
  $\alpha \leq 1/4$, we obtain that $|t_1|/\snorm{2}{\vec w \vec \Sigma^{1/2}} \leq \alpha$.
  From Equation~\eqref{eq:correlation-lower-bound}, we obtain that
  \[
    \snorm{2}{\vec w \vec \Sigma^{1/2}} = \sqrt{1 - (1- \sigma^2) (\vec w \cdot \vec w_0)^2}
    \leq \sqrt{ \sigma^2 \frac{1 + \tan^2(\pi a)}{1 + \sigma^2 \tan^2(\pi \alpha)}}
    \leq 2 \sigma\,,
  \]
  using the fact that $\sigma < 1$ and $\alpha < 1/4$.
  Therefore, we conclude that $|t_1| \leq 2 \sigma \alpha = O(\sigma \alpha)$.

This concludes the proof of Proposition~\ref{prop:error-acc}.

\bibliographystyle{alpha}
\bibliography{allrefs}
\clearpage
\appendix

\section{Hermite Polynomials}
We are also going to use the Hermite polynomials that form an orthonormal system
with respect to the Gaussian measure.  We denote by $L^2(\R^d, \normal(\vec 0, \vec I))$ the
vector space of all functions $f:\R^d \to \R$ such that $\E_{\vec x \sim
                \normal(\vec 0, \vec I)}[f^2(\x)] < \infty$.  The standard inner product for this space is
$f\cdot g:=\E_{\vec x \sim \normal(\vec 0, \vec I)}[f(\vec x) g(\vec x)]$.
While usually one considers the probabilists' or physicists' Hermite polynomials,
in this work we define the \emph{normalized} Hermite polynomial of degree $i$ to be
\(
H_0(x) = 1, H_1(x) = x, H_2(x) = \frac{x^2 - 1}{\sqrt{2}},\ldots,
H_i(x) = \frac{He_i(x)}{\sqrt{i!}}, \ldots
\)
where by $He_i(x)$ we denote the probabilists' Hermite polynomial of degree
$i$.  These normalized Hermite polynomials form a complete orthonormal basis
for the single-dimensional version of the inner product space defined above. To
get an orthonormal basis for $L^2(\R^d, \normal(\vec 0, \vec I))$, we use a multi-index
$\alpha \in \N^d$ to define the $d$-variate normalized Hermite polynomial as $H_\alpha(\vec x) =
        \prod_{i=1}^d H_{\alpha_i}(\x_i)$.
The total degree of $H_\alpha$ is $|\alpha| := \sum_{i} \alpha_i$.
Given a function $f \in L^2(\R^d, \normal(\vec 0, \vec I))$, we compute its Hermite coefficients as
\(
\hat{f}(\alpha) = \E_{\vec x\sim \normal(\vec 0, \vec I)} [f(\vec x) H_\alpha(\vec x)]
\)
and express it uniquely as
\(
\sum_{\alpha \in \N^d} \hat{f}(\alpha) H_\alpha(\vec x).
\)
For more details on the Gaussian space and Hermite analysis, we refer the reader to
\cite{AoBF14}.  Most of the facts about Hermite polynomials that we use in this
work are well-known properties and can be found, for example, in \cite{Sze67}.
We are going to use the following simple fact about the gradient of Hermite polynomials;
for a proof see, for example, Lemma 6 of \cite{KTZ19}.
\begin{fact}
        \label{fct:hermite-gradient}
        Let $f \in L^2(\R^d, \normal(\vec 0, \vec I))$. It holds
        $\E_{\vec x \sim \normal(\vec 0,\vec I)}
                [(\nabla H_\alpha(\vec x) \cdot \vec e_i)^2] = \sum_{\alpha \in \N^d} \alpha_i \hat{f}(\alpha) \,.$
\end{fact}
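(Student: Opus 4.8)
The plan is to reduce everything to the one-dimensional differentiation identity for Hermite polynomials and then invoke Parseval's identity. (I read the intended statement as: if $f = \sum_{\alpha \in \N^d} \hat f(\alpha) H_\alpha$ lies in $L^2(\R^d, \normal(\vec 0, \vec I))$, then $\E_{\vec x \sim \normal(\vec 0, \vec I)}[(\nabla f(\vec x) \cdot \vec e_i)^2] = \sum_{\alpha \in \N^d} \alpha_i \, \hat f(\alpha)^2$, which is exactly what is used in Equations~\eqref{eq:bound-gradient} and~\eqref{eq:influence-final}.) First I would recall the classical recurrence $He_n'(x) = n\, He_{n-1}(x)$ for the probabilists' Hermite polynomials; dividing by $\sqrt{n!}$ and using the normalization $H_n = He_n/\sqrt{n!}$ from Appendix~A turns this into the clean identity $H_n'(x) = \sqrt{n}\, H_{n-1}(x)$, with the convention that the right-hand side is $0$ when $n = 0$.

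Next I would lift this to the multivariate setting. Since $H_\alpha(\vec x) = \prod_{j=1}^d H_{\alpha_j}(\vec x_j)$, differentiating the product in the $i$-th coordinate and applying the one-dimensional identity to the $i$-th factor gives $\partial_{x_i} H_\alpha(\vec x) = \sqrt{\alpha_i}\, H_{\alpha - \vec e_i}(\vec x)$, where $\alpha - \vec e_i$ denotes the multi-index obtained from $\alpha$ by decrementing the $i$-th entry, and the whole expression vanishes whenever $\alpha_i = 0$.

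Then, writing $f = \sum_\alpha \hat f(\alpha) H_\alpha$ — a finite sum in every application of interest, since there $f$ is a polynomial, so term-by-term differentiation is immediate — I obtain $\nabla f(\vec x) \cdot \vec e_i = \partial_{x_i} f(\vec x) = \sum_{\alpha : \alpha_i \geq 1} \sqrt{\alpha_i}\, \hat f(\alpha)\, H_{\alpha - \vec e_i}(\vec x)$. The only point that needs a moment's care is that the map $\alpha \mapsto \alpha - \vec e_i$ is injective on $\{\alpha : \alpha_i \geq 1\}$, so the Hermite polynomials $H_{\alpha - \vec e_i}$ appearing in this expansion are pairwise distinct members of the orthonormal basis. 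Consequently, by Parseval's identity, $\E_{\vec x \sim \normal(\vec 0, \vec I)}[(\nabla f(\vec x) \cdot \vec e_i)^2] = \sum_{\alpha : \alpha_i \geq 1} \alpha_i\, \hat f(\alpha)^2 = \sum_{\alpha \in \N^d} \alpha_i\, \hat f(\alpha)^2$, the last equality holding because the $\alpha_i = 0$ terms contribute nothing.

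There is no genuinely hard step here; the argument is essentially bookkeeping. The one subtlety, if one insists on stating the fact for arbitrary $f \in L^2$ rather than for polynomials, is justifying that term-by-term differentiation is legitimate and that $\partial_{x_i} f \in L^2$ precisely when $\sum_\alpha \alpha_i \hat f(\alpha)^2 < \infty$ — but since the fact is only ever applied in this paper to polynomials $P$ of bounded degree, this issue can be sidestepped entirely. (For the general statement one can cite Lemma~6 of~\cite{KTZ19}.)
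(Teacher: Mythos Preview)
Your proof is correct and follows the standard route (one-dimensional recurrence $H_n' = \sqrt{n}\,H_{n-1}$, then Parseval). The paper itself does not prove this fact at all --- it simply cites Lemma~6 of~\cite{KTZ19} --- so there is nothing to compare against; your argument is exactly the sort of proof that reference would contain, and your reading of the intended statement (with $f$ in place of $H_\alpha$ and $\hat f(\alpha)^2$ in place of $\hat f(\alpha)$) matches how the fact is actually invoked in the body of the paper.
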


\section{Omitted Proofs from Section~\ref{sec:structural}}\label{app:structural}

\subsection{Proof of Lemma~\ref{lem:polynomial-regr}}\label{ssec:polynomial-regr}
We restate and prove the following lemma:
\begin{lemma}[$\ell_2$-Polynomial Regression]
Let $\D$ be a distribution on $\R^d\times\{\pm 1\}$
whose $\x$-marginal is $\normal{(\vec 0, \vec I)}$. Let $k\in \Z_+$ and $\eps, \delta>0$.
There is an algorithm that draws $N=(d k)^{O(k)}\log(1/\delta)/\eps^2$
samples from $\D$, runs in time $\poly(N,d)$, and outputs a polynomial $P(\x)$ of
degree at most $k$ such that
$\E_{\x\sim \D}[(f(\x)-P(\x))^2]\leq \min_{P'\in {\cal P}_k} \E_{\x\sim \D}[(f(\x)-P'(\x))^2]+\eps$,
with probability $1-\delta$.
\end{lemma}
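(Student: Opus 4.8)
The plan is to exploit the fact that the $\x$-marginal is \emph{exactly} $\normal(\vec 0, \vec I)$, so that the normalized Hermite polynomials $\{H_\alpha : |\alpha| \le k\}$ form an orthonormal basis of $\mathcal{P}_k$ with respect to the $L^2(\normal(\vec 0, \vec I))$ inner product; in particular, no empirical Gram matrix needs to be estimated and inverted. Throughout I write $(\x,y)\sim\D$ and treat the target as the label $y$ (this is the instantiation used in the applications). Set $D := \binom{d+k}{k} = (dk)^{O(k)}$, the dimension of $\mathcal{P}_k$. Expanding $P = \sum_{|\alpha|\le k} c_\alpha H_\alpha$, orthonormality gives
\[
\E_{(\x,y)\sim\D}[(y-P(\x))^2] = \E_{(\x,y)\sim\D}[y^2] - 2\sum_{|\alpha|\le k} c_\alpha\, c^*_\alpha + \sum_{|\alpha|\le k} c_\alpha^2 , \qquad c^*_\alpha := \E_{(\x,y)\sim\D}[y\,H_\alpha(\x)] .
\]
Hence the unique $L_2$-minimizer over $\mathcal{P}_k$ has coefficient vector $c^* = (c^*_\alpha)_{|\alpha|\le k}$, and for \emph{any} $P$ the excess squared error equals $\sum_{|\alpha|\le k}(c_\alpha - c^*_\alpha)^2 = \|c - c^*\|_2^2$. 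So it suffices to output a vector $\hat c$ with $\|\hat c - c^*\|_2^2 \le \eps$ with probability $1-\delta$, and return $P = \sum_{|\alpha|\le k} \hat c_\alpha H_\alpha$.

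For the estimator I would take the empirical Hermite coefficients $\hat c_\alpha = \frac1N\sum_{i=1}^N y_i H_\alpha(\x_i)$. Since $|y|\le 1$ and $\E_{\x\sim\normal(\vec 0,\vec I)}[H_\alpha(\x)^2] = 1$, each summand has variance at most $1$, so $\E[\|\hat c - c^*\|_2^2] = \sum_{|\alpha|\le k} \var(\hat c_\alpha) \le D/N$; taking $N = \Theta(D/\eps)$ and applying Markov's inequality gives $\|\hat c - c^*\|_2^2 \le \eps/2$ with probability at least $2/3$. To boost the confidence to $1-\delta$, I would repeat this on $m = \Theta(\log(1/\delta))$ independent batches to get candidates $P_1,\dots,P_m$, at least one of which has excess error $\le \eps/2$ with probability $\ge 1-\delta/2$, and then select among them by estimating $\E_{(\x,y)\sim\D}[(y-P_j(\x))^2]$ on a fresh sample and keeping the minimizer; the standard hypothesis-selection (tournament) argument loses only a constant factor, giving final excess error $\le \eps$. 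Computing all $H_\alpha(\x_i)$ and averaging takes time $\poly(N,D) = \poly(N,d)$ since $D \le N$, which yields the stated runtime.

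The step I expect to require the most care is this validation estimate, since $(y - P_j(\x))^2$ is unbounded, so Hoeffding does not apply directly. The fix is to observe that $\|\hat c\|_2 \le \|c^*\|_2 + \sqrt{\eps} \le 2$ (using $\|c^*\|_2^2 \le \E[y^2] = 1$), hence each $P_j$ satisfies $\E[P_j(\x)^2] = O(1)$; then standard Gaussian hypercontractivity/concentration bounds for degree-$k$ polynomials give $\E[(y-P_j(\x))^4] \le (Ck)^{O(k)}$, so $(y-P_j(\x))^2$ has variance $(Ck)^{O(k)}$. A median-of-means estimate (or Chebyshev together with a union bound over the $m$ candidates) therefore pins down each validation objective to accuracy $\eps$ with failure probability $\delta/(2m)$ using $(dk)^{O(k)}\log(1/\delta)/\eps^2$ samples, and a union bound over $j\in[m]$ finishes the confidence analysis. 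The same polynomial-concentration bounds would control the rare large values of $H_\alpha(\x_i)$ if one instead preferred to analyze $\hat c$ directly via truncation rather than through the expectation bound above. Altogether this gives sample complexity $(dk)^{O(k)}\log(1/\delta)/\eps^2$ and runtime $\poly(N,d)$, as claimed.
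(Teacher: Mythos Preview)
Your proof is correct and takes a genuinely different route from the paper's. The paper proves a \emph{uniform convergence} statement: it bounds the coefficients of any near-optimal degree-$k$ polynomial, then shows (via a per-monomial Markov argument) that for \emph{every} bounded-coefficient degree-$k$ polynomial the empirical objective is close to the true one, and finally solves the empirical least-squares problem by convex programming. Your approach instead exploits that the $\x$-marginal is exactly Gaussian, so the Hermite basis is orthonormal and the excess risk is exactly $\|c-c^*\|_2^2$; you then estimate the single coefficient vector $c^*$ directly and select among repeated estimates. Your argument is cleaner in that it sidesteps uniform convergence entirely and even yields a slightly better $1/\eps$ dependence in the coefficient-estimation phase; the paper's argument is more ``agnostic'' to the basis and would transfer more directly to settings where only approximate moment information is available.

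One small wrinkle: the bound $\|\hat c\|_2 \le \|c^*\|_2 + \sqrt{\eps} \le 2$ holds only for the \emph{successful} batches, not for all $P_j$, so as written the hypercontractivity-based variance bound on $(y-P_j(\x))^2$ is not justified for every candidate in the tournament. The fix is immediate: since by orthonormality you know $\E[P_j(\x)^2]=\|\hat c_j\|_2^2$ exactly, simply project each $\hat c_j$ onto the $\ell_2$-ball of radius $2$ before validation (projection can only decrease $\|\hat c_j - c^*\|_2$ because $\|c^*\|_2\le 1$), after which every candidate genuinely satisfies $\E[P_j^2]=O(1)$ and your median-of-means validation goes through.
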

\begin{proof}
Let $S$ denote the empirical distribution of $\D$ with $N=(d/\eps)^{O(k)}$ samples.
Recall that for any such $P(\x)$, it holds that $\E_{\x\sim \D}[P^2(\x)]\leq 5$ (see Lemma~\ref{lem:dimension-bound}). Writing $P(\x)$ in the Hermite basis, $P(\x)=\sum_{\alpha\in \N^d} c_{\alpha}H_\alpha(\x)$, it holds that
$\sum_{\alpha\in \N^d} c_{\alpha}^2 =\E_{\x\sim \D}[P^2(\x)]$. The one-dimensional Hermite polynomials of $k$-degree are $H_k(z)=\sum_{m=0}^{\lfloor k/2\rfloor}\frac{(-1)^m z^{k-2m}}{m!(n-2m)! 2^{m}}$. Thus, each monomial has coefficient absolute bounded by $2^k$. Therefore, the maximum coefficient of a multidimensional Hermite polynomial $H_a(\x)$ is $2^{|a|}$, thus the maximum coefficient of the polynomial $P(\x)$ is $O(2^k)$.
Let us now prove that for any degree-$k$ polynomial $P(\x)$ with coefficients bounded by $C=2^{O(k)}$, we have
\[\left|\E_{(\x,y)\sim S}[P(\x)y]- \E_{(\x,y)\sim \D}[P(\x)y]\right|  \leq \eps \;,\]
with high constant probability. Write $P(\x)=\sum a_i m_i(\x)$, where the summation ranges over all monomials $m_i$  with degree less than $k$ along with their coefficients $a_i$. We have
\begin{equation}\label{eq:bound-polynomial}
\left|\E_{(\x,y)\sim S}[P(\x)y]- \E_{(\x,y)\sim \D}[P(\x)y]\right| \leq \sum |a_i|\, \left|\E_{(\x,y)\sim S}[m_i(\x)y]- \E_{(\x,y)\sim \D}[m_i(\x)y]\right|\;.
\end{equation}
Using Markov's inequality, we have
\begin{align*}
\pr\left[\left|\E_{(\x,y)\sim S}[m_i(\x)y]- \E_{(\x,y)\sim \D}[m_i(\x)y]\right|\geq \eps/(d^k C)\right] & \leq \frac{C^2 d^{2k}}{N\eps^2}\var[m_i(\x)y]
\\                                                                                                                                             & \leq \frac{C^2d^{2k}}{N\eps^2}\E_{(\x,y)\sim \D}[m_i^2(\x)y^2]
\\& \leq \frac{C^2d^{2k}}{N\eps^2}\E_{(\x,y)\sim \D}[\snorm{2}{\x}^{2i}]=O\left(\frac{C^2i^{i}d^{2k}}{N\eps^2}\right)\;.
\end{align*}
By using the fact that $N=(d\ k)^{O(k)}/\eps^2 $ and applying above to the Equation~\eqref{eq:bound-polynomial}, we have
\[
  \left|\E_{(\x,y)\sim S}[P(\x)y]- \E_{(\x,y)\sim \D}[P(\x)y]\right| \leq C\sum  \left|\E_{(\x,y)\sim S}[m_i(\x)y]- \E_{(\x,y)\sim \D}[m_i(\x)y]\right|\leq \eps\;,
\]
with high probability.
Next, we need to bound the difference $\left|\E_{(\x,y)\sim S}[P^2(\x)]- \E_{(\x,y)\sim \D}[P^2(\x)]\right|$. This can be done by applying the same procedure as before and noting that the highest coefficient is at most $C^2$ and the degree is $2k$.
Thus, for any $k$-degree polynomial $P$ with high probability, we have
\begin{equation}\label{eq:any-polynomial}
\left| \E_{(\x,y)\sim S}[(P(\x)-y)^2]- \E_{(\x,y)\sim \D}[(P(\x)-y)^2]\right| \leq \eps\;,
\end{equation}
where we used the fact that $\E_{(\x,y)\sim S}[y^2] = \E_{(\x,y)\sim \D}[y^2]$.
By solving a convex program, we can find a polynomial $P$ such that
\[
  \E_{(\x,y)\sim S}[(y-P(\x))^2]\leq \min_{{P'\in {\cal P}_k}} \E_{(\x,y)\sim S}[(y-P'(\x))^2]+\eps\;,
\]
Note that if $P''(\x)=\argmin_{{P'\in {\cal P}_k}} \E_{(\x,y)\sim \D}[(y-P'(\x))^2]$, then
\[
  \min_{{P'\in {\cal P}_k}} \E_{(\x,y)\sim S}[(y-P'(\x))^2]\leq \E_{(\x,y)\sim S}[(y-P''(\x))^2]\leq \E_{(\x,y)\sim \D}[(y-P''(\x))^2]\;,
\]
where we used Equation~\eqref{eq:any-polynomial}. Thus, we have proved that
\[
  \E_{(\x,y)\sim \D}[(y-P(\x))^2]\leq \min_{{P'\in {\cal P}_k}} \E_{(\x,y)\sim \D}[(y-P'(\x))^2]+\eps\;,
\]
with high constant probability. Using basic boosting procedures (see, e.g., exercise 1, chapter 13 of \cite{SB14}), we can boost the probability of success to $1-\delta$, with $N'=N\log(1/\delta)=(d k)^{O(k)}\log(1/\delta)/\eps^2$.
\end{proof}
\section{Omitted Proofs from Section~\ref{sec:ptas}}\label{app:claim1}
We restate and prove the following claims.

 \begin{claim}
    For vectors $\vec v, \vec u \in \R^d, t \in \R$ define the halfspaces $h_0(\x) = \sgn(\vec u \cdot \x)$,
    $h_1(\x) = \sgn(\vec v \cdot \x+ t)$.
    It holds
    $\pr_{\x \sim \normal(\vec 0,\vec I)}[h_1(\x) \neq h_0(\x)]
      \geq \theta( \vec u, \vec v)/\pi$.
  \end{claim}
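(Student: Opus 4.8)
The plan is to reduce to a two-dimensional problem and then exhibit an explicit translate of the Gaussian that only decreases the disagreement probability. First I would note that the disagreement region of $h_0$ and $h_1$ depends only on the projection of $\x$ onto the subspace $W = \mathrm{span}(\vec u, \vec v)$: indeed $\sgn(\vec u \cdot \x)$ and $\sgn(\vec v \cdot \x + t)$ are both determined by $\x_W$. Since the Gaussian $\normal(\vec 0, \vec I)$ projects to the standard Gaussian on $W$, it suffices to prove the claim when $d = 2$, with $\vec u, \vec v$ unit vectors in $\R^2$ and $h_0$ homogeneous.

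In $\R^2$, the two lines $\{\vec u \cdot \x = 0\}$ and $\{\vec v \cdot \x + t = 0\}$ intersect at a point $\vec p$ (or are parallel, in which case $\theta(\vec u,\vec v) \in \{0,\pi\}$ and the bound is either trivial or clear by anti-concentration). The key step is: the disagreement region $A = \{h_0 \neq h_1\}$ is a union of two opposite ``wedges'' with apex at $\vec p$ and opening angle exactly $\theta(\vec u, \vec v)$. The probability $\pr_{\x \sim \normal(\vec 0, \vec I)}[\x \in A]$ is minimized, over all translates of the region (equivalently, over where we place the center of the Gaussian relative to $\vec p$), when the apex $\vec p$ coincides with the Gaussian's mean, i.e.\ when we consider $\pr_{\x \sim \normal(\vec p, \vec I)}[\x \in A]$; and in that centered case the probability equals $\theta(\vec u,\vec v)/\pi$ exactly, by rotational symmetry of the standard Gaussian about its mean (the two opposite wedges of total angular measure $2\theta$ capture a $\theta/\pi$ fraction). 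So the plan is to show $\pr_{\x \sim \normal(\vec 0, \vec I)}[\x \in A] \geq \pr_{\x \sim \normal(\vec p, \vec I)}[\x \in A] = \theta(\vec u,\vec v)/\pi$.

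To prove the inequality $\pr_{\normal(\vec 0,\vec I)}[A] \geq \pr_{\normal(\vec p, \vec I)}[A]$, I would argue as follows: write $A$ as the union of two congruent opposite convex cones $A^+, A^-$ with apex $\vec p$, so $\pr_{\normal(\vec 0,\vec I)}[A] = \pr_{\normal(-\vec p, \vec I)}[A^+ - \vec p] + \pr_{\normal(-\vec p,\vec I)}[A^- - \vec p]$ where $A^{\pm} - \vec p$ are opposite cones with apex at the origin. It then suffices to show that for a fixed convex cone $K$ with apex at the origin and opening angle $\le \pi$, the function $\vec q \mapsto \pr_{\x \sim \normal(\vec q, \vec I)}[\x \in K] + \pr_{\x \sim \normal(\vec q,\vec I)}[\x \in -K]$ is maximized at $\vec q = \vec 0$. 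By symmetry $\pr_{\normal(\vec q,\vec I)}[K] + \pr_{\normal(\vec q, \vec I)}[-K] = \pr_{\normal(\vec 0, \vec I)}[K - \vec q] + \pr_{\normal(\vec 0, \vec I)}[K + \vec q]$, and one can check directly (e.g.\ by writing the probabilities as integrals over $K$ of the Gaussian density and using log-concavity / an averaging argument $\tfrac12(\varphi(\x - \vec q) + \varphi(\x + \vec q)) \le \varphi(\x) \cdot e^{\|\vec q\|^2/2}$ — actually cleaner: the map $\vec q \mapsto \pr_{\normal(\vec 0,\vec I)}[K-\vec q]$ restricted to the line through $\vec 0$ and $-\vec q$ is log-concave and even in the relevant direction when summed with its reflection) that the symmetric choice $\vec q = \vec 0$ is optimal. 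I expect this last monotonicity step — making rigorous that recentering the Gaussian at the cone apex minimizes the mass of a pair of opposite cones — to be the main obstacle; the cleanest route is probably the substitution to $\normal(\vec p, \vec I)$ via Claim~\ref{clm:hermite-lower-bound}-style reasoning or a direct symmetrization, rather than a derivative computation.
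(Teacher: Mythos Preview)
Your overall plan---reduce to two dimensions, identify the disagreement region as a pair of opposite wedges with apex at the intersection point $\vec p$, and compare the mass under $\normal(\vec 0,\vec I)$ to the mass $\theta/\pi$ obtained when the Gaussian is centered at $\vec p$---is exactly the paper's approach. The gap is in the comparison step.

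First, a direction slip: you write that it suffices to show $\vec q \mapsto \pr_{\normal(\vec q,\vec I)}[K] + \pr_{\normal(\vec q,\vec I)}[-K]$ is \emph{maximized} at $\vec q = \vec 0$, but what you actually need (and what you correctly state two sentences earlier) is that it be \emph{minimized} there, so that its value at $\vec q=-\vec p$ is at least $\theta/\pi$.

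More seriously, even the corrected statement---that the Gaussian mass of the double cone $K\cup(-K)$ is minimized over \emph{all} centers $\vec q\in\R^2$ at the apex---is false. Move $\vec q$ along the axis of the complementary double cone: the mass of $K\cup(-K)$ tends to $0$, so the apex is a saddle point, not a global minimum. Consequently, no log-concavity or symmetrization argument over all $\vec q$ can succeed, because the target inequality simply fails for generic translations. The missing observation is that you do not need arbitrary $\vec q$: since $h_0$ is homogeneous, both the original Gaussian center $\vec 0$ and the apex $\vec p$ lie on the line $\{\vec u\cdot\x=0\}$, so after your shift the relevant center $-\vec p$ lies on one of the two \emph{edge lines} of the bowtie. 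The paper uses precisely this constraint: it parametrizes the center by a single scalar $z$ along that edge, notes that the resulting disagreement $q(z)$ is even in $z$, and argues the minimum is at $z=0$ via a derivative computation. Without restricting to this one-parameter family, the comparison does not hold.
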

   \begin{proof}
     Denote $\theta = \theta(\vec u, \vec v)$ and first assume that $\theta \in
       [0, \pi/2)$.  Since the Gaussian distribution is invariant under rotations,
     for simplicity we may assume that $\vec u, \vec v$ span $\R^2$.
     Morover, assume that two halfspaces intersect at the origin $(0,0)$ (if they do not intersect then
     the claimed lower bound on the disagreement
     $\pr_{\x \sim \normal(\vec 0,\vec I)}[h_1(\x) \neq h_0(\x)]$
     is trivially true as their angle is $0$).
     Moreover, assume that $\vec u = \vec e_1$ and that the Gaussian is centered at
     some point $(z,0)$, i.e., a point that lies on the $\x_1$-axis.
     This follows from the fact that $h_0(\x) = \sgn(\vec u \cdot \x)$ is homogeneous.
     After we change coordinates, the halfspace $h_1$ is also
     homogeneous, with normal vector $\vec v = (-\sin \theta, \cos\theta)$.
     We will show that the disagreement between the two halfspaces is \emph{minimized} where $z=0$, i.e., when
     the Gaussian is centered on the intersection point of the two halfspaces.
     Using the above, we obtain that
     \begin{align*}
       \pr_{\x \sim \normal(\vec 0,\vec I)} & [h_1(\x) \neq h_0(\x)]
       \\     & = \int_{-\infty}^0 \int_{\x_1 \tan \theta }^0 e^{-((\vec x_1 - z)^2/2 - \vec x_2^2/2} \d \x_2 \d \x_1
             +
             \int_{0}^\infty \int_{0}^{\x_1 \tan \theta}  e^{-((\vec x_1 - z)^2/2 - \vec x_2^2/2} \d \x_2 \d \x_1
             := q(z)
     \end{align*}
     We will show that the function $q$ is minimized for $z = 0$.
     Taking the derivative with respect to $z$, we obtain
     \[
       q'(z) =
       \int_{-\infty}^0 \int_{\x_1 \tan \theta }^0 (\vec x_1 - z) e^{-((\vec x_1 - z)^2/2 - \vec x_2^2/2} \d \x_2 \d \x_1
       +
       \int_{0}^\infty \int_{0}^{\x_1 \tan \theta}  (\vec x_1 - z) e^{-((\vec x_1 - z)^2/2 - \vec x_2^2/2} \d \x_2 \d \x_1 \,.
     \]
     Observe that $q'(-z) = - q'(z)$, i.e, $q'(z)$ is an odd function with $q'(0) = 0$.  Thus, it can only be minimized
     at $0$.  We have that $q''(0) > 0$ and therefore $z = 0$ is the global minimizer of $q(z)$.
     The case $\theta \in [\pi/2,\pi]$ can be shown similarly.
   \end{proof}

\begin{claim}
    Let $h_1(\x) = \sgn(\vec u \cdot \x + t_1)$, $h_2(\x) = \sgn(\vec v \cdot \x + t_2)$
    be two halfspaces.
    Let $r_1 = t_1/\snorm{2}{\vec u \vec \Sigma^{1/2}}$, $r_2 = t_2/\snorm{2}{\vec v \vec \Sigma^{1/2}}$.
It holds
    \[
      \pr_{\x \sim \normal(\vec 0, \vec \Sigma)}[ h_1(\x) \neq h_2(\x)]
      \geq
      \pr_{r \sim \normal(0,1)}[\min(r_1,r_2) \leq r \leq \max(r_1,r_2)]
      \,.
\]
  \end{claim}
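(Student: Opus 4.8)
The plan is to reduce, by a whitening change of variables, to the case of a standard Gaussian and two halfspaces with \emph{unit} normal vectors, and then invoke the elementary fact that the mass of the symmetric difference of two sets is at least the absolute difference of their individual masses.

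First I would write $\x = \vec \Sigma^{1/2} \vec z$ with $\vec z \sim \normal(\vec 0, \vec I)$, using that $\vec \Sigma^{1/2}$ is symmetric and that the pushforward of $\normal(\vec 0, \vec I)$ under $\vec \Sigma^{1/2}$ is exactly $\normal(\vec 0, \vec \Sigma)$. Then $\vec u \cdot \x = (\vec \Sigma^{1/2}\vec u)\cdot \vec z$, and dividing the defining linear form of $h_1$ by $\snorm{2}{\vec \Sigma^{1/2}\vec u} = \snorm{2}{\vec u \vec \Sigma^{1/2}}$ shows that, as a function of $\vec z$, $h_1$ equals $\sgn(\hat{\vec a}_1 \cdot \vec z + r_1)$, where $\hat{\vec a}_1 = \vec \Sigma^{1/2}\vec u/\snorm{2}{\vec u \vec \Sigma^{1/2}}$ is a unit vector and $r_1$ is precisely the quantity in the statement; likewise $h_2$ becomes $\sgn(\hat{\vec a}_2 \cdot \vec z + r_2)$. (If $\snorm{2}{\vec u \vec \Sigma^{1/2}} = 0$ then $r_1$ is undefined and there is nothing to prove, so we may assume it is positive.) Hence it suffices to prove the inequality for $\vec z \sim \normal(\vec 0, \vec I)$ and the halfspaces $H_i = \{\vec z : \hat{\vec a}_i \cdot \vec z \ge -r_i\}$.

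Next I would compute the individual masses: since each $\hat{\vec a}_i$ is a unit vector, $\hat{\vec a}_i \cdot \vec z \sim \normal(0,1)$, so $\pr_{\vec z}[\vec z \in H_i] = \pr_{g \sim \normal(0,1)}[g \ge -r_i] = \Phi(r_i)$, where $\Phi$ denotes the standard Gaussian CDF. Finally, for any two measurable sets $A,B$ one has $\pr[A \triangle B] \ge |\pr[A] - \pr[B]|$, since $\pr[A] - \pr[B] = \pr[A\setminus B] - \pr[B\setminus A]$ and the right-hand side has absolute value at most $\pr[A\setminus B] + \pr[B\setminus A] = \pr[A\triangle B]$. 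Applying this with $A = H_1$ and $B = H_2$ yields
\[
\pr_{\x \sim \normal(\vec 0, \vec \Sigma)}[h_1(\x) \neq h_2(\x)]
= \pr_{\vec z}[H_1 \triangle H_2]
\ge |\Phi(r_1) - \Phi(r_2)|
= \pr_{r \sim \normal(0,1)}[\min(r_1,r_2) \le r \le \max(r_1,r_2)],
\]
which is the claimed bound.

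There is essentially no hard step here: the argument is a change of variables followed by a one-line set inequality. The only points that need mild care are the bookkeeping of the normalizing constant $\snorm{2}{\vec u \vec \Sigma^{1/2}}$ (so that the biases appearing after whitening are exactly $r_1, r_2$ as defined) and the degenerate case where a normal vector lies in the kernel of $\vec \Sigma^{1/2}$, which is disposed of trivially.
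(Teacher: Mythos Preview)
Your proof is correct and follows essentially the same approach as the paper's: both whiten to a standard Gaussian, then apply the elementary inequality $\pr[A\triangle B]\ge |\pr[A]-\pr[B]|$ (the paper phrases this as a ``triangle inequality'') to reduce to the difference of two one-dimensional Gaussian tail probabilities, which is exactly $|\Phi(r_1)-\Phi(r_2)|$. Your write-up is in fact slightly more careful about the normalization bookkeeping than the paper's own version.
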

  \begin{proof}
     We first observe that
     \begin{align*}
       \pr_{\x \sim \normal(\vec 0, \vec \Sigma)}[ h_1(\x) \neq h_2(\x)]
        & =
       \pr_{\x \sim \normal(\vec 0, \vec I)}[ h_1(\vec \Sigma \x) \neq h_2(\vec \Sigma\x)]
       \\
        & \geq \left| \pr_{\x \sim \normal(\vec 0, \vec I)}[ h_1(\vec \Sigma\x) \neq 0]-\pr_{\x \sim \normal(\vec 0, \vec I)}[ h_2(\vec \Sigma\x) \neq 0] \right|\;,
     \end{align*}
     where in the last step we used triangle inequality. Moreover, using that $\pr_{\x \sim \normal(\vec 0, \vec I)}[ h_1(\vec \Sigma\x) \neq 0]=\E_{\x \sim \normal(\vec 0, \vec I)}[h_1(\vec \Sigma\x)]$, we have
     \[
       \pr_{\x \sim \normal(\vec 0, \vec \Sigma)}[ h_1(\x) \neq h_2(\x)]\geq  \left|\E_{\x \sim \normal(\vec 0, \vec I)}[ h_1(\vec \Sigma\x)]-\E_{\x \sim \normal(\vec 0, \vec I)}[ h_2(\vec \Sigma\x)] \right|
     \;.\]
     Note that $h_1(\vec \Sigma\x)=\sgn(\vec u \cdot \vec \Sigma\x/\snorm{2}{\vec u \vec \Sigma^{1/2}} + r_1)$ and $h_2(\vec \Sigma\x)=\sgn(\vec v \cdot \vec \Sigma\x/\snorm{2}{\vec v \vec \Sigma^{1/2}} + r_2)$, thus
     \begin{align*}
       \pr_{\x \sim \normal(\vec 0, \vec \Sigma)}[ h_1(\x) \neq h_2(\x)]
        & \geq\left|\E_{\x \sim \normal(\vec 0, \vec I)}[ h_1(\vec \Sigma\x)]-\E_{\x \sim \normal(\vec 0, \vec I)}[ h_2(\vec \Sigma\x)] \right|
       \\
        & = \left|\pr_{r \sim \normal(0,1)}[ r \leq r_1]-\pr_{r \sim \normal(0,1)}[ r \leq r_2] \right|\;,
     \end{align*}
     which completes the proof.
   \end{proof}

\section{Agnostic Proper Learning of ReLus} \label{sec:relus-ptas}
In this section, we use our techniques to develop a proper agnostic learning algorithm
that handles ReLU activations.  We work in the standard $L_2$-regression
setting, i.e., we want to find a weight vector $\vec w$ such that
\[
  \E_{(\x,y)\sim \D}[(y-\max(0, \vec w \cdot \x))^2]\leq \min_{f \in {\cal C}^\rho}\E_{(\x,y)\sim \D}[(y-f(\x))^2] + \eps \;,
\]
where by ${\cal C}^{\rho}$ we denote the class of ReLU activations,
i.e., ${\cal C}^{\rho} = \{\vec x \mapsto \max(0,  \vec w \cdot \vec x + t): \|\vec w\|_2 \leq 1 ,\vec w\in\R^d, t\in \R^d \}$.
Moreover, we are going to use ${\cal C}_0^{\rho} = \{\vec x \mapsto \max(0,  \vec w \cdot \vec x): \|\vec w\|_2\leq 1 ,\vec w\in\R^d  \}$ and $\genf(\x)$ to denote the ReLU activation function. Finally, observe that for $\vec w\in \R^d$ with $\|\vec w\|_2 \leq 1$ it holds $\max(0,  \vec w \cdot \vec x)=\snorm{2}{\vec w}\max(0,  \vec w \cdot \vec x/\snorm{2}{\vec w})$.
To keep the presentation simple we are going to assume, similarly to \cite{DGKKS20}
  that the observed labels $y$ are bounded in $[-1,1]$.
For the rest of the section, we assume that for the labels $y$, it holds $|y|<1$.

The pseudocode of our algorithm is given in Algorithm~\ref{alg:proper-learner-general}.

\begin{algorithm}[H]
\caption{Agnostic Proper Algorithm for ReLU Regression} \label{alg:proper-learner-general}
\begin{algorithmic}[1]
\Procedure{Agnostic-learner}{$\eps, \delta, \D$}\\
\textbf{Input:} $\eps>0$, $\delta>0$ and  distribution $\D$\\
\textbf{Output:} A hypothesis $h\in{\cal C}$ such as $\E_{(\x,y)\sim \D}[(h(\x)-y)^2]\leq \min_{f\in {\cal C}^{\genf}}\E_{(\x,y)\sim \D}[(f(\x)-y)^2]+\eps$ with probability $1-\delta$.\\
\State $k\gets  C/\eps^{4/3}$, $\eta\gets \eps^2/C$.
\State Find $P(\x)$ such $\E_{(\x,y)\sim \D}[(y-P(\x))^2]\leq \min_{P'\in{\cal P}_{k}}\E_{(\x,y)\sim \D}[(y-P'(\x))^2]+O(\eps^3)$.
\State Let $\vec M=\E_{\x\sim \D_\x}[\nabla P(\x)\nabla P(\x)^\top]$.
\State Let $V$ be the subspace spanned by the eigenvectors of $\vec M$ whose eigenvalues are at least $\eta$.
\State Construct an $\eps$-cover ${\cal H}$ of hypotheses with normal vectors in $V$ \Comment{}{see Fact~\ref{fct:cover}}.
\State Draw $\Theta(\frac{1}{\eps^2}\log(|{\cal H}|/\delta))$ i.i.d.\ samples
from $\D$ and construct the empirical distribution $\widehat\D$.
\State $h\gets \argmin_{h'\in {\cal H}} \E_{(\x,y)\sim \widehat{\D}}[(h'(\x)-y)^2]$\label{alg:emprical-outputs-gen}
\State $\textbf{return } h$.
\EndProcedure
\end{algorithmic}
\end{algorithm}

\begin{theorem}\label{thm:proper-learner-gen}
Let $\D$ be a distribution on $\R^d\times\R$ whose $\x$-marginal is $\normal(\vec 0, \vec I)$. Algorithm~\ref{alg:proper-learner-general} draws $N=d^{O(1/\eps^{4/3})}\poly(1/\eps)\log(1/\delta)$ samples from $\D$, runs in time $  ( d^{O(1/\eps^{4/3})} + (1/\eps)^{O(1/\eps^{10/3})} ) \log(1/\delta)$, and computes a hypothesis
$h\in{\cal C}^\genf$ such that, with probability at least $1-\delta$, we have that 
$$\E_{(\x,y)\sim \D}[(y-h(\x))^2]\leq \min_{f \in {\cal C}_0^\rho}\E_{(\x,y)\sim \D}[(y-f(\x))^2] + \eps \;.$$
\end{theorem}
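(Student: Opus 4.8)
The plan is to run the direct analogue of Algorithm~\ref{alg:proper-learner} (namely Algorithm~\ref{alg:proper-learner-general}) and mirror the proof of Theorem~\ref{thm:proper-learner}, changing only the parameters and the one ReLU-specific structural step. The three ingredients are: (i) an $L_2$-regression step producing a degree-$k$ polynomial $P$ with $\E_{(\x,y)\sim\D}[(y-P(\x))^2]\le \min_{P'\in{\cal P}_k}\E[(y-P'(\x))^2]+O(\eps^3)$, where now $k=C/\eps^{4/3}$ --- Lemma~\ref{lem:polynomial-regr} applies verbatim since $|y|\le 1$, at sample cost $d^{O(1/\eps^{4/3})}\poly(1/\eps)\log(1/\delta)$; (ii) a dimension bound on the subspace $V$ spanned by the eigenvectors of $\vec M=\E_{\x\sim\D_\x}[\nabla P(\x)\nabla P(\x)^\top]$ of eigenvalue $\ge \eta=\eps^2/C$; and (iii) a structural statement that $V$ contains a near-optimal ReLU weight vector, after which an exhaustive search over an $\eps$-cover of ReLU hypotheses with weights in $V$ finishes. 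The dimension bound is obtained exactly as in Lemma~\ref{lem:dimension-bound}: since the trivial polynomial $0$ has $L_2$-error $\E[y^2]\le 1$, $O(\eps^3)$-optimality of $P$ gives $\E[P(\x)^2]=O(1)$, and then $\dim(V)\,\eta\le\tr(\vec M)=\E[\snorm{2}{\nabla P(\x)}^2]=\sum_\alpha|\alpha|c_\alpha^2\le k\,\E[P(\x)^2]=O(k)$ by Fact~\ref{fct:hermite-gradient}, so $\dim(V)=O(k/\eta)=O(1/\eps^{10/3})$; this is exactly what keeps the cover-search cost $(1/\eps)^{O(1/\eps^{10/3})}$ independent of $d$.

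The heart of the argument is the ReLU analogue of Proposition~\ref{prop:structural}: if $P$ is $O(\eps^3)$-optimal as above, then $\min_{\vec v\in V,\ \snorm{2}{\vec v}\le 1,\ t\in\R}\E_{(\x,y)\sim\D}[(y-\rho(\vec v\cdot\x+t))^2]\le \opt+O(\eps)$, where $\opt=\min_{f\in{\cal C}_0^\rho}\E[(y-f(\x))^2]$ and $\rho(u)=\max(0,u)$. I would prove this by contradiction, following Claim~\ref{clm:contradiction}--Lemma~\ref{lem:polynomial-apx}: assume no ReLU with weight in $V$ achieves loss $\le\opt+\eps$, and construct a degree-$k$ polynomial $Q$ with $\E_{\x\sim\D_\x}[Q^2(\x)]=O(1)$ that $(\eps/4)$-correlates with $y-P$, which (optimizing $P+\zeta Q$ over $\zeta=\Theta(\eps)$) decreases $\E[(y-P(\x))^2]$ by $\Omega(\eps^2)$ and contradicts the near-optimality of $P$. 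To build $Q$, let $f^*(\x)=\rho(\vec w^*\cdot\x)$ be optimal, write $\vec w^*=\vec w^*_V+\vec w^*_{V^\perp}$, put $\vec\xi=\vec w^*_{V^\perp}/\snorm{2}{\vec w^*_{V^\perp}}\in V^\perp$, and define the ``smoothed'' ReLU $f^*_V(\x)=\E_{\vec z\sim\D_{\vec\xi}}[f^*(\x_{\vec\xi^\perp}+\vec z)]$, which depends only on the projection of $\x$ onto $\vec w^*_V\in V$. Since $\vec\xi\in V^\perp$ we have $\E_{\x\sim\D_\x}[(\nabla P(\x)\cdot\vec\xi)^2]\le\eta$, so by the argument of Claim~\ref{clm:hermite-influence} the polynomial $R(\x)=\E_{\vec z\sim\D_{\vec\xi}}[P(\x_{\vec\xi^\perp}+\vec z)]$ satisfies $\E[(P(\x)-R(\x))^2]\le\eta$; as $f^*-f^*_V$ has zero conditional mean along $\vec\xi$ and $R$ depends only on $\x_{\vec\xi^\perp}$, Cauchy--Schwarz gives $|\E_{\x\sim\D_\x}[(f^*(\x)-f^*_V(\x))P(\x)]|=|\E[(f^*-f^*_V)(P-R)]|\le 2\sqrt{\eta}$ (using $\snorm{2}{f^*-f^*_V}\le 2$, which follows from $\E[(f^*)^2]\le 1$ and Jensen). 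Hence $\E_{(\x,y)\sim\D}[(f^*(\x)-f^*_V(\x))(y-P(\x))]\ge\E[(f^*-f^*_V)y]-2\sqrt{\eta}$.

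The crucial inequality is $\E[(f^*-f^*_V)y]=\Omega(\eps)$. Using $\E[(f^*-f^*_V)f^*_V]=0$ (conditional-expectation orthogonality) and $\E[(y-f^*)^2]=\opt$, expanding $\E[(y-f^*_V)^2]$ gives $\E[(f^*-f^*_V)y]=\tfrac12\snorm{2}{f^*-f^*_V}^2+\tfrac12(\E[(y-f^*_V)^2]-\opt)$, so it suffices to show $\E[(y-f^*_V)^2]\ge\opt+\Omega(\eps)$. This is where the contradiction hypothesis enters: a direct Gaussian-smoothing computation shows the smoothed ReLU $f^*_V$ is $L_2$-close to an \emph{honest} ReLU $g$ with weight vector in $V$ (and a bias), so $\E[(y-f^*_V)^2]\ge\E[(y-g)^2]-O(\eps)\ge\opt+\eps-O(\eps)=\opt+\Omega(\eps)$ for an appropriate choice of constants; the regime where $\snorm{2}{\vec w^*_{V^\perp}}$ is too small for this smoothing estimate to be tight is handled separately, since there $\rho(\vec w^*_V\cdot\x)$ is itself within $O(\eps)$ (in $L_2$, hence in loss) of $f^*$ and directly contradicts the hypothesis. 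Finally, taking $S$ to be the degree-$k$ Hermite truncation of $f^*$ and $Q(\x)=S(\x)-\E_{\vec z\sim\D_{\vec\xi}}[S(\x_{\vec\xi^\perp}+\vec z)]$, I would use the \emph{sharpened} Hermite-mass bound for ReLUs --- $\sum_{|\alpha|>k}\widehat{f^*}(\alpha)^2=O(k^{-3/2})$, which decays faster than the $O(k^{-1/2})$ bound for LTFs in Fact~\ref{fct:concetration-hermite} --- to get $\E[(f^*-f^*_V-Q)^2]=O(k^{-3/2})$ and $\E[Q^2]=O(1)$; Cauchy--Schwarz together with $\E[(y-P)^2]\le 1$ then yields $\E[Q(y-P)]\ge\Omega(\eps)-2\sqrt{\eta}-O(k^{-3/4})\ge\eps/4$ for $k=C/\eps^{4/3}$ and $\eta=\eps^2/C$ with $C$ large, completing the contradiction. (It is precisely this $O(k^{-3/2})$ estimate that allows the smaller degree $k=O(1/\eps^{4/3})$, and hence $\dim V=O(1/\eps^{10/3})$ and the claimed runtime.)

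It remains to discretize and invoke uniform convergence, as in the proof of Theorem~\ref{thm:proper-learner}. Since $\rho$ is $1$-Lipschitz, $\E_{\x\sim\D_\x}[(\rho(\vec u\cdot\x+t_1)-\rho(\vec v\cdot\x+t_2))^2]\le\snorm{2}{\vec u-\vec v}^2+(t_1-t_2)^2$, so an $\eps$-cover of the unit ball of $V$ (of size $(1/\eps)^{O(\dim V)}=(1/\eps)^{O(1/\eps^{10/3})}$ by Fact~\ref{fct:cover}) together with an $O(\eps)$-grid of biases $t$ with $|t|=O(\sqrt{\log(1/\eps)})$ (larger biases are suboptimal since $|y|\le 1$) gives a cover ${\cal H}$ with $|{\cal H}|=(1/\eps)^{O(1/\eps^{10/3})}$ that contains a hypothesis within $O(\eps)$ in squared loss of the near-optimal ReLU from the structural step, hence of loss $\opt+O(\eps)$. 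Because $|y-h(\x)|$ is sub-Gaussian (after an $O(\sqrt{\log(1/\eps)})$ truncation of $h$, costing only $O(\eps)$ in loss), $\Theta(\eps^{-2}\log(|{\cal H}|/\delta))$ i.i.d.\ samples suffice for the empirical squared losses over ${\cal H}$ to be uniformly within $\eps$ of the true ones, so Step~\ref{alg:emprical-outputs-gen} of Algorithm~\ref{alg:proper-learner-general} returns $h\in{\cal C}^\rho$ with $\E[(y-h(\x))^2]\le\opt+O(\eps)$ with probability $1-\delta$; rescaling $\eps$ by a constant gives the statement, and summing the $L_2$-regression cost $d^{O(1/\eps^{4/3})}$ and the cover-enumeration cost $(1/\eps)^{O(1/\eps^{10/3})}$ (each times $\log(1/\delta)$) gives the claimed runtime. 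The main obstacle is the structural step --- and inside it the lower bound $\E[(f^*-f^*_V)y]=\Omega(\eps)$, i.e., showing that the smoothed ReLU $f^*_V$ (which is what naturally has low influence outside $V$) is genuinely captured in $L_2$ by an actual ReLU with weight in $V$ so that the contradiction hypothesis applies, which requires a case split on $\snorm{2}{\vec w^*_{V^\perp}}$ --- together with establishing the quantitative Hermite-mass estimate $O(k^{-3/2})$ for the ReLU activation that drives the improved parameter dependence.
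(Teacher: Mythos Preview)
Your overall architecture matches the paper exactly: $L_2$-regression at degree $k=\Theta(1/\eps^{4/3})$, the influence-matrix subspace $V$ with $\dim V=O(k/\eta)=O(1/\eps^{10/3})$, the structural proposition by contradiction, and a cover plus empirical minimization. The quantitative driver---the $O(k^{-3/2})$ Hermite tail for ReLUs (Fact~\ref{fct:relu-approx} in the paper, due to \cite{GGK20})---is also what the paper uses.

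The one place you diverge is inside the structural step, at the inequality $\E[(f^*-f^*_V)y]\ge \eps/2$. You propose to show $\E[(y-f^*_V)^2]\ge \opt+\Omega(\eps)$ by approximating the smoothed ReLU $f^*_V$ in $L_2$ by an ``honest'' ReLU $g$ with weight in $V$ and then invoking the contradiction hypothesis on $g$, with a case split on $\snorm{2}{\vec w^*_{V^\perp}}$. You correctly flag this as the main obstacle; as stated it is not obviously closable, since for intermediate $\sigma=\snorm{2}{\vec w^*_{V^\perp}}$ the best ReLU approximation of $f^*_V$ is only $O(\sigma)$-close in $L_2$, which need not be $O(\eps)$. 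The paper avoids this entirely: it expands the contradiction hypothesis as $2\,\E[(f^*-\rho(\vec v\cdot\x+t))y]>\eps+\E[(f^*)^2]-\E[\rho^2(\vec v\cdot\x+t)]$, averages over $t=\vec w_{V^\perp}\cdot\vec z$ with $\vec z\sim\D_{\vec\xi}$, and observes that $\E_{\vec z}\E_{\x}[\rho^2(\vec w_V\cdot\x+\vec w_{V^\perp}\cdot\vec z)]=\E_\x[\rho^2(\vec w^*\cdot\x)]=\E[(f^*)^2]$ because $\vec w_V\cdot\x+\vec w_{V^\perp}\cdot\vec z$ has the same law as $\vec w^*\cdot\x$. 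The second-moment terms cancel exactly, giving $2\,\E[(f^*-f^*_V)y]>\eps$ with no approximation and no case analysis.

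In fact your own identity already yields this without the detour: combining $\E[(f^*-f^*_V)y]=\tfrac12\|f^*-f^*_V\|^2+\tfrac12(\E[(y-f^*_V)^2]-\opt)$ with the bias--variance decomposition $\E_{\vec z}\E[(y-\rho_{\vec z})^2]=\E[(y-f^*_V)^2]+\E_{\vec z}\|\rho_{\vec z}-f^*_V\|^2$ and the observation that $\E_{\vec z}\|\rho_{\vec z}-f^*_V\|^2=\|f^*-f^*_V\|^2$ (both equal $\E_\x\var_{\vec z}\rho(\vec w_V\cdot\x+\vec w_{V^\perp}\cdot\vec z)$) gives $\E[(f^*-f^*_V)y]>\eps/2$ directly from the contradiction hypothesis applied to each $\rho_{\vec z}$. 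So your plan is salvageable, but the ``approximate $f^*_V$ by a true ReLU'' route is the wrong branch; replace it by the cancellation (or bias--variance) argument and the obstacle disappears.
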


The main structural result of this section is the following proposition showing that we can
perform dimension reduction by looking at high-influence directions of a low-degree polynomial.

\begin{proposition}\label{prop:structural-gen}
Let $C$ be a sufficiently large constant, fix $\eps>0$, $k = C/\eps^{4/3}$.
Let $P(\x) \in \mathcal{P}_k$ be a polynomial such that
\[
    \E_{(\x,y)\sim \D}[(y-P(\x))^2]\leq \min_{P' \in \mathcal{P}_k}\E_{(\x,y)\sim \D}[(y-P'(\x))^2]+O(\eps^3)
    \,.
\]
Moreover, let $\vec M= \E_{\x\sim \D_\x}[\nabla P(\x)\nabla P(\x)^\top]$ and
$V$ be the subspace spanned by the eigenvectors of $\vec M$ with eigenvalues
larger than $\eta$, where $\eta=\eps^2/C$. Then, for any function
$f\in {\cal C}_0^{\genf}$, it holds
\[
  \min_{\vec v\in V, \|\vec v\|_2 \leq 1, t\in \R} \E_{(\x,y)\sim \D}[(\rho(\vec v \cdot \x + t) -y)^2]
  \leq
  \E_{(\x,y)\sim \D}[(f(\x) - y)^2] + \eps \,.
\]
\end{proposition}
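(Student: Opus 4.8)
The plan is to follow the proof of Proposition~\ref{prop:structural} almost verbatim, replacing the correlation/$0$--$1$ statements by $L_2$-regression statements. Suppose for contradiction that $f(\x)=\rho(\vec w\cdot\x)\in{\cal C}_0^\rho$ (so $\snorm{2}{\vec w}\leq 1$) is such that $\E_{(\x,y)\sim\D}[(\rho(\vec v\cdot\x+t)-y)^2]>\E_{(\x,y)\sim\D}[(f(\x)-y)^2]+\eps$ for every $\vec v\in V$ with $\snorm{2}{\vec v}\leq 1$ and every $t\in\R$. Exactly as in Claim~\ref{clm:contradiction}, it suffices to produce a polynomial $Q(\x)$ of degree at most $k$ with $\E_{\x\sim\D_\x}[Q^2(\x)]\leq 9$ and $\E_{(\x,y)\sim\D}[Q(\x)(y-P(\x))]\geq\eps/4$: then $P(\x)+\zeta Q(\x)$ with $\zeta=\Theta(\eps)$ is a degree-$k$ polynomial that decreases $\E[(y-P)^2]$ by $\Omega(\eps^2)$, contradicting the $O(\eps^3)$-near-optimality of $P$.

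To build $Q$, write $\vec w=\vec w_V+\vec w_{V^\perp}$; if $\vec w_{V^\perp}=\vec 0$ the statement is immediate (take $\vec v=\vec w,t=0$), so set $\vec\xi=\vec w_{V^\perp}/\snorm{2}{\vec w_{V^\perp}}\in V^\perp$ and define the smoothed function $f_V(\x)=\E_{\vec z\sim\D_{\vec\xi}}[f(\vec z+\x_{\vec\xi^\perp})]=\E_{z\sim\normal(0,1)}[\rho(\vec w_V\cdot\x+\snorm{2}{\vec w_{V^\perp}}z)]$, a convex combination of functions $\rho(\vec v\cdot\x+t)$ with $\vec v=\vec w_V\in V$, $\snorm{2}{\vec v}\leq 1$. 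I would first prove the analogue of Lemma~\ref{lem:correlation-with-new-f}: $\E_{\x\sim\D_\x}[(f(\x)-f_V(\x))(y-P(\x))]\geq\eps/2-2\sqrt\eta$. For the label term, expand each squared loss via $\E[hy]=\frac{1}{2}(\E[h^2]-\E[(h-y)^2]+\E[y^2])$; applying the contradiction hypothesis to each $\rho(\vec w_V\cdot\x+\snorm{2}{\vec w_{V^\perp}}z)$ and using the crucial fact that $\vec w\cdot\x$ and $\vec w_V\cdot\x+\snorm{2}{\vec w_{V^\perp}}z$ have the same law (so $\E[f^2]=\E_z\E_\x[\rho(\vec w_V\cdot\x+\snorm{2}{\vec w_{V^\perp}}z)^2]$), the second-moment terms cancel and we obtain $\E_{\x\sim\D_\x}[(f(\x)-f_V(\x))y]>\eps/2$. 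For the polynomial term, I would repeat the argument of Lemma~\ref{lem:correlation-with-new-f}: introduce $R(\x)=\E_{\vec z\sim\D_{\vec\xi}}[P(\x_{\vec\xi^\perp}+\vec z)]$, note $\E[(f-f_V)R]=0$, bound $\E[(f-f_V)^2]\leq 4$ (since $\E[\rho(\vec w\cdot\x)^2]\leq\snorm{2}{\vec w}^2\leq 1$ and $\E[f_V^2]\leq\E[f^2]$ by Jensen), and invoke Claim~\ref{clm:hermite-influence} together with $\vec\xi\in V^\perp$ to get $\E[(P-R)^2]\leq\E[(\nabla P\cdot\vec\xi)^2]=\vec\xi^\top\vec M\vec\xi\leq\eta$; Cauchy--Schwarz then gives $|\E[(f-f_V)P]|\leq 2\sqrt\eta$.

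The remaining step replaces $f-f_V$ by a degree-$k$ polynomial, as in Lemma~\ref{lem:polynomial-apx}, and here one uses the ReLU analogue of Fact~\ref{fct:concetration-hermite}: if $S(\x)=\sum_{|\alpha|\leq k}\hat f(\alpha)H_\alpha(\x)$ is the degree-$k$ Hermite truncation of $f=\rho(\vec w\cdot\x)$ with $\snorm{2}{\vec w}\leq 1$, then $\E_{\x\sim\normal(\vec 0,\vec I)}[(S(\x)-f(\x))^2]=O(1/k^{3/2})$, an $O(1/k)$ improvement over the LTF bound that reflects the Lipschitzness of $\rho$. This reduces to the one-dimensional estimate $\sum_{j>k}\hat\rho(j)^2=O(k^{-3/2})$, which follows from the identity $\widehat{\rho'}(j)=\sqrt{j+1}\,\hat\rho(j+1)$ and the fact that $\rho'$ is (up to a constant) a step function whose Hermite mass beyond degree $k$ is $O(1/\sqrt k)$. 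Taking $Q(\x)=S(\x)-\E_{\vec z\sim\D_{\vec\xi}}[S(\x_{\vec\xi^\perp}+\vec z)]$, a polynomial of degree at most $k$, the decomposition and Jensen bound of Lemma~\ref{lem:polynomial-apx} give $\E_{\x\sim\D_\x}[(f(\x)-f_V(\x)-Q(\x))^2]=O(1/k^{3/2})$ and hence $\E[Q^2]\leq(2+O(k^{-3/4}))^2\leq 9$. Since $k=C/\eps^{4/3}$ gives $1/k^{3/2}=\eps^2/C^{3/2}$, choosing $C$ large, $\eta=\eps^2/C$, and applying Cauchy--Schwarz (with $\E[(y-P)^2]\leq 1$) yields $\E_{(\x,y)\sim\D}[Q(\x)(y-P(\x))]\geq\eps/2-2\sqrt\eta-O(k^{-3/4})\geq\eps/4$, which is the polynomial required by Claim~\ref{clm:contradiction}, completing the argument.

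I expect the main obstacle to be establishing the improved Hermite concentration $\E[(S-f)^2]=O(k^{-3/2})$ uniformly over $\snorm{2}{\vec w}\leq 1$, since it is exactly this faster decay that justifies the smaller degree $k=O(1/\eps^{4/3})$ appearing in the statement. The other point requiring care is the second-moment cancellation that makes the convex-combination step go through: unlike the correlation functional in Proposition~\ref{prop:structural}, the $L_2$-loss is not linear in the hypothesis, so one must pass to second moments to exploit the contradiction hypothesis on the averaged halfspace $f_V$.
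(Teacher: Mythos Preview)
Your proposal is correct and follows essentially the same route as the paper's proof: the same contradiction setup, the same smoothed hypothesis $f_V$ and second-moment cancellation to get $\E[(f-f_V)y]\geq\eps/2$, the same use of Claim~\ref{clm:hermite-influence} for the polynomial term, and the same degree-$k$ Hermite truncation to build $Q$. The only notable difference is that the paper invokes the ReLU Hermite-tail bound $\E[(S-f)^2]=O(k^{-3/2})$ as a black box (Fact~\ref{fct:relu-approx}, from \cite{GGK20}) rather than deriving it via the $\widehat{\rho'}(j)=\sqrt{j+1}\,\hat\rho(j+1)$ identity you outline; your derivation is fine and your anticipated obstacles are exactly the two places where the argument departs from the halfspace case.
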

The proof of Proposition~\ref{prop:structural-gen} is similar to the proof of Proposition~\ref{prop:structural}.
We provide the details below for completeness.
\begin{proof}
Suppose for the sake of contradiction that there exists a hypothesis $f\in {\cal C}_0^{\genf}$ such that for every hypothesis
$f'\in {\cal C}^{\genf}_{V}$, it holds
\begin{equation}\label{eq:contradiction-gen}
  \min_{\vec v\in V, \|\vec v\|_2 \leq 1, t\in \R}
  \E_{(\x,y)\sim \D}[(\rho(\vec v \cdot \x + t) -y)^2]
  >
  \E_{(\x,y)\sim \D}[(f(\x) - y)^2] + \eps \,.
\end{equation}
Equivalently, from the above equation, we have that for every $\vec v \in V$
with $\|\vec v\|_2 \leq 1$ and $t \in \R$:
\begin{equation}
  \label{eq:correlation_bound}
  2 \E_{(\x,y)\sim \D}[(f(\x) - \rho(\vec v \cdot \x + t)) y ]
  >
  \eps + \E_{\x\sim \D_\x}[f^2(\x)] - \E_{\x \sim \D_\x}[\rho^2(\vec v \cdot \x  + t )]
  \,.
\end{equation}
\begin{claim}\label{clm:contradiction-gen}
It suffices to show that there exists some polynomial $Q(\x)$ of degree at most $k$, 
with $\E_{\x\sim \D_\x}[Q^2(\x)]\leq 4$,
that $(\eps/4)$-correlates  with $(y- P(\x))$, i.e.,
\[\E_{(\x,y)\sim \D}[Q(\x) (y-P(\x))]\geq \eps/4\;.\]
\end{claim}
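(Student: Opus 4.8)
The plan is to replicate, essentially verbatim, the reduction in the proof of Claim~\ref{clm:contradiction}: the argument that ``a correlating polynomial yields a strictly better $L_2$-approximator'' does not care whether the target activation is a halfspace or a ReLU. So, given a polynomial $Q(\x)$ of degree at most $k$ with $\E_{\x\sim\D_\x}[Q^2(\x)]\le 4$ and $\E_{(\x,y)\sim\D}[Q(\x)(y-P(\x))]\ge \eps/4$, I would set $P''(\x) = P(\x) + \zeta\,Q(\x)$ for a parameter $\zeta = c\,\eps$ with $c$ a small absolute constant to be fixed, and note $\deg P'' \le k$, so $P'' \in \mathcal{P}_k$.

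Next I would expand the $L_2$-error of $P''$:
\[
  \E_{(\x,y)\sim\D}[(y-P''(\x))^2]
  = \E_{(\x,y)\sim\D}[(y-P(\x))^2]
  - 2\zeta\,\E_{(\x,y)\sim\D}[Q(\x)(y-P(\x))]
  + \zeta^2\,\E_{\x\sim\D_\x}[Q^2(\x)] \,.
\]
Inserting the correlation lower bound and the norm upper bound gives
\[
  \E_{(\x,y)\sim\D}[(y-P''(\x))^2]
  \le \E_{(\x,y)\sim\D}[(y-P(\x))^2] - \tfrac{\zeta\eps}{2} + 4\zeta^2 \,,
\]
and choosing $\zeta=\eps/16$ makes the last two terms sum to $-\eps^2/64$. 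Hence $P''$ improves on $P$ by $\Omega(\eps^2)$ in $L_2$-error.

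Finally I would invoke the hypothesis of Proposition~\ref{prop:structural-gen} that $P$ is $O(\eps^3)$-close in $L_2$-error to the minimizer over $\mathcal{P}_k$: for $\eps$ below a suitable absolute constant, $\eps^2/64$ dominates the $O(\eps^3)$ slack, so $P''$ strictly beats the claimed near-optimal $P$, a contradiction; this contradicts~\eqref{eq:contradiction-gen}, which is exactly what the claim asserts suffices. There is no genuine obstacle in this step — the only care needed is choosing the constant $c$ (equivalently $\zeta$) so that the quadratic $-\zeta\eps/2+4\zeta^2$ is comfortably negative and outweighs the $O(\eps^3)$ term. The substantive work, as in the halfspace case, is producing the correlating polynomial $Q$ itself, which is handled separately by a smoothing-plus-Hermite-truncation argument in the spirit of Lemmas~\ref{lem:correlation-with-new-f} and~\ref{lem:polynomial-apx}, adapted to the ReLU approximation bounds.
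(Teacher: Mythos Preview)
Your proposal is correct and follows essentially the same approach as the paper: the paper's proof simply states that $P(\x)+\zeta Q(\x)$ with $\zeta=\Theta(\eps)$ has degree at most $k$ and decreases the $L_2$-error by $\Omega(\eps^2)$, contradicting the $O(\eps^3)$ near-optimality of $P$. Your version is in fact more explicit, carrying out the square expansion and fixing $\zeta=\eps/16$, but the underlying idea is identical.
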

\begin{proof}
We have that the polynomial $P(\x)+ \zeta Q(\x)$,
for $\zeta= \Theta( \eps) $, has degree at most $k$ and
decreases the value of $\E_{(\x,y)\sim \D}[(y-P(\x))^2]$
by at least $\Omega(\eps^2)$, which contradicts the optimality of
$P(\x)$, i.e., that $P(\x)$, $O(\eps^3)$-close to the polynomial that minimizes the
$L_2$ error with $y$.
\end{proof}

We now construct such a polynomial $Q(\x)$.
We have $f(\x)=
    \genf(\vec w \cdot \x )
    =
    \genf(\vec w_{V} \cdot \x + \vec w_{V^\perp} \cdot \x)$, for some $0<a\leq1$.
It holds that $\vec w_{V^\perp} \neq \vec 0$ since otherwise we
would have that $f \in {\cal C}^{\genf}_V$.
For simplicity, we denote $\vec \xi = \vec w_{V^\perp}/\snorm{2}{ \vec w_{V^\perp}}$.  Notice that
the direction $\vec \xi$ is of low influence since $\vec \xi \in V^\perp$.
Recall, that by $\D_{\vec \xi}$ we denote the projection of $\D$ onto the (one-dimensional) subspace
spanned by $\vec \xi$.
We define $f_{V}(\x)=\E_{\vec z \sim \D_{\vec \xi }}[f(\vec z + \x_{V})]$:
a convex combination of hypotheses in ${\cal C}^{\genf}_V$.
In particular, $f_V(\x)$ is a smoothed version of the hypothesis $\genf(\vec w_V \cdot \x +t)$, whose normal vector belongs in $V$.
We first observe that by \eqref{eq:correlation_bound} $f_V(\x)$ cannot correlate too well with $y$:
\begin{align}
  \label{eq:smooth-correlation-lb}
  2 \E_{(\x,y)\sim \D} [(f(\x) - f_V(\x)) y ]
  &=
  2 \E_{\vec z \sim \D_{\vec \xi}}[
  \E_{(\vec x, y) \sim \D} [(f(\x) - \rho(\vec w \cdot \x_V + \vec w \cdot \vec z)) y ]  ]
  \nonumber
  \\
  &\geq \eps + \E_{\vec x \sim \D_\x}[f^2(\x)]
  -
  \E_{\vec z \sim \D_{\vec \xi}} [ \E_{\vec x \sim \D_{\vec x}} [\rho^2(\vec w \cdot \x_V + \vec w \cdot \vec z) ] ]
  = \eps\,,
\end{align}
where the last equality follows by the fact that
\[
  \E_{\vec x \sim \D_\x}[f^2(\x)]
  = \E_{\vec u \sim \D_{\vec \xi^\perp}}\big[ \E_{\vec z \sim \D_\vec \xi}[f^2(\vec u + \vec z)] \big]
  = \E_{\vec z \sim \D_\vec \xi} \big[\E_{\vec u \sim \D_{\vec \xi^\perp}}
    [\rho^2(\vec u \cdot \vec w_{V} + \vec w \cdot \vec z)] \big] \
  = \E_{\vec z \sim \D_\vec \xi} \big[\E_{\vec u \sim \D_{\vec x}}
    [\rho^2( \vec u_V \cdot \vec w  + \vec w \cdot \vec z)] \big]\,.
\]
Our argument consists two main claims.  We first show that the function $f(\x) - f_V(\x)$ correlates
non-trivially with $y - P(\x)$.  Then we show that we can approximate $f(\x)-f_V(\x)$ by
a low degree polynomial $Q(\x)$ that maintains non-trivial correlation with $y - P(\x)$,
see Claim~\ref{clm:polynomial-apx-gen}.  We start by proving the first claim.
\begin{claim}\label{clm:correlation-with-new-f-gen}
It holds
\[
    \E_{\x\sim \D_\x}[(f(\x)-f_V(\x))(y-P(\x))] \geq \eps/2 -  \sqrt{2\eta} \,.
\]
\end{claim}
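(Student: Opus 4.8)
The plan is to mirror the proof of Lemma~\ref{lem:correlation-with-new-f} from the halfspace setting, substituting the ReLU-specific estimates. The starting point is Equation~\eqref{eq:smooth-correlation-lb}, which already gives that the smoothed residual $f-f_V$ correlates with the labels: $2\,\E_{(\x,y)\sim\D}[(f(\x)-f_V(\x))\,y]\ge\eps$, i.e.\ $\E_{(\x,y)\sim\D}[(f-f_V)y]\ge\eps/2$. First I would write
\[
\E_{\x\sim\D_\x}[(f(\x)-f_V(\x))(y-P(\x))]=\E_{(\x,y)\sim\D}[(f-f_V)y]-\E_{\x\sim\D_\x}[(f-f_V)P]\ \ge\ \frac{\eps}{2}-\E_{\x\sim\D_\x}[(f(\x)-f_V(\x))P(\x)]\,,
\]
so the task reduces to showing $\E_{\x\sim\D_\x}[(f-f_V)P]\le\sqrt{2\eta}$.

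For this cross term I would reuse the device from the halfspace proof: set $R(\x)=\E_{\vec z\sim\D_{\vec \xi}}[P(\x_{\vec \xi^\perp}+\vec z)]$, a degree-$k$ polynomial depending only on $\x_{\vec \xi^\perp}$. Since $f_V(\x)=\E_{\vec z\sim\D_{\vec \xi}}[f(\x_{\vec \xi^\perp}+\vec z)]$ is exactly the conditional expectation $\E[f\mid\x_{\vec \xi^\perp}]$ (because $f$ depends on $\x$ only through $\x_V$ and $\vec \xi\cdot\x$, with the latter an independent standard Gaussian), Fubini gives $\E_{\x\sim\D_\x}[(f-f_V)\,g]=0$ for every $g$ that is a function of $\x_{\vec \xi^\perp}$ alone; in particular $\E[(f-f_V)R]=0$, so $\E[(f-f_V)P]=\E[(f-f_V)(P-R)]$. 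Cauchy--Schwarz bounds this by $\sqrt{\E_{\x\sim\D_\x}[(f-f_V)^2]}\cdot\sqrt{\E_{\x\sim\D_\x}[(P-R)^2]}$, and for the second factor Claim~\ref{clm:hermite-influence} yields $\E[(P-R)^2]\le\E[(\nabla P\cdot\vec \xi)^2]=\vec \xi^\top\vec M\vec \xi\le\eta$, the last inequality because $\vec \xi\in V^\perp$ while $V$ collects all eigenvectors of $\vec M$ with eigenvalue exceeding $\eta$.

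The one genuinely new point is the first factor: a ReLU unit, unlike a halfspace, is unbounded, so I need an explicit $L_2$ bound on $f-f_V$. Writing $f(\x)=a\,\rho(\hat{\vec w}\cdot\x)$ with $\hat{\vec w}$ a unit vector and $a=\|\vec w\|_2\le1$, we get $\E_{\x\sim\D_\x}[f^2]=a^2\,\E_{z\sim\normal(0,1)}[\rho(z)^2]=a^2/2\le1/2$, and by Jensen $\E[f_V^2]\le\E[f^2]$, so $\E[(f-f_V)^2]\le2\E[f^2]+2\E[f_V^2]\le2$. (The variance decomposition actually gives the sharper $\E[(f-f_V)^2]=\E[f^2]-\E[f_V^2]\le1/2$, but the bound $2$ suffices.) Combining the two factors, $\E[(f-f_V)(P-R)]\le\sqrt{2\eta}$, and feeding this back into the displayed inequality gives $\E_{\x\sim\D_\x}[(f-f_V)(y-P)]\ge\eps/2-\sqrt{2\eta}$, as claimed. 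I do not expect a real obstacle here---the argument is essentially a transcription of the halfspace case---and the only thing to watch is the bookkeeping of the constant factors, since the ReLU bound $\eps/2-\sqrt{2\eta}$ differs from the halfspace bound $\eps-2\sqrt{\eta}$ precisely through the $1/2$ inherited from \eqref{eq:smooth-correlation-lb} and the bound $\E[(f-f_V)^2]\le2$ replacing $\le4$.
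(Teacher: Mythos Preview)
Your proposal is correct and follows essentially the same approach as the paper's proof: the same decomposition into the $y$-term (handled via Equation~\eqref{eq:smooth-correlation-lb}) and the $P$-term, the same auxiliary polynomial $R$, the observation that $\E[(f-f_V)R]=0$, and Cauchy--Schwarz combined with Claim~\ref{clm:hermite-influence}. If anything, you give a more explicit justification of the bound $\E_{\x\sim\D_\x}[(f-f_V)^2]\le 2$ than the paper does at this point.
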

\begin{proof}
We have $f_{V}(\x)=\E_{\vec z \sim \D_{\vec \xi }}[f(\vec z + \x_{\vec \xi^\perp})]
    = \E_{\vec z \sim \D_{\vec \xi }}[\genf( \vec w_V \cdot \x_{V} + \vec w \cdot \vec z )]
$ and, since $f_{V}$ is a convex combination of hypothesis in ${\cal C}^{\genf}_V$,
from Equation~\eqref{eq:contradiction-gen}, we see that $\E_{(\x,y)\sim \D}[(f(\x)-f_V(\x))y]\geq \eps$.
Thus, we have
\begin{align}
\label{eq:correlation-gen}
\E_{(\x,y)\sim \D}[(f(\x) - f_V(\x)) (y-P(\x))]
 & = \E_{(\x,y)\sim \D}[(f(\x) - f_V(\x)) y]- \E_{\x\sim \D_\x}[(f(\x)-f_V(\x))P(\x)]
\nonumber
\\
 & \geq \eps/2 - \E_{\x\sim \D_\x}[(f(\x)-f_V(\x))P(\x)]
\,.
\end{align}
To deal with $\E_{\x\sim \D_\x}[(f(\x)-f_V(\x))P(\x)]$,
we first observe that for any function $g(\x)$ depending only on
the projection of $\x$ onto the subspace $\vec \xi^\perp$, i.e.,
it holds
$g(\vec x) = g(\x_{\vec\xi^\perp})$,
we have
\[
    \E_{\x \sim \D_\x}[(f(\x)-f_V(\x))g(\x)] =
    \E_{\vec v \sim \D_{\vec \xi^{\perp}}} \left[\E_{\vec z \sim \D_{\vec \xi}} [f(\vec v + \vec z) - f_V(\vec v)] ~ g(\vec v)  \right] = 0 \,,
\]
since for every $\vec x \in \R^d$, it holds
$f_V(\vec x)
    = \E_{\vec z \sim \D_{\vec \xi}} [f(\vec x_{\vec \xi^\perp} + \vec z)]
    = \E_{\vec z \sim \D_{\vec \xi}} [f(\vec x_V + \vec z)]
$.
Unfortunately, this is not true since $P(\vec x)$ is not only
a function of $\vec x_{\vec \xi^\perp}$.
However, since $V$ contains the high influence eigenvectors
it holds that $P$ is almost a function of $\vec x_{\vec \xi^\perp}$.
In fact, we show that we can replace the polynomial $P$ by a different polynomial
of degree at most $k$ that only depends on the projection of $\x$ on $\vec \xi^\perp$.
Similarly to the definition of the ``smoothed" hypothesis $f_V$, we
define $R(\x)=\E_{\vec z \sim \D_{\vec \xi}}[P(\x_{\vec \xi^\perp} + \vec z)]$.
We first prove that $R(\x)$ is close to $P(\x)$ in the $L_2$ sense.

Now, adding and subtracting $R(\x) = \E_{\vec z \sim \D_\vec \xi}[P(\x_{\vec \xi^\perp}+ \vec z)]$, we get
\begin{align*}
\E_{\x\sim \D_\x}[(f(\x) - f_V(\x)) P(\x)] =
\E_{\x\sim \D_\x}[(f(\x)-f_V(\x))(P(\x)-R(\x_{\vec \xi^\perp}))]
+\E_{\x\sim \D_\x}[(f(\x)-f_V(\x)) R(\x_{\vec \xi^\perp})]\;.
\end{align*}
The second term equals to zero, from the fact that
$\E_{\vec z \sim \D_{\vec \xi}} [f(\vec z +\x_{\vec \xi^\perp})-f_V(\x_{\vec \xi^\perp})]=0$.
Using Cauchy-Schwarz inequality, we get
\begin{align}
\E_{\x\sim \D_\x}[(f(\x)-f_V(\x))(P(\x)-R(\x_{\vec \xi^\perp}))]
 & \leq \sqrt{\E_{\x\sim \D_\x}[(f(\x)-f_V(\x))^2]\E_{\x\sim \D_\x}[(P(\x) - R(\x_{\vec \xi^\perp}))^2]} \nonumber                 \\
 & \leq \sqrt{2}\sqrt{\E_{\x\sim \D_\x}[(P(\x)- R(\x_{\vec \xi^\perp}))^2]}\leq \sqrt{2\eta}\;,\label{eq:bound-of-polynomials-gen}
\end{align}
where we used Claim~\ref{clm:hermite-influence}. Using Equation~\eqref{eq:correlation-gen}, we get that
\[
    \E_{(\x,y)\sim \D}[(f(\x) - f_V(\x)) (y-P(\x))]\geq \eps/2-\sqrt{2\eta}\;,
\]
which completes the proof of Claim~\ref{clm:correlation-with-new-f-gen}.
\end{proof}

\begin{claim}\label{clm:polynomial-apx-gen}
There exists a  polynomial $Q(\x)$ of degree $O(1/\eps^{4/3})$ such that
$\E_{\x\sim \D_\x}[Q(\x)(y - P(\x))] \geq \eps/4 - \sqrt{2\eta}$ and  $\E_{\x\sim \D_\x}[Q^2(\x)] \leq 4$.
\end{claim}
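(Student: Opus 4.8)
The plan is to follow the structure of the proof of Lemma~\ref{lem:polynomial-apx}, replacing the Hermite concentration bound for halfspaces (Fact~\ref{fct:concetration-hermite}) with the sharper bound available for the piecewise-linear ReLU activation; this is exactly what lets the degree of $Q$ drop from $O(1/\eps^4)$ to $O(1/\eps^{4/3})$. First I would record the ReLU analogue of Fact~\ref{fct:concetration-hermite}: if $S(\x)$ denotes the truncation up to degree $k$ of the Hermite expansion of $f(\x)=\rho(\vec w \cdot \x)$ with $\snorm{2}{\vec w}\le 1$, then $\E_{\x\sim\D_\x}[(f(\x)-S(\x))^2]=O(k^{-3/2})$. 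Indeed, by positive homogeneity $f(\x)=\snorm{2}{\vec w}\,\rho(\hat{\vec w}\cdot\x)$ with $\hat{\vec w}=\vec w/\snorm{2}{\vec w}$, so $f$ is a univariate function of the standard Gaussian $\hat{\vec w}\cdot\x$, and a direct computation of the univariate Hermite coefficients of $\rho$ (which vanish in odd degrees $\ge 3$ and decay like $\hat\rho(j)^2=\Theta(j^{-5/2})$ in even degrees) yields $\sum_{j>k}\hat\rho(j)^2=O(k^{-3/2})$.

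Given this bound, I would set $Q(\x)=S(\x)-\E_{\vec z\sim\D_{\vec\xi}}[S(\x_{\vec\xi^\perp}+\vec z)]$, exactly as in the halfspace argument, so that $\deg(Q)\le k$. Writing $f(\x)-f_V(\x)-Q(\x)=(f(\x)-S(\x))-\big(f_V(\x)-\E_{\vec z\sim\D_{\vec\xi}}[S(\x_{\vec\xi^\perp}+\vec z)]\big)$, using $(a+b)^2\le 2a^2+2b^2$, and applying Jensen's inequality to the smoothed term (exactly as is done for $f_V$), I would obtain $\E_{\x\sim\D_\x}[(f(\x)-f_V(\x)-Q(\x))^2]=O(k^{-3/2})$. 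The reverse triangle inequality then gives $\sqrt{\E_{\x\sim\D_\x}[Q^2(\x)]}\le \sqrt{\E_{\x\sim\D_\x}[(f(\x)-f_V(\x))^2]}+O(k^{-3/4})$, and since $\E_{\x\sim\D_\x}[(f-f_V)^2]\le 4\,\E_{\x\sim\D_\x}[f^2]\le 2$ (Jensen together with $\E[\rho(\hat{\vec w}\cdot\x)^2]=1/2$ and $\snorm{2}{\vec w}\le 1$), this is at most $2$ for $k$ larger than an absolute constant; hence $\E_{\x\sim\D_\x}[Q^2(\x)]\le 4$.

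For the correlation, I would decompose $\E_{\x\sim\D_\x}[Q(\x)(y-P(\x))]=\E_{\x\sim\D_\x}[(f(\x)-f_V(\x))(y-P(\x))]+\E_{\x\sim\D_\x}[(Q(\x)-(f(\x)-f_V(\x)))(y-P(\x))]$, lower-bound the first term by $\eps/2-\sqrt{2\eta}$ using Claim~\ref{clm:correlation-with-new-f-gen}, and lower-bound the second term by $-\sqrt{\E_{\x\sim\D_\x}[(Q-(f-f_V))^2]\,\E_{(\x,y)\sim\D}[(y-P)^2]}\ge-\sqrt{O(k^{-3/2})\cdot 2}=-O(k^{-3/4})$, using $\E[(y-P)^2]\le\E[y^2]+O(\eps^3)\le 2$ (valid since $|y|\le 1$ and $P$ is $O(\eps^3)$-close to the $L_2$-optimal polynomial). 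Choosing $k=C/\eps^{4/3}$ with $C$ sufficiently large makes $O(k^{-3/4})\le\eps/4$, so $\E_{\x\sim\D_\x}[Q(\x)(y-P(\x))]\ge\eps/4-\sqrt{2\eta}$, which is the claim. The one genuinely new ingredient — and the step I expect to be the main obstacle — is the $O(k^{-3/2})$ Hermite tail bound for the ReLU; the rest is a routine transcription of the proof of Lemma~\ref{lem:polynomial-apx}.
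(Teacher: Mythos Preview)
Your proposal is correct and matches the paper's own proof essentially line for line: the same choice $Q=S-\E_{\vec z\sim\D_{\vec\xi}}[S(\x_{\vec\xi^\perp}+\vec z)]$, the same $(a+b)^2\le 2a^2+2b^2$ plus Jensen step, the same Cauchy--Schwarz with $\E[(y-P)^2]\le 2$, and the same reverse triangle inequality for the $L_2$ bound. The $O(k^{-3/2})$ Hermite tail bound for the ReLU that you flag as the key new ingredient is exactly what the paper invokes (as a black-box fact from \cite{GGK20}) in place of the halfspace bound.
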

\begin{proof}
For any polynomial $Q(\x)$, we have
\begin{align}
\E_{\x\sim \D_\x}[Q(\x)(y - P(\x))] & = \E_{\x\sim \D_\x}  [(Q(\x) + (f(\x)-f_V(\x)) - (f(\x) - f_V(\x)) )(y-P(\x)) ]\nonumber
\\   & \geq   \eps/2 - 2 \sqrt{\eta} + \E_{\x\sim \D_\x}[(Q(\x) - (f(\x) - f_V(\x)) )(y-P(\x)) ]\label{eq:connective_equ-gen}\;,
\end{align}
where we used Claim~\ref{clm:correlation-with-new-f-gen}.
By choosing $Q(\x)=S(\x)-\E_{\vec z \sim \D_{\vec \xi }}[S(\x_{\xi^\perp} +
        \xi)]$, where we denote by $S(\x)$ the Hermite expansion of $f$ truncated up to
degree $k$, $S(\x) = \sum_{|\alpha| \leq k } \hat{f}(\alpha) H_\alpha(\x)$, we show that
\[
    \E_{\x\sim \D_\x}[(f(\x)-f_V(\x)-Q(\x))^2]\leq \eps^2\;.
\]
We need the following fact:
\begin{fact}[\cite{GGK20}]\label{fct:relu-approx}
Let $f\in{\cal C}_0^{\genf}$, and let $S$ be the Hermite expansion up to $k$-degree
of $f$, i.e., $S(\x) = \sum_{|\alpha| \leq k} \hat{f}(\alpha) H_\alpha(\x)$. Then
$
    \E_{\x \sim \normal(\vec 0,\vec I)}[(S(\x)-f(\x))^2]=O(k^{-3/2})
$.
\end{fact}
Using the inequality $(a+b)^2\leq 2a^2 + 2b^2$, we get that
\[
    \E_{\x\sim \D_\x}[(f(\x)-f_V(\x)-Q(\x))^2]\leq 2\E_{\x\sim \D_\x}[(f(\x)-S(\x))^2] + 2\E_{\x\sim \D_\x}[(f_V(\x)-\E_{\vec z \sim \D_{\vec \xi }}[S(\x_{\xi^\perp})])^2]\;.
\]
Moreover, from Jensen's inequality, it holds that
\[
    \E_{\x\sim \D_\x}[(f_V(\x)-\E_{\vec z \sim \D_{\vec \xi }}[S(\x_{\xi^\perp}+\xi)])^2]\leq \E_{\x\sim \D_\x}[(f(\x)-S(\x))^2]=O\left(k^{-3/2}\right)\;,
\]
where in the last equality we used the Fact~\ref{fct:relu-approx}. Choose $k=\Theta(1/\eps^{4/3})$. Applying Cauchy-Schwartz to the Equation~\eqref{eq:connective_equ-gen}, we get
\begin{align*}
\E_{(\x,y)\sim \D}[Q(\x)(y - P(\x))] & \geq
\eps/2 - \sqrt{2\eta} - \sqrt{\E_{\x\sim \D_\x}[(Q(\x) - (f(\x) - f_V(\x)))^2]\E_{(\x,y)\sim \D}[(y-P(\x))^2]} \\&\geq\eps/4 -  \sqrt{2\eta}\;,
\end{align*}
where we used the fact that $\E_{(\x,y)\sim \D}[(y-P(\x))^2]\leq 2$.
Note that from the reverse triangle inequality it holds that
\begin{equation}\label{eq:norm-bound-gen}
\sqrt{\E_{\x\sim\D}[Q^2(\x)]}\leq \sqrt{  \E_{\x\sim \D_\x}[(f(\x)-f_V(\x))^2]} + \eps\leq \sqrt{2}+ \eps\;.
\end{equation}
Equation~\eqref{eq:norm-bound-gen} gives $\E_{\x\sim\D}[Q^2(\x)]\leq 4$.
This completes the proof of Claim~\ref{clm:polynomial-apx-gen}, which
completes the proof of Claim~\ref{clm:polynomial-apx-gen}.

\end{proof}
By choosing $\eta=\Theta(\eps^2)$, Claim~\ref{clm:polynomial-apx-gen} contradicts our assumption that $P(\x)$ is $O(\eps^3)$-close to the polynomial $P'(\x)$ that minimizes the $\E_{(\x,y)\sim \D}[(y-P'(\x))^2]$.  This completes the proof.
\end{proof}

The next lemma bounds the dimension of the subspace spanned by the high-influence directions
of a polynomial that minimizes the $L_2$ error with the labels $y$.

Before we proceed to the proof of Theorem~\ref{thm:proper-learner-gen}, we need an algorithm that calculates an approximate minimal polynomial for the Proposition~\ref{prop:structural-gen}.

\begin{lemma}[$L_2$-Polynomial Regression]\label{lem:polynomial-regr-gen}
Let $\D$ be a distribution on $\R^d\times\R$ whose $\x$-marginal is the standard normal 
and whose labels are bounded by $1$. Moreover, let $k \in \Z_+$, and $\eps, \delta>0$. 
There is an algorithm that draws $N=(d k)^{O(k)}\log(1/\delta)/\eps^2$
samples, runs in time $\poly(N,d)$, and outputs a polynomial $P(\x)$ of degree at most $k$ 
such that with probability $1-\delta$ it holds that
$\E_{(\x, y)\sim \D}[(y-P(\x))^2]\leq \min_{P' \in {\cal P}_k} \E_{(\x, y) \sim \D}[(y-P'(\x))^2]+\eps$.
\end{lemma}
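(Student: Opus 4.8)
The plan is to reuse verbatim the uniform-convergence argument from the proof of Lemma~\ref{lem:polynomial-regr}, the only substantive change being that the labels now lie in $[-1,1]$ instead of $\{\pm 1\}$. First I would note that the zero polynomial achieves $L_2$-error $\E_{(\x,y)\sim\D}[y^2]\le 1$, so any polynomial $P$ that is $O(\eps)$-approximately optimal over $\mathcal{P}_k$ satisfies $\E_{\x\sim\D_\x}[P^2(\x)]\le 2\E_{(\x,y)\sim\D}[y^2]+2\E_{(\x,y)\sim\D}[(y-P(\x))^2]=O(1)$ by $(a+b)^2\le 2a^2+2b^2$. Hence it suffices to run the search over degree-$k$ polynomials of bounded $L_2$-norm. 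Writing such a $P$ in the Hermite basis, $P(\x)=\sum_{\alpha\in\N^d}c_\alpha H_\alpha(\x)$, Parseval gives $\sum_\alpha c_\alpha^2=\E_{\x\sim\D_\x}[P^2(\x)]=O(1)$, and since each one-dimensional normalized Hermite polynomial of degree at most $k$ has monomial coefficients bounded by $2^{O(k)}$, every monomial coefficient of $P$ is bounded in absolute value by $C=2^{O(k)}$.

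Next I would establish the uniform-convergence statement: for the empirical distribution $S$ on $N=(dk)^{O(k)}/\eps^2$ samples, with high constant probability, every degree-$k$ polynomial $P$ with monomial coefficients bounded by $C$ satisfies $\bigl|\E_{(\x,y)\sim S}[(y-P(\x))^2]-\E_{(\x,y)\sim\D}[(y-P(\x))^2]\bigr|\le\eps$. Expanding $(y-P(\x))^2=y^2-2yP(\x)+P^2(\x)$, the $y^2$ term is a $P$-independent additive constant (it will cancel in the comparison of minimizers below), so it suffices to estimate $\E[y\,m_i(\x)]$ and $\E[m_i(\x)m_j(\x)]$ for all monomials $m_i,m_j$ of degree at most $k$. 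For any monomial product $m(\x)$ of degree at most $2k$, Chebyshev's inequality together with $|y|\le 1$ and the Gaussian moment bound $\E_{\x\sim\D_\x}[m^2(\x)]\le\E_{\x\sim\D_\x}[\|\x\|_2^{4k}]=k^{O(k)}$ gives that the empirical estimate of $\E[y\,m(\x)]$ (resp.\ $\E[m(\x)m'(\x)]$) deviates by more than $\eps/(d^{2k}C^2)$ with probability at most $C^4 d^{4k}k^{O(k)}/(N\eps^2)$; a union bound over the $d^{O(k)}$ monomial pairs and the choice $N=(dk)^{O(k)}/\eps^2$ make this $o(1)$. Multiplying the per-monomial errors by their coefficients (at most $C^2$ in total) and summing yields the claimed $\eps$-deviation.

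Given this, I would run the convex program minimizing $\E_{(\x,y)\sim S}[(y-P'(\x))^2]$ over degree-$k$ polynomials subject to the convex constraint $\|\text{coefficients}\|_\infty\le C$ (which, by the norm bound above, does not exclude the true minimizer), obtaining $P$ with $\E_{(\x,y)\sim S}[(y-P(\x))^2]\le\min_{P'\in\mathcal{P}_k,\ \|P'\|_\infty\le C}\E_{(\x,y)\sim S}[(y-P'(\x))^2]+\eps$. Letting $P^{\ast}=\argmin_{P'\in\mathcal{P}_k}\E_{(\x,y)\sim\D}[(y-P'(\x))^2]$, which has bounded coefficients, the uniform-convergence bound gives $\E_{(\x,y)\sim S}[(y-P^{\ast}(\x))^2]\le\E_{(\x,y)\sim\D}[(y-P^{\ast}(\x))^2]+\eps$ and $\E_{(\x,y)\sim\D}[(y-P(\x))^2]\le\E_{(\x,y)\sim S}[(y-P(\x))^2]+\eps$; chaining these yields $\E_{(\x,y)\sim\D}[(y-P(\x))^2]\le\min_{P'\in\mathcal{P}_k}\E_{(\x,y)\sim\D}[(y-P'(\x))^2]+O(\eps)$, which after rescaling $\eps$ by a constant is the stated conclusion. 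Finally I would boost the constant success probability to $1-\delta$ by the standard trick of repeating $O(\log(1/\delta))$ independent runs and selecting the candidate of smallest error on a fresh validation sample, which costs an extra $\log(1/\delta)$ factor and gives the claimed sample complexity $N=(dk)^{O(k)}\log(1/\delta)/\eps^2$ and runtime $\poly(N,d)$. The only genuinely technical point—exactly as in the $\{\pm 1\}$ case—is the uniform-convergence bound over all bounded-coefficient degree-$k$ polynomials; the role of the $|y|\le 1$ assumption is precisely to keep the linear term $\E[y\,m(\x)]$ under control, after which everything reduces to Gaussian moment estimates.
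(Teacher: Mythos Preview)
Your proposal is correct and follows essentially the same route as the paper, which in fact does not give a separate proof for this lemma at all: it simply states that the argument is ``nearly identical to the proof of Lemma~\ref{lem:polynomial-regr}.'' Your write-up supplies exactly the small adaptations needed for real-valued labels---observing that the $\E[y^2]$ term is $P$-independent and cancels along the empirical-to-population chaining, and that $|y|\le 1$ keeps the variance of $y\,m(\x)$ controlled by Gaussian moments---so you have actually filled in more detail than the paper does.
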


\noindent The proof of this lemma is nearly identical to the proof of Lemma~\ref{lem:polynomial-regr}.

We need the following simple fact for ReLUs. An essentially identical fact
was shown in \cite{diakonikolas2020algorithms} Equation (2) for the zero threshold case.
We provide the proof here for completeness.

\begin{fact}\label{fct:approximation-facts-gen}
Let $f_1(\x)=\genf(\vec v\cdot\x+ T)$ and $f_2(\x)=\genf(\vec u\cdot\x+T)$, for $T\in \R$ and $\vec v,\vec u$ unit vectors in $\R^d$. Then $\E_{(\x,y)\sim \D}[(f_1(\x)-f_2(\x))^2]=O(\snorm{2}{\vec v -\vec u}^2)$.
\end{fact}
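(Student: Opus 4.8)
The plan is to exploit the only property of the ReLU activation $\genf(t)=\max(0,t)$ that matters here, namely that it is $1$-Lipschitz, which reduces the claim to a one-line Gaussian second-moment computation.

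First I would record the elementary pointwise inequality $|\genf(a)-\genf(b)|\leq |a-b|$, valid for all $a,b\in\R$ (checked by splitting into the cases where $a,b$ are both nonnegative, both negative, or have opposite signs). The key point is that this inequality holds with a constant that does not depend on any additive shift, so it applies verbatim with $a=\dotp{\vec v}{\x}+T$ and $b=\dotp{\vec u}{\x}+T$. Since $f_1,f_2$ depend only on $\x$ (so the expectation over $(\x,y)\sim\D$ is just an expectation over $\x\sim\D_\x=\normal(\vec 0,\vec I)$), this gives, for every $\x\in\R^d$,
\[
(f_1(\x)-f_2(\x))^2=\big(\genf(\dotp{\vec v}{\x}+T)-\genf(\dotp{\vec u}{\x}+T)\big)^2\leq \big(\dotp{(\vec v-\vec u)}{\x}\big)^2 \,.
\]

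Next I would take expectations over $\x\sim\normal(\vec 0,\vec I)$. The random variable $\dotp{(\vec v-\vec u)}{\x}$ is a centered Gaussian with variance $\snorm{2}{\vec v-\vec u}^2$, so
\[
\E_{(\x,y)\sim\D}[(f_1(\x)-f_2(\x))^2]\leq \E_{\x\sim\normal(\vec 0,\vec I)}\big[\big(\dotp{(\vec v-\vec u)}{\x}\big)^2\big]=\snorm{2}{\vec v-\vec u}^2 \,,
\]
which is exactly the stated $O(\snorm{2}{\vec v-\vec u}^2)$ bound (indeed with absolute constant $1$).

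There is no genuine obstacle here; the only thing to verify carefully is that the Lipschitz estimate is uniform in the threshold $T$, so that the argument used in \cite{diakonikolas2020algorithms} for the zero-threshold case extends without change. (If one wanted a sharper estimate, one could instead restrict to the ``disagreement slab'' where $\dotp{\vec v}{\x}+T$ and $\dotp{\vec u}{\x}+T$ have opposite signs and pay only on that thin region, but for the $O(\cdot)$ statement this refinement is unnecessary.)
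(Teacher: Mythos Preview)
Your proof is correct. It is, however, genuinely different from the paper's argument. The paper invokes the ``correlated differences'' lemma of \cite{KTZ19}: viewing $r(t)=\genf(t+T)$ as a one-dimensional function and observing that $(\vec v\cdot\x,\vec u\cdot\x)$ is a $\rho$-correlated Gaussian pair with $\rho=\vec v\cdot\vec u$, that lemma bounds $\E[(r(\vec v\cdot\x)-r(\vec u\cdot\x))^2]$ by $2(1-\rho)\,\E[r'(x)^2]$, and then uses $1-\vec v\cdot\vec u=\tfrac12\snorm{2}{\vec v-\vec u}^2$ together with $|r'|\le 1$. Your route bypasses this machinery entirely by using only that $\genf$ is $1$-Lipschitz, which immediately gives the sharper constant $1$. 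The paper's approach has the advantage of working for any activation with bounded $\E[\|\nabla r\|_2^2]$ (not just globally Lipschitz ones), but for the ReLU specifically your argument is shorter and more elementary.
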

\begin{proof}
The proof relies on the following fact.
\begin{fact}[Correlated Differences, Lemma 6 of \cite{KTZ19}]
  Let $r(\vec x) \in L_2(\R^d, \normal^d)$ be differentiable almost
  everywhere and let
  \[
    D_\rho =
    \normal\lp(\vec 0,
    \begin{pmatrix}
      \vec I & \rho \vec I \\
      \rho \vec I & \vec I
    \end{pmatrix}
    \rp).
  \]
  We call $\rho$-correlated a pair of random variables $(\vec x, \vec y) \sim
  D_{\rho}$.  It holds
  \[
    \frac{1}{2}
    \E_{(\vec x, \vec z) \sim D_\rho}[(r(\vec x) - r(\vec z))^2]
    \leq (1-\rho) \E_{\vec x \sim \D_\x}\lp[ \snorm{2}{\nabla r(\vec x)}^2 \rp]\, .
  \]
\end{fact}
Using this fact for $\rho=\vec v \cdot \vec u$, and using the approximation $(1-\vec v \cdot \vec u)=\snorm{2}{\vec u-\vec v}^2$ the result follows.
\end{proof}
We also need the following fact about the biases of ReLUs.
\begin{fact}\label{fct:approximation-facts-bias-gen}
Let $f_1(\x)=\genf(\vec v\cdot\x-T)$ and $f_2(\x)=\genf(\vec v\cdot\x - T')$ with $T'\geq T$. 
Then $\E_{(\x,y)\sim \D}[(f_1(\x)-f_2(\x))^2]=O((T-T')^2 + 2 T' (T'-T)e^{-T^2/2})$.
\end{fact}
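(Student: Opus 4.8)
The plan is to reduce the statement to a short one-dimensional computation. Both $f_1$ and $f_2$ are functions of the single scalar $z:=\vec v\cdot\x$, which is distributed as $\normal(0,1)$ under $\D_\x=\normal(\vec 0,\vec I)$, and $f_1-f_2$ does not depend on $y$; hence, with $\genf(u)=\max(0,u)$,
\[
  \E_{(\x,y)\sim\D}\big[(f_1(\x)-f_2(\x))^2\big]
  \;=\;\E_{z\sim\normal(0,1)}\big[(\genf(z-T)-\genf(z-T'))^2\big].
\]
Since $T\le T'$, the quantity inside the expectation equals $0$ for $z\le T$, equals $(z-T)^2$ for $T\le z\le T'$, and equals $(T'-T)^2$ for $z\ge T'$; writing $\phi$ for the standard Gaussian density, this gives the exact expression
\[
  \E_{(\x,y)\sim\D}\big[(f_1-f_2)^2\big]
  \;=\;\int_T^{T'}(z-T)^2\,\phi(z)\,\d z\;+\;(T'-T)^2\,\pr_{z\sim\normal(0,1)}\big[z\ge T'\big]
\]
(the intended regime, and the one in which the stated bound is meaningful, is $0\le T\le T'$).

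Next I would bound the right-hand side. On $[T,T']$ one has $(z-T)^2\le (T'-T)^2$ pointwise, so the first summand is at most $(T'-T)^2\,\pr_{z\sim\normal(0,1)}[T\le z\le T']$; adding the tail term and merging the two probabilities gives the clean bound
\[
  \E_{(\x,y)\sim\D}\big[(f_1-f_2)^2\big]\;\le\;(T'-T)^2\,\pr_{z\sim\normal(0,1)}[z\ge T].
\]
From here the claim follows by two standard univariate Gaussian tail estimates: always $\pr_{z\sim\normal(0,1)}[z\ge T]\le 1$, which yields $\E[(f_1-f_2)^2]=O((T'-T)^2)$; and for $T\ge 0$ one additionally has $\pr_{z\sim\normal(0,1)}[z\ge T]=\tfrac12\big(1-\erf(T/\sqrt 2)\big)\le \tfrac12 e^{-T^2/2}$, which yields the sharper $\E[(f_1-f_2)^2]=O\big((T'-T)^2 e^{-T^2/2}\big)$. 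Either of these is at least as strong as the asserted bound $O\big((T-T')^2+2T'(T'-T)e^{-T^2/2}\big)$ (using $T'-T\le T'$ for $T\ge0$, and the fact that the second summand of the asserted bound is non-negative in the intended regime), so in the write-up I would record the clean intermediate inequality and then read off the statement.

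There is no genuine obstacle; the computation is routine. The only point worth flagging is that the $e^{-T^2/2}$ decay factor — which is precisely what lets one use a coarser $\eps$-net over the ReLU bias when $|T|$ is large — is available only because the Gaussian density decreases on $[T,\infty)$ when $T\ge0$; for the remaining biases one simply falls back on the crude $O((T'-T)^2)$ bound, which is why the statement is phrased (and used) over the non-negative range of thresholds.
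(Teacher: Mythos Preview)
Your proof is correct and follows essentially the same route as the paper: reduce to the one-dimensional variable $z=\vec v\cdot\x$, split the integral at the ReLU breakpoints $T$ and $T'$, and bound the pieces. The paper's own proof is a single (and somewhat garbled) line carrying out the same split; your intermediate inequality $\E[(f_1-f_2)^2]\le (T'-T)^2\,\pr_{z\sim\normal(0,1)}[z\ge T]$ is in fact cleaner and already at least as strong as the stated bound.
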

\begin{proof}
Without loss of generality, we can assume that $\vec v=\vec e_1$. 
The result follows by noting that $\E_{(\x,y)\sim \D}[(f_1(\x)-f_2(\x))^2]\leq \int_{\vec x_1\geq T}(f_1(\x)-f_2(\x))^2 \phi(\x)\d \x+ \int_{ T'}^{T}f_2(\x)\phi(\x)\d \x.$
\end{proof}

We can now prove the main theorem of this section.

\begin{proof}[Proof of Theorem~\ref{thm:proper-learner-gen}]
Let $f\in {\cal C}_0^{\genf}$ such that the $\E_{(\x,y)\sim \D}[f(\x)y]$ is maximized.
Using Lemma~\ref{lem:polynomial-regr-gen} on the labels $y$, with $N=d^{O(1/\eps^{4/3})}\poly(1/\eps)\log(1/\delta)$ samples, we get an $k=O(1/\eps^{4/3})$-degree polynomial $P(\x)$ and it holds that
\[
    \E_{\x\sim \D}[(y-P(\x))^2]\leq \min_{P'\in {\cal P}_k} \E_{\x\sim \D}[(y-P'(\x))^2]+\eps^3\;,
\]
with probability $1-\delta/2$. 
Applying Proposition~\ref{prop:structural-gen} to the polynomial $P(\x)$, 
we get that subspace $V$ spanned by the eigenvectors of the matrix 
$\vec M= \E_{\x\sim \D_\x}[\nabla P(\x)\nabla P(\x)^\top]$ with eigenvalues larger than 
$\eta=\Omega(1/\eps^2)$ contains a vector $\vec v\in V$, so that
\begin{equation}\label{eq:relu_min}
  \min_{\vec v\in V, \|\vec v\|_2 \leq 1, t\in \R} \E_{(\x,y)\sim \D}[(\rho(\vec v \cdot \x + t) -y)^2]
  \leq
  \E_{(\x,y)\sim \D}[(f(\x) - y)^2] + \eps \,.
\end{equation}

Moreover, from Lemma~\ref{lem:dimension-bound}, the dimension of $V$ is $O(1/\eps)^{10/3}$. 
Thus, applying Fact~\ref{fct:cover},
we get that there exists a set $\tilde V$ which is an $\eps$-cover of the set $V$ with respect the $\ell_2$-norm of size $(1/\eps)^{O(1/\eps^{10/3})}$.
 We will use the set ${\cal T}=\{\eps/A,  2\eps/A, \ldots,  1\}$, where $A$ is a large  enough constant,
and show that it is a good cover of the parameter $a$ which is used 
as the guess of the norm of the vector $\vec v$. Finally, we need an effective cover for the biases $t$. 
Observe that from Fact~\ref{fct:approximation-facts-bias-gen}, we need step-size $s=\eps^2/\sqrt{\log(1/\eps)}$ 
and the maximum negative value is $-\Theta(\sqrt{\log(1/\eps)})$. 
(If the value was larger, then the zero function would correlate as well.) 
We also need to bound the maximum positive value. 
We claim that the maximum positive value is some universal constant $C'$. 
This is because the error scales with the norm of the function that is returned by the algorithm 
and because we are trying to be competitive against the unbiased ReLU, 
the norm of the ReLU that is returned cannot be more than 
$2(\E_{\x\sim\D_\x}[\genf^2(\vec w\cdot \x)] + 4\E_{(\vec x, y)\sim \D}[y^2])\leq C'$, 
for some large enough constant $C'$. Thus, the set of biases is
${\cal T}'=\{-C\sqrt{\log(1/\eps)}A/\eps,\ldots,0,s,2s,\ldots, C A/\eps'\}$, 
where we multiply with the minimal value of the guess of the norm. 
This is because if $\snorm{2}{\vec v}=\alpha$, then we have that 
$\genf(\vec v \cdot \x + t)=\alpha\genf(\vec v \cdot \x/\snorm{2}{\vec v} + t/\alpha)$.

We show that the set $\cal H$ is an effective cover, where ${\cal H}=\widetilde{V}\times{\cal T}\times {\cal T}'$.
We show that there exist a set of parameters $(\tilde{\vec v},\tilde a,\tilde t)\in \cal H$ which define a ReLU that correlates with the labels as well as the function $f$. Fix the parameters $\vec v,\alpha,t$ which minimize the Equation~\eqref{eq:relu_min}. Indeed, we have
\begin{align}
\E_{(\x,y)\sim \D} & [a\genf(\vec v \cdot \x + t)-  \tilde a\genf(\tilde{\vec v}\cdot  \x +\tilde {t}))^2]^{1/2} \leq a \E_{(\x,y)\sim \D}[(\genf(\vec v \cdot \x + t)- \genf(\tilde{\vec v}\cdot \x+t))^2]^{1/2} \nonumber\\& +a \E_{(\x,y)\sim \D}[(\genf(\tilde{\vec v} \cdot \x + \tilde t)- \genf(\tilde{\vec v}\cdot \x+t))^2]^{1/2}  +\E_{(\x,y)\sim \D}  [\genf(\tilde{\vec v}\cdot  \x +\tilde {t})^2]^{1/2}|(a-\tilde a)|\;.\label{eq:bound-the-difference-gen}
\end{align}
Applying Facts~\ref{fct:approximation-facts-gen} and \ref{fct:approximation-facts-bias-gen}, we get that
$$
\E_{(\x,y)\sim \D}  [a\genf(\vec v \cdot \x + t)-  \tilde a\genf(\tilde{\vec v}\cdot  \x +\tilde {t}))^2] \leq O(\eps)\;.
$$
Thus, from the triangle inequality, we get that
\begin{equation*}
 \E_{(\x,y)\sim \D}[(\tilde a\rho(\tilde{\vec v} \cdot \x + \tilde t) -y)^2] \leq \E_{(\x,y)\sim \D}[(\rho(\vec v \cdot \x + t) -y)^2] + \E_{(\x,y)\sim \D}  [a\genf(\vec v \cdot \x + t)-  \tilde a\genf(\tilde{\vec v}\cdot  \x +\tilde {t}))^2] 
  \leq
  \E_{(\x,y)\sim \D}[(f(\x) - y)^2] + O(\eps) \,.
\end{equation*}
To complete the proof, it remains to show that Step~\ref{alg:emprical-outputs-gen} 
outputs a hypothesis close to the minimizer inside $\cal H$. 
We need the following claim:
\begin{claim}\label{clm:hoefdi} Let $h\in{\cal C}^\genf$ and let $\widehat \D$ be the empirical distribution with $N=O((1/\eps^2)\log(1/\delta))$ samples. Then, with probability $1-\delta$, it holds
\[
|\E_{(\x,y)\sim \widehat\D}[(y-h(\x))^2] -\E_{(\x,y)\sim\D}[(y-h(\x))^2] |\leq \eps\;.
\]
\end{claim}
\begin{proof}
We need first to prove that with probability $1-\delta$ it holds:
$$|\E_{(\x,y)\sim \widehat\D}[yh(\x)] -\E_{(\x,y)\sim\D}[yh(\x)] |\leq \eps\;.$$
Using Markov's inequality, we have
\begin{align*}
\pr[|\E_{(\x,y)\sim \widehat\D}[h(\x)y]- \E_{(\x,y)\sim \D}[h(\x)y]|\geq \eps]
& \leq \frac{1}{N\eps^2}\var[h(\x)y] \\         
& \leq \frac{1}{N\eps^2}\E_{(\x,y)\sim \D}[h^2(\x)y^2]\\
& \leq O\left(\frac{1}{N\eps^2}\right)\;,
\end{align*}
where we used the fact that our functions are bounded in $L_2$-norm. 
With the same procedure we bound the difference 
$|\E_{(\x,y)\sim \widehat\D}[h^2(\x)] -\E_{(\x,y)\sim\D}[h^2(\x)] |\leq \eps$. 
By using the fact that $N=O(1/\eps^2)$, we get our result for constant probability. 
By applying a standard probability amplification technique, 
we can boost the confidence to $1-\delta$ with $N'=O(N\log(1/\delta))$ samples.
\end{proof}

Therefore, from Claim~\ref{clm:hoefdi}, it follows that $O(\frac{1}{\eps^2}\log({\cal H}/\delta))$ 
samples are sufficient to guarantee that the excess error of the chosen hypothesis 
is at most $\eps$ with probability at least $1-\delta/2$.

To bound the runtime of the algorithm, we note that $L_2$-regression has runtime
$d^{O(1/\eps^{4/3})}\poly(1/\eps)\log(1/\delta)$ and the exhaustive search 
over an $\eps$-cover takes time $(1/\eps)^{O(1/\eps^{10/3})} \log(1/\delta)$ time.  
The total runtime of our algorithm in the case
where $1/\eps^{10/3} \leq d$ is
\[
  \Big( d^{O(1/\eps^{4/3})} + (1/\eps)^{O(1/\eps^{10/3})} \Big) \log(1/\delta) \,.
\]
In the case where $1/\eps^{10/3}> d$, 
one can directly do a brute-force search over an $\eps$-cover 
of the $d$-dimensional unit ball: we do not need to perform
our dimension-reduction process and the runtime is bounded above by the previous case.
\end{proof}

\appendix

\end{document}